\definecolor{LightCyan}{rgb}{0.5, 0.65, 1}
\newcommand{\rE}{\mathbb E}
\newcommand{\KL}{\mathrm{KL}}
\newcommand{\GL}{\mathrm{GL}}
\newcommand{\E}{\rE}
\algnewcommand{\LeftComment}[1]{\(\triangleright\) {\color{LightCyan}{#1}}}
\title{\huge Sharp Analysis for KL-Regularized Contextual Bandits and RLHF}
\author{Heyang Zhao\thanks{University of California, Los Angeles, CA 90095, USA; e-mail: {\tt hyzhao@cs.ucla.edu}}
~~~and~~~
Chenlu Ye\thanks{University of Illinois Urbana-Champaign, Urbana, IL 61801, USA; e-mail: {\tt chenluy3@illinois.edu}}
~~~and~~~
Quanquan Gu\thanks{University of California, Los Angeles, CA 90095, USA; e-mail: {\tt qgu@cs.ucla.edu}}
~~~and~~~
Tong Zhang\thanks{University of Illinois Urbana-Champaign, Urbana, IL 61801, USA; e-mail: {\tt tongzhang@tongzhang-ml.org}}
}
\date{}
\begin{document}

\maketitle

\begin{abstract}
    \emph{Reverse-Kullback-Leibler} (KL) regularization has emerged to be a predominant technique to enhance policy optimization in reinforcement learning (RL) and reinforcement learning from human feedback (RLHF), which forces the learned policy to stay close to a reference policy. While the effectiveness of KL-regularization has been empirically demonstrated in various practical scenarios, current theoretical analyses of KL-regularized RLHF still yield the same $\cO(1 / \epsilon^2)$ sample complexity as ones without KL-regularization. To understand the fundamental distinction between objectives with KL-regularization and ones without KL-regularization, we are the first to theoretically demonstrate the power of KL-regularization by providing a sharp analysis for KL-regularized contextual bandits and RLHF, revealing an $\cO(1 / \epsilon)$ sample complexity when $\epsilon$ is sufficiently small. We also prove matching lower bounds for both settings. More specifically, we study how the coverage of the reference policy affects the sample complexity of KL-regularized online contextual bandits and RLHF. We show that with sufficient coverage from the reference policy, a simple two-stage mixed sampling algorithm can achieve an $\cO(1 / \epsilon)$ sample complexity with only an additive dependence on the coverage coefficient, thus proving the benefits of online data even without explicit exploration. Our results provide a comprehensive understanding of the roles of KL-regularization and data coverage in online decision making, shedding light on the design of more efficient algorithms.
\end{abstract}

\section{Introduction}

The KL-regularized contextual bandit problem \citep{langford2007epoch,xiong2024iterative} has raised tremendous interest recently because of the significant development of the post-training stage in large language models (LLMs) and diffusion models from preference feedback \citep{christiano2017deep, ziegler2019fine, ouyang2022training, bai2022training,rafailov2024direct}, which is called \textit{Reinforcement Learning from Human Feedback} (RLHF). RLHF aims to optimize the policy by aligning it with human feedback, exhibiting impressive capabilities in applications such as Chatgpt \citep{achiam2023gpt}, Claude \citep{anthropic2023introducing}, Gemini \citep{team2023gemini}, and LLaMA-3 \citep{meta2024introducing}.

In RLHF, we treat the language model as a policy that takes a prompt $x$ and produces a response $a$ conditioned on $x$, optimizing the policy by aligning it with human feedback. There are mainly two kinds of feedback: absolute rating and preference comparison. For absolute rating, the collection typically involves human annotators to provide rating scores like 1 to 5 \citep{wang2024arithmetic,wang2024interpretable} for responses or hard 0-1 scores for math reasoning tasks since the reasoning tasks often have gold standard answers \citep{cobbe2021training,hendrycks2021measuring,xiong2024building}. Although discrete feedback is believed to be more intuitive for human users and easier to collect, it also poses more challenges for the RLHF algorithms to effectively leverage the feedback signals since the reward signals are not directly observed. In practice, the learning process typically involves (a) constructing a reward model based on the maximum likelihood estimation (MLE) of \emph{Bradley-Terry} (BT) \citep{bradley1952rankAO} model from the preference feedback; (b) applying RL algorithms like PPO \citep{schulman2017proximal} to train the language model so that it maximizes the reward signals with KL regularization \citep{ouyang2022training,bai2022training,touvron2023llama}. 

Since the human feedback data only covers a tiny fraction of possible interactions, if we optimize the model purely for the reward without constraints, it might learn behaviors that work well for the training feedback but fail catastrophically on slightly different inputs. For example, the policy may generate disproportionate bold words or emoji to please the learned reward \citep{zhang2024lists}. Hence, the KL-regularization between the learned policy and a reference policy (the pre-trained model after supervised fine-tuning) plays a fundamental role in RLHF to avoid overfitting. There is a line of theoretical RLHF work that modeled the problem as a reverse-KL regularized contextual bandit \citep{xiong2024iterative,ye2024theoretical,zhong2024dpo,wu2024self,xie2024exploratory}. However, they adopt the techniques from contextual bandits and neglect the power of reverse-KL-regularization, thus obtaining almost the same $\cO(1 / \epsilon^2)$ \footnote{For simplicity, we omit here the dependencies on quantities other than $\epsilon$.} sample complexity as learning objectives without KL-regularization. Therefore, the question of \begin{center}
    \emph{whether there exists a fundamental distinction between bandit learning objectives with and without KL-regularization} 
\end{center}
is still largely under-explored.

Additionally, an emerging line of offline RLHF literature highlights the coverage of the reference policy $\pi_0$. The coverage of $\pi_0$ refers to the ability of the model to generate diverse responses for a wide range of prompts. A model with good coverage can generalize well to unseen contexts and actions, which is essential for the learned reward function to also generalize well. In practice, this is evidenced by the fact that the simple best-of-$n$ sampling based on $\pi_0$ is competitive with the well-tuned PPO algorithm for general open-ended conversation tasks \citep{dong2023raft}, and the fact that the $\pi_0$ can solve a majority of the math problems with multiple responses \citep{shao2024deepseekmath,nakano2021webgpt}. While the coverage of $\pi_0$ is recognized as a key factor in offline RLHF, its impact on the sample complexity of online RLHF is still largely unknown. Thus, it is natural to ask: \begin{center}
    \emph{If online RLHF is theoretically more efficient than offline RLHF under strong coverage of $\pi_0$?}
\end{center}

In this paper, we answer the above questions by (1) providing a novel fine-grained decomposition for the suboptimiality of objective functions, which adapts to the strongly convex optimization landscape of the reverse-KL regularization and obtains a sharper sample complexity than the existing results, and (2) proposing an efficient 2-stage mixed sampling strategy for online RLHF with good coverage of $\pi_0$, which achieves sample complexity with only an additive dependence on the coverage coefficient. In contrast, the existing RLHF algorithms typically require a multiplicative dependence on the coverage coefficient.

\subsection{Our Contributions}

In this work, we make a first attempt to illustrate the statistical benefits of KL-regularization for contextual bandits and RLHF. Our main contributions are summarized as follows: 
\begin{itemize}[leftmargin = *]
\item In Section \ref{sec:kl-cb}, we study the contextual bandit problem with KL-regularization, which also serves as a mathematical formulation for RLHF with absolute-rating feedback. We provide a lower bound for the KL-regularized contextual bandit problem, which indicates that the sample complexity of the problem is $\Omega(\eta \log N_\cR(\epsilon) / \epsilon)$ when $\epsilon$ is sufficiently small, where $N_\cR(\epsilon)$ is the covering number of the reward function class and $\eta$ is the KL-regularization coefficient. 
\item We povide a novel analysis to upper bound the suboptimality gap of the KL-regularized objective in contextual bandits, and propose a simple two-stage mixed sampling strategy to achieve a sample complexity of $\cO(\max(\eta^2D^2, \eta /\epsilon) \log N_\cR(\epsilon / \delta))$ when the reward scale is a constant, where $D$ is the coverage coefficient of the reference policy $\pi_0$ and $\delta$ is the confidence parameter. To the best of our knowledge, this is the first work to provide an $\cO(1 / \epsilon)$ sample complexity for KL-regularized contextual bandits.
\item In Section \ref{sec:rlhf}, we extend our analysis to RLHF. We rigorously demonstrate that KL-regularization is essential for more efficient policy learning in RLHF with preference data. We further propose a two-stage mixed sampling strategy for online RLHF with good coverage of $\pi_0$, which achieves a sample complexity of $\cO(\max(\eta^2D^2, \eta /\epsilon) \log N_\cR(\epsilon / \delta))$ when the reward scale is a constant.
\end{itemize}

\subsection{Previous Understanding of KL Regularization in RL}

Our analysis of KL-regularized contextual bandits and RLHF also contributes to the theoretical understanding of the impact of KL-regularization in RL since contextual bandits can be viewed as a simplified version of Markov decision processes (MDPs). In RL, KL-regularization has been widely used to stabilize the learning process and prevent the policy from deviating too far from the reference policy. Here, we provide a brief overview of the existing understanding of KL-regularization in decision-making problems.
From the perspective of policy optimization, KL-regularization captures entropy regularization as a special case \footnote{We can regard the entropy regularization as a special case of KL-regularization by setting the reference policy as the uniform distribution.}, which is also an extensively used technique in RL literature \citep{sutton2018reinforcement, szepesvari2022algorithms}. 
There is a large body of literature that has explored the benefits of entropy regularization or KL-regularization in RL \citep{schulman2015trust,fox2016taming,schulman2017equivalence,haarnoja2017reinforcement,haarnoja2018soft,ahmed2019understanding}. Most related to our work, \citet{ahmed2019understanding} provided a comprehensive understanding of the role of entropy regularization in RL, showing that entropy regularization can improve the training efficiency and stability of the policy optimization process by changing the optimization landscape through experiments on continuous control tasks \citep{brockman2016openai}. Theoretically, \citet{neu2017unified} provided a unified view of entropy regularization as approximate variants of Mirror Descent or Dual Averaging, and left the statistical justification for using entropy regularization in RL as an open question. \citet{geist2019theory} provided a framework for analyzing the error propagation in regularized MDPs, which also focused on the proof of the convergence for the policy optimization methods with regularization and lacked a sharp sample complexity analysis.

\section{Additional Related Work}

\paragraph{Analyses of Policy Optimization with Regularization} While it is previously unknown whether regularization can improve the sample complexity of policy optimization without additional assumptions, there are some works that provided a sharp convergence rate in the presence of regularization \citep{mei2020global, shani2020adaptive, agarwal2020optimality, agarwal2021theory, lan2023policy}. However, these works either assumed the access of exact or unbiased policy gradient or required uniform value function approximation error, which are not the standard case in general sample-based RL setting. For instance, \citet{lan2023policy} provided a sharp convergence rate for policy optimization with KL-regularization, assuming the access to an unbiased value function estimator (Condition 4.1) and the bounded infinity norm on the error (Conditions 4.2, 4.3), which is standard in the literature of optimization. However, RL algorithms usually make biased estimation to balance exploration and exploitation. Instead of focusing on the influence of regularization on the optimization, our work aims to understand how the KL-regularization affects the exploration and exploitation trade-off in the bandit and RLHF settings through a novel analysis on the optimal sample complexity.

\paragraph{RLHF Algorithms} There are mainly three types of RLHF algorithms: offline, online and hyrbid. The most well-known offline algorithms are Slic \citep{zhao2023slic}, Direct Preference Optimization (DPO) \citep{rafailov2024direct}, Identity-PO (IPO) \citep{azar2024general} and (SPIN) \citep{chen2024self}. They aim to approximate the closed-form solution of the optimization problem on a fixed offline dataset. For the online algorithms, the most representative one is Proximal Policy Optimization (PPO) \citep{schulman2017proximal}. PPO has been used in the Chat-GPT \citep{OpenAI2023GPT4TR}, Gemini \citep{team2023gemini}, and Claude \citep{bai2022training}. However, the deep RL method PPO is known to be sample inefficient and unstable, making its success hard to reproduce for the open-source community. In response to this, there have been many efforts to propose alternative algorithms to the PPO algorithm. The Reward ranked fine-tuning (RAFT) (also known as rejection sampling finetuning) \citep{dong2023raft,touvron2023llama,gulcehre2023reinforced, gui2024bonbon} is a stable framework requiring minimal hyper-parameter tuning, which iteratively learns from the best-of-n policy \citep{nakano2021webgpt}. This framework proves to be particularly effective in the reasoning task such as \citep{gou2024tora, tong2024dart}. However, the RAFT-like algorithms only use the positive signal by imitating the best-of-n sampling. To further improve the efficiency, there is an emerging body of literature that proposes online direct preference optimization by extending DPO or IPO to an online iterative framework \citep{xiong2024iterative, guo2024direct, wu2024self,calandriello2024human, xiong2024building}. Finally, for the third type, the common point of hybrid and online algorithms is that they both require further interaction with the preference oracle and on-policy data collection. The difference is that hybrid algorithms start with a pre-collected dataset \citep{xiong2024iterative,song2024importance, touvron2023llama}, while the online algorithms learn from scratch.

\paragraph{RLHF Theory} The theoretical study of RLHF can date back to the dueling bandits \citep{yue2012k} and follow-up work on MDPs \citep{wang2023rlhf,zhu2023principled}. However, these works deviate from the practice because they do not realize the significance of KL-regularization and still choose the greedy policy that simply maximizes the reward. After this line of work, \citet{xiong2024iterative,ye2024theoretical,song2024importance} highlight the KL-regularization theoretically and incorporate the KL term into the learning objective. However, they circumvent the special advantages of KL-regularization and still follow the techniques in bandit analysis, thus obtaining loose bounds. In our paper, we establish a new lower bound and a sharper upper bound for the KL-regularized framework, thus validating the empirical advantage of KL-regularization. There are also some works extending KL-regularized RLHF from bandit problems to the Markov decision process (MDP) problems \citep{zhong2024dpo,xiong2024building}. We expect that our techniques can also be extended to the MDP setting, which we leave for future work.

\section{KL-Regularized Contextual Bandits} \label{sec:kl-cb}

In this section, we formally define the KL-regularized contextual bandit problem and provide a lower bound for the sample complexity of the problem. We then propose a novel two-stage mixed sampling strategy for online RLHF with good coverage of the reference policy $\pi_0$.

\subsection{Problem Setup}

In the contextual bandit setting, in each round $t$, the agent observes a context $x_t \in \cX$ generated from a distribution $d_0$ and chooses an action $a_t \in \cA$. The agent receives a stochastic reward $r_t \in \RR$ depending on the context $x_t$ and the action $a_t$. The goal is to maximize the expected cumulative reward over $T$ rounds.

The learner has access to a family of reward functions $R(\theta, x, a)$ parameterized by $\theta \in \Theta$, such that there exists $\theta_* \in \Theta$ satisfying $\rE[r_t | x_{1:t}, a_{1:t}] = R(\theta_*, x_t, a_t)$. WLOG, we assume that the reward feedback $r_t$ at all rounds is a non-negative real number bounded by $B$. We consider a KL-regularized objective as follows: 
\begin{align} \label{eq:obj}
    Q(\pi) = \rE_{x\sim d_0}\rE_{a \sim \pi(\cdot|x)} \bigg[R(\theta_*, x, a) - \eta^{-1} \log \frac{\pi(a|x)}{\pi_0(a|x)}\bigg],
\end{align} where $\pi_0$ is a known fixed policy, and $\eta > 0$ is a hyperparameter that controls the trade-off between maximizing rewards and staying close to the reference policy $\pi_0$.

\begin{remark}
In RLHF with the absolute-rating feedback, we can directly measure the quality of the responses by querying absolute reward value. For instance, in the NVIDIA Helpsteer project \citep{wang2023helpsteer, wang2024helpsteer2}, human labelers are required to provide absolute score in five attributes: helpfulness, correctness, coherence, complexity, and verbosity. %We also notice that recently this feedback framework is extended to other task such as video generation \citep{he2024mantisscore}.

The absolute-rating feedback is directly modeled as the stochastic reward in the contextual bandit setting \citep{wang2024arithmetic,xiong2024building}. Under the online RLHF setting, in each round $t$, the learner observes a prompt $x_t$ (modeled as the context) and chooses a response $a_t$ (modeled as the action). The learner then updates the model (policy) based on the absolute-rating feedback.
\end{remark}

\begin{remark}
    It is worth noting that entropy or Kullback-Leibler (KL) regularization is also widely used in contextual bandits \citep{berthet2017fast, wu2016conservative} and deep RL algorithms \citep{schulman2015trust,fox2016taming,schulman2017equivalence, haarnoja2017reinforcement,haarnoja2018soft}, where KL-divergence regularization is a popular technique for preventing drastic updates to the policy. Algorithms such as Trust Region Policy Optimization (TRPO) \citep{schulman2015trust} explicitly incorporate KL-regularization to limit the policy updates during optimization, ensuring that the updated policy does not deviate too much from the current policy. This constraint promotes stable and reliable learning, particularly in high-dimensional state-action spaces. Additionally, KL-regularization is central to Proximal Policy Optimization (PPO) \citep{schulman2017equivalence}, where a penalty term involving KL-divergence ensures updates remain within ``trust region". 
\end{remark}

\paragraph*{Reward function class} We consider a function class $\cR = \{R(\theta, \cdot, \cdot) |\theta \in \Theta \}$ and for the realizability, we assume that the ground truth reward function $R(\theta_*, x, a)$ is in the function class $\cR$. Then, we define the covering number of $\cR$ as follows.

\begin{definition}[$\epsilon$-cover and covering number] Given a function class $\cF$, for each $\epsilon > 0$, an $\epsilon$-cover of $\cF$ with respect to $\norm{\cdot}_\infty$, denoted by $\mathcal{C}(\cF, \epsilon)$, satisfies that for any $f \in \cF$, we can find $f' \in \mathcal{C}(\cF, \epsilon)$ such that $\norm{f - f'}_\infty \leq \epsilon$. The $\epsilon$-covering number, denoted as $N_\cF(\epsilon)$, is the smallest cardinality of such $\mathcal{C}(\cF, \epsilon)$. %\textcolor{red}{For notation simplicity, the covering is with respect to $\norm{\cdot}_\infty$ by default and we omit $\norm{\cdot}_\infty$.}
\end{definition}
\paragraph*{Planning oracle} Given a reward model, we can learn the policy by optimizing the KL-regularized objective in \eqref{eq:obj}. To simplify the analysis, we assume that there exists a planning oracle, which in empirical can be efficiently approximated by rejection sampling \citep{liu2023statistical}, Gibbs sampling \citep{xiong2024iterative}, and iterative preference learning with a known reward \citep{dong2024rlhf}.

\begin{definition}[Policy Improvement Oracle] \label{assu:oracle1}
For a reward function $R(\theta, \cdot, \cdot)\in\cR$ and a reference policy $\pi_0$, for any prompt $x\sim d_0$, we can compute:
$$ 
\begin{aligned}
    \pi_\theta^\eta(\cdot|x) := \argmax_{\pi(\cdot|x) \in \Delta(\cA)}\rE_{a \sim \pi(\cdot|x)} \Big[R(\theta, x,a) - \eta^{-1} \log \frac{\pi(a|x)}{\pi_0(a|x)}\Big]\propto \pi_0(\cdot|x) \cdot \exp\big(\eta R(\theta, x,\cdot)\big).
\end{aligned}
$$
\end{definition}
Hence, the comparator policy is the solution to the oracle given the true reward function $R(\theta_*,\cdot,\cdot)$: $\pi^*(\cdot|\cdot)\propto \pi_0(\cdot|\cdot) \cdot \exp(\eta R(\theta_*, \cdot,\cdot))$. The \textbf{goal} is to minimize the sub-optimality of our learned policy $\hat\pi$ with respect to $\pi^*$: $Q(\pi^*)-Q(\hat\pi)$.

\paragraph*{Coverage conditions} It is crucial to assume that our data-collector policy $\pi_0$ possesses good coverage, which can ensure that the learned reward function can generalize well to unseen contexts (prompts) and actions (responses), and thus can enable us to approximate the optimal policy. 

\begin{definition}[Data Coverage] \label{assumption:data coverage}
    Given a reference policy $\pi_0$, $D^2$ is the minimum positive real number satisfying $\forall \ {(x, a) \in \cX \times \cA, \ \pi(a|x) > 0}$, we have for any pair of $\theta, \theta' \in \Theta$, 
    \begin{align*}
        \frac{[R(\theta', x, a) - R(\theta, x, a)]^2}{\rE_{x'\sim d_0, a' \sim \pi_0(\cdot | x')} [(R(\theta', x', a') - R(\theta, x', a'))^2]} \le D^2.
    \end{align*}
\end{definition}
The coverage coefficient $D$ measures how well the in-sample error induced by distribution $d_0\times\pi_0$ can characterize the out-of-sample error. This concept is adapted from the F-design for online RL under general function approximation \citep{agarwalnon}, and follows the coverage coefficient for offline RL \citep{di2023pessimistic, ye2024corruption}, and the eluder dimension \citep{wang2020reinforcement,ye2023corruption,agarwal2023vo, zhao2023nearly} for online RL. Take the linear model as an example, where the reward function is embedded into a $d$-dimensional vector space: $R(\theta,x,a)=\theta^{\top}\phi(x,a)$ for $\theta \in \mathbb{R}^d$. Let the covariance matrix $ \Sigma = \mathbb{E}_{x\sim d_0}\mathbb{E}_{a\sim\pi_0(\cdot|x)}\phi(x,a)\phi(x,a)^{\top}. $ Then, the coverage condition turns into
$$
\sup_{\theta, \theta' \in \Theta} \frac{|(\theta'-\theta)^{\top}\phi(x,a)|^2}{(\theta'-\theta)^{\top}\Sigma(\theta'-\theta)} \le \|\phi(x, a)\|_{\Sigma^{-1}}^2 \le D^2,
$$
where the first inequality uses the Cauchy-Schwarz inequality. Hence, this quantity measures how much does the reference policy covers all directions of the feature space, and we can show that there exists $\pi_0$ such that $D^2 = O(d)$ through G-optimal design \citep{zhang2023mathematical,lattimore2020bandit}.

\subsection{Lower Bound}

In this subsection, we provide a lower bound for the KL-regularized contextual bandit problem.

\begin{theorem} \label{thm:lower-bound-bandits}
    For any $\epsilon \in (0, 1 / 256), \eta > 4$, and any algorithm $A$, there exists a KL-regularized contextual bandit problem with reward function class $\cR$ and $O(N_\cR(\epsilon))$ data coverage coefficient (as defined in Definition \ref{assumption:data coverage}) such that $A$ requires at least $\Omega\big(\min(\frac{\eta \log N_\cR(\epsilon)}{\epsilon}, \frac{ \log N_\cR(\epsilon)}{\epsilon^2})\big)$ rounds to achieve a suboptimality gap of $\epsilon$.
\end{theorem}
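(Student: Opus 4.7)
The plan is to build a packing construction tailored to KL-regularization and close the argument with a Fano-style multi-hypothesis testing bound. The key observation driving the sharper $1/\epsilon$ rate is the identity
\[
Q(\pi^*) - Q(\pi) \;=\; \eta^{-1}\,\rE_{x\sim d_0}\bigl[\KL(\pi(\cdot|x) \,\|\, \pi^*(\cdot|x))\bigr],
\]
which follows from the optimality form $\pi^*(\cdot|x)\propto \pi_0(\cdot|x)\exp(\eta R(\theta_*,x,\cdot))$ by a short algebraic manipulation. Because KL is locally quadratic, a reward perturbation of size $\Delta$ induces a KL perturbation of order $(\eta\Delta)^2$ and hence a suboptimality of only order $\eta\Delta^2$, rather than the unregularized $\Delta$. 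This quadratic-in-$\Delta$ loss, combined with the usual quadratic-in-$\Delta$ per-sample KL, is precisely what produces a $1/\epsilon$ (rather than $1/\epsilon^2$) scaling.

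For the hard instance, I take $k=\Theta(\log N_\cR(\epsilon))$, set $\cX=[k]$ with $d_0$ uniform, $\cA=\{-1,+1\}$, $\pi_0(\cdot|x)$ uniform, and index the hypothesis class by a Gilbert--Varshamov code $S\subset\{-1,+1\}^k$ of minimum Hamming distance $k/4$ and cardinality $|S|=2^{\Omega(k)}$; each codeword $\sigma\in S$ defines the reward $R_\sigma(x,a)=\Delta\cdot\mathbb{1}[a=\sigma_x]$. A direct check gives $N_\cR(\epsilon)=\Theta(|S|)$ (since $\|R_\sigma-R_{\sigma'}\|_\infty=\Delta>\epsilon$ in both regimes considered below), and the coverage coefficient of Definition \ref{assumption:data coverage} satisfies $D^2 \le k / \min_{\sigma\neq\sigma'} d_H(\sigma,\sigma') = O(1)$, comfortably within the $O(N_\cR(\epsilon))$ bound claimed.

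The proof then splits by regime. In the small-$\eta\Delta$ regime I set $\Delta=c_1\sqrt{\epsilon/\eta}$; using the closed form $\pi^*_\sigma(\sigma_x|x)=e^{\eta\Delta}/(1+e^{\eta\Delta})$, a direct computation gives
\[
Q_\sigma(\pi^*_\sigma)-Q_\sigma(\pi^*_{\sigma'}) \;=\; \Theta\!\bigl(\eta\Delta^2\cdot d_H(\sigma,\sigma')/k\bigr) \;=\; \Theta(\epsilon)
\]
for any two distinct codewords. The per-sample KL between reward distributions under $\sigma$ and $\sigma'$ is $O(\Delta^2)=O(\epsilon/\eta)$ uniformly over the algorithm's (possibly adaptive) sampling policy, since the pointwise reward gap is $O(\Delta)$ under the sub-Gaussian reward model, so Fano's inequality on the codeword identification problem yields $T\cdot O(\epsilon/\eta)\ge c\log|S|$, i.e., $T=\Omega(\eta\log N_\cR(\epsilon)/\epsilon)$. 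In the large-$\eta\Delta$ regime (equivalent to $\eta\epsilon\gtrsim 1$) the optimal policies are nearly deterministic; setting instead $\Delta=c_2\epsilon$ and repeating the same computation yields the complementary bound $T=\Omega(\log N_\cR(\epsilon)/\epsilon^2)$. Taking the minimum of the two regimes reproduces the theorem, and the assumptions $\epsilon<1/256$ and $\eta>4$ ensure the two regimes are cleanly separated with room for the absolute constants.

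The main obstacle is the reduction from ``$\epsilon$-optimal policy'' to ``codeword identification'': the suboptimality identity tells us the $\{\pi^*_\sigma\}_{\sigma\in S}$ are pairwise $\Omega(\epsilon)$-separated in the regularized objective value, so any $\epsilon/2$-optimal output policy uniquely decodes the true codeword. This uniform separation depends crucially on the GV minimum-distance guarantee, and carefully tracking constants through the quadratic-to-linear transition of $(2p-1)\log(p/(1-p))$ near $\eta\Delta\approx 1$ is the delicate bookkeeping step that determines when each regime dominates in the final $\min$.
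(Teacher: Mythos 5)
Your overall route is sound and would deliver the theorem, and it is genuinely different in mechanics from the paper's. The paper starts from the same key identity $Q(\pi^*)-Q(\pi)=\eta^{-1}\E_{x\sim d_0}[\KL(\pi(\cdot|x)\|\pi^*(\cdot|x))]$ and the same quadratic-to-linear behavior of $(2p-1)\log\frac{p}{1-p}$, but then runs a per-context two-point (Assouad-style) argument: the hypothesis class is the \emph{full} hypercube of maps $\cX\to\{0,1\}$ with Bernoulli rewards of means $1/2$ and $1/2+c$, Pinsker's inequality is applied to pairs of models differing at a single context $x_0$ to show the output policy mislabels a constant fraction of contexts, and the averaged KL suboptimality is then lower bounded by a Taylor expansion (small $\eta c$, giving $\Omega(\eta\log N_\cR(\epsilon)/\epsilon)$ with $c\asymp\sqrt{\epsilon/\eta}$) or by Jensen (large $\eta c$, giving $\Omega(\log N_\cR(\epsilon)/\epsilon^2)$ with $c\asymp\epsilon$). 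Your packing-plus-Fano argument reaches the same two rates with the same choices of $\Delta$; as a bonus, the Gilbert--Varshamov code makes the data-coverage coefficient $O(1)$ rather than $\Theta(\log N_\cR(\epsilon))$ as in the paper's hypercube instance, which is a strictly stronger form of the lower bound.

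Two points need repair before this is a proof. First, the reduction from $\epsilon$-optimality to codeword identification does not follow, as you state it, from the pairwise separation $Q_\sigma(\pi^*_\sigma)-Q_\sigma(\pi^*_{\sigma'})=\Omega(\epsilon)$: the algorithm's output need not be any $\pi^*_{\sigma'}$, and the loss $\eta^{-1}\E_x[\KL(\pi\|\pi^*_\sigma)]$ is not a metric, so separation of the optima does not by itself make the $\epsilon/2$-optimal sets disjoint. What you need is a two-hypothesis lemma such as: for $\sigma\neq\sigma'$, $\min_{\pi}\big(\E_x[\KL(\pi\|\pi^*_\sigma)]+\E_x[\KL(\pi\|\pi^*_{\sigma'})]\big)\ge \frac{d_H(\sigma,\sigma')}{k}\cdot 2\log\cosh(\eta\Delta/2)$, obtained per differing context by minimizing the sum of two KLs (the minimizer is the normalized geometric mean, and the value is controlled by the Bhattacharyya coefficient $1/\cosh(\eta\Delta/2)$). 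This quantity behaves like $\eta^2\Delta^2/4$ for small $\eta\Delta$ and like $\eta\Delta-2\log 2$ for large $\eta\Delta$, so with $c_1,c_2$ chosen large enough it rules out a policy being $\epsilon/2$-optimal under two distinct codewords in \emph{both} regimes; note that a Pinsker/TV triangle-inequality argument only works in the small regime, since in the large regime the resulting bound $\sqrt{\eta\epsilon}$ on total variation is vacuous. Second, the feedback model must be instantiated consistently with the bounded-reward setting: with reward means in $\{0,\Delta\}$ a Bernoulli observation model has infinite KL in one direction, so either shift the means to $1/2$ and $1/2+\Delta$ (as the paper does; this adds a constant to $Q$ for every policy and changes nothing) or state the sub-Gaussian noise distribution explicitly as part of the constructed instance so that the per-sample KL is indeed $O(\Delta^2)$.
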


\begin{remark}
    The lower bound in Theorem \ref{thm:lower-bound-bandits} indicates that the sample complexity of the KL-regularized contextual bandit problem is $\Omega(\eta \log N_\cR(\epsilon) / \epsilon)$ when $\epsilon$ is sufficiently small. In our proof, the KL-regularization term shifts the local landscape of the objective function, which prevents us to directly apply the standard bandit analysis, and thus requires a novel analysis to derive the new lower bound. This $\Omega(\eta \log N_\cR(\epsilon) / \epsilon)$ lower bound suggests that the KL-regularized contextual bandit problem potentially enjoy a lower sample complexity compared to the standard contextual bandit.
\end{remark}

\subsection{The Proposed Algorithm}

We present our algorithm in Algorithm \ref{alg:kl-bandits} for the KL-regularized contextual bandit problem, which serves as a theoretical model for online RLHF with absolute-rating feedback. The algorithm consists of two stages: 
\begin{itemize}[leftmargin = *]
    \item In the first stage, we sample $m$ contexts (prompts) and actions (answers) from the foundation model $\pi_0$ and observe the corresponding rewards (absolute ratings). These ratings can be regarded as noisy observations of the underlying reward function $R(\theta_*, x, a)$. In line \ref{alg1:line:regression1}, we compute an estimate of the reward function $\hat{\theta}_0$ using least squares regression based on the collected data. In line \ref{alg1:line:policy1}, we apply the planning oracle to obtain the policy $\pi_{\hat \theta_0}^\eta$ which maximizes the following KL-regularized estimated objective in Definition \ref{assu:oracle1} with reward function $R(\theta,\cdot,\cdot)=R(\hat \theta_0,\cdot,\cdot)$.
    
    \item In the second stage, we utilize the trained policy $\pi_{\hat \theta_0}^\eta$ to sample $n$ contexts (prompts) and actions (responses). With the intermediate policy $\pi_{\hat \theta_0}^\eta$, we can collect new data $\{(x_i, a_i, r_i)\}_{i=1}^n$ which is more aligned with the data distribution induced by the optimal policy $\pi_*$.
    In line \ref{alg1:line:regression2}, the algorithm combines data from both stages $\{(x_i, a_i, r_i)\}_{i=1}^n$ and $\{(x_i^0, a_i^0, r_i^0)\}_{i=1}^m$ to compute a refined least squares estimate $\hat\theta$ of the reward function, minimizing the sum of squared errors across both datasets. By aggregating the two datasets together, there is an overlap between the data to compute $\hat \theta$ and $\hat \theta_0$, so that the output policy $\pi_{\hat \theta}^\eta$ is well covered by the intermediate policy $\pi_{\hat\theta_0}^\eta$.
\end{itemize}

\begin{algorithm}[t!]

\caption{Two-stage Mixed-Policy Sampling (TMPS)}
\label{alg:kl-bandits}
\begin{algorithmic}[1]
\State \textbf{Input:} $\eta$, $\epsilon$, $\pi_0$, $\Theta$.
\Statex \LeftComment{Use policy $\pi_0$ to achieve sufficient data coverage} 
\For {$i$ = $1, \ldots, m$}
    \State Sample context $x_i^0 \sim d_0$ and action $a_i^0 \sim \pi_0(\cdot|x_i^0)$.
    \State Observe reward $r_i^0 = R(\theta_*, x_i^0, a_i^0) + \epsilon_i^0$, where $\epsilon_i^0$ is the random noise. 
\EndFor
\State Compute the least square estimate of the reward function based on $D_0=\{(x_i^0, a_i^0, r_i^0)\}_{i=1}^m$:
$$\hat{\theta}_0 \gets \argmin_{\theta \in \Theta}\sum_{i=1}^m (R(\theta, x_i^0, a_i^0) - r_i^0)^2.
$$ \label{alg1:line:regression1}
\State Apply the planning oracle to compute $\pi_{\hat{\theta}_0}^\eta(\cdot|\cdot) \propto \pi_0(\cdot|\cdot) \exp\bigl(\eta R(\hat \theta_0, \cdot, \cdot)\bigr)$. \label{alg1:line:policy1}
\Statex \LeftComment{Use policy $\pi_{\hat \theta_0}^\eta$ to sample new responses} \\
\For {$i$ = $1, \ldots, n$} 
    \State Sample context $x_i \sim d_0$ and action $a_i \sim \pi_{\hat\theta_0}^\eta(\cdot|x_i)$.
    \State Observe reward $r_i = R(\theta_*, x_i, a_i) + \epsilon_i$, where $\epsilon_i$ is the random noise.
\EndFor
\State Compute the least square estimate of the reward function using $\{(x_i, a_i, r_i)\}_{i=1}^n$ together with $D_0$: 
$$
\hat{\theta} \gets \argmin_{\theta \in \Theta}\sum_{i=1}^m (R(\theta, x_i^0, a_i^0) - r_i^0)^2 + \sum_{i=1}^n (R(\theta, x_i, a_i) - r_i)^2.
$$ \label{alg1:line:regression2}
\State \textbf{Output} $\pi_{\hat{\theta}}^\eta(\cdot|\cdot) \propto \pi_0(\cdot|\cdot) \exp\bigl(\eta R(\hat \theta, \cdot, \cdot)\bigr)$.
\end{algorithmic}
\end{algorithm}

\subsection{Theoretical Guarantees}
\paragraph*{Review of previous analysis} The previous analysis \cite[e.g.,][]{xiong2024iterative} basically follows the techniques of bandits and neglects the significance of KL-regularization. For simplicity, We use short-hand notation $R(\theta,x,\pi)=\E_{a\sim\pi(\cdot|x)}R(\theta,x,a)$ and denote $\KL(\pi(\cdot|x)\|\pi'(\cdot|x))$ by $\KL(\pi\|\pi')$ when there is no confusion. We make the estimation on a dataset $\{(x_i,a_i,r_i): x_i\sim d_0, a_i\sim\pi_0(\cdot|x_i)\}_{i=1}^n$: $\pi_{\hat\theta}^\eta=\argmax_{\pi\in\Pi}\E_{x\sim d_0}[R(\hat{\theta},x,\pi) - \eta^{-1}\KL(\pi\|\pi_0)]$, and has a small in-sample-error:
$
\E_{x\sim d_0}\E_{a\sim\pi_o(\cdot|x)}\big[(R(\hat{\theta},x,a) - R(\theta_*,x,a))^2\big] = O(1/n).
$
The sub-optimality is decomposed as:
\begin{align*}
    Q(\pi^*) - Q(\pi_{\hat\theta}^\eta) &= \E_{x\sim d_0}\big[R(\theta_*,x,\pi^*) - R(\hat\theta,x,\pi^*)\big] + \E_{x\sim d_0}\big[R(\hat\theta,x,\pi_{\hat\theta}^\eta) - R(\theta_*,x,\pi_{\hat\theta}^\eta)\big]\\
    &  +\E_{x\sim d_0}\big[R(\hat\theta,x,\pi^*) - \eta^{-1}\KL(\pi^*\|\pi_0)\big] - \E_{x\sim d_0}\big[R(\hat\theta,x,\pi_{\hat\theta}^\eta) - \eta^{-1}\KL(\pi_{\hat\theta}^\eta\|\pi_0)\big]\\
    &\le \E_{x\sim d_0}\big[R(\theta_*,x,\pi^*) - R(\hat\theta,x,\pi^*) + R(\hat\theta,x,\pi_{\hat\theta}^\eta) - R(\theta_*,x,\pi_{\hat\theta}^\eta)\big],
\end{align*}
where the inequality holds since $\pi_{\hat\theta}^\eta$ is the maximum. 

Then, the suboptimality can be further bounded by using the coverage condition (Definition \ref{assumption:Global-Policy coverage}) and concentration inequalities:
    \begin{align*}
    Q(\pi^*) - Q(\pi_{\hat\theta}^\eta) \le& 2C_\GL \E_{x\sim d_0}\E_{a\sim\pi_0(\cdot|x)}\big[|R(\theta_*,x,a) - R(\hat\theta,x,a)|\big]\\
    \le& 2C_\GL\sqrt{\E_{a\sim\pi_0(\cdot|x)}\big[(R(\theta_*,x,a) - R(\hat\theta,x,a))^2\big]} = O(C_\GL/\sqrt{n}).
\end{align*}
Hence, they need $\Theta(C_\GL^2/\epsilon^2)$ sample complexity to ensure $O(\epsilon)$ sub-optimility.

\paragraph{Sharper results and analysis}
\begin{theorem} \label{thm:kl-bandits}
    Suppose that Assumption \ref{assumption:data coverage} holds. For any $\delta \in (0, 1 /5)$, $\epsilon > 0$ and constant $c_{m,n}>0$, if we set $m = \Theta( \eta^2 D^2\cdot B^2\log(2N_\cR(\epsilon_c)/\delta))$ and $ n = \Theta(\eta / \epsilon \cdot B^2 \log (N_\cR(\epsilon_c)/\delta))$ and $\epsilon_c = \min\{\frac{\epsilon}{2(1+c_{m,n}^{-1})B}, \frac{1}{8(1+c_{m,n})B\eta^2D^2}\}$, then with probability at least $1 - 5 \delta$ the output policy of Algorithm \ref{alg:kl-bandits} $\pi_{\hat \theta}^\eta$ is $\cO(\epsilon)$ optimal. 
\end{theorem}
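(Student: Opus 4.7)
The plan is to replace the loose linear decomposition of the suboptimality gap with a quadratic surrogate that exploits the strong convexity implicit in the reverse-KL penalty. Writing $\hat Q(\pi) = \E_{x\sim d_0}[R(\hat\theta,x,\pi) - \eta^{-1}\KL(\pi\|\pi_0)]$, the optimality of $\pi_{\hat\theta}^\eta$ for $\hat Q$ gives the usual cancellation
\begin{align*}
Q(\pi^*) - Q(\pi_{\hat\theta}^\eta) \;\le\; \E_{x\sim d_0}\,\la \pi_{\hat\theta}^\eta(\cdot|x) - \pi^*(\cdot|x),\; \Delta(x,\cdot)\ra,\qquad \Delta := R(\hat\theta)-R(\theta_*).
\end{align*}
The new ingredient is the exponential-family path $\pi_t(\cdot|x)\propto \pi_0(\cdot|x)\exp\bigl(\eta\bigl(R(\theta_*,x,\cdot)+t\Delta(x,\cdot)\bigr)\bigr)$ interpolating $\pi^*$ at $t=0$ and $\pi_{\hat\theta}^\eta$ at $t=1$. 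Differentiating in $t$ gives $\partial_t \pi_t(a|x) = \eta\pi_t(a|x)\bigl(\Delta(x,a)-\E_{\pi_t}[\Delta]\bigr)$, and integrating yields the identity
\begin{align*}
\la \pi_{\hat\theta}^\eta - \pi^*,\,\Delta\ra \;=\; \eta\int_0^1 \mathrm{Var}_{a\sim\pi_t(\cdot|x)}\!\bigl(\Delta(x,a)\bigr)\,dt \;\le\; \eta\int_0^1 \E_{a\sim\pi_t(\cdot|x)}\!\bigl[\Delta(x,a)^2\bigr]\,dt,
\end{align*}
so the suboptimality is controlled by $\eta$ times a squared (rather than linear) reward error, whose $\cO(1/n)$ decay will produce the advertised $\cO(\eta/\epsilon)$ sample complexity.

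The purpose of Stage~1 is to ensure that every $\pi_t$ has density ratio $\cO(1)$ against $\pi_{\hat\theta_0}^\eta$, so that the in-sample $L^2$ error controlled under $\pi_{\hat\theta_0}^\eta$ bounds $\E_{\pi_t}[\Delta^2]$ uniformly in $t$. Standard least-squares concentration on an $\epsilon_c$-cover of $\cR$ followed by a union bound gives $\E_{x,a\sim d_0\otimes\pi_0}[(R(\hat\theta_0)-R(\theta_*))^2] = \cO(B^2\log(N_\cR(\epsilon_c)/\delta)/m)$, and Definition~\ref{assumption:data coverage} upgrades this to the pointwise bound $|R(\hat\theta_0,x,a)-R(\theta_*,x,a)| \le D\cdot \cO(B\sqrt{\log(N_\cR(\epsilon_c)/\delta)/m})$. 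The choice $m=\Theta(\eta^2D^2B^2\log(2N_\cR(\epsilon_c)/\delta))$ renders $\eta|R(\hat\theta_0)-R(\theta_*)| = \cO(1)$, and applying the same inequality to the pooled estimator $\hat\theta$ on the Stage-1 portion of the data gives $\eta|R(\hat\theta)-R(\theta_*)| = \cO(1)$. Since $\log(\pi_t/\pi_{\hat\theta_0}^\eta)$ equals $\eta$ times a convex combination of $R(\theta_*)-R(\hat\theta_0)$ and $R(\hat\theta)-R(\hat\theta_0)$ (up to a log-partition constant), both $\cO(1/\eta)$ uniformly, the ratio is $\cO(1)$ uniformly in $(x,a,t)$.

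Finally, the pooled least-squares concentration also yields $m\,\E_{\pi_0}[\Delta^2] + n\,\E_{\pi_{\hat\theta_0}^\eta}[\Delta^2] = \cO(B^2\log(N_\cR(\epsilon_c)/\delta))$, hence $\E_{\pi_{\hat\theta_0}^\eta}[\Delta^2] = \cO(\epsilon/\eta)$ by the stated choice of $n$. The uniform density-ratio bound carries this to $\E_{\pi_t}[\Delta^2] = \cO(\epsilon/\eta)$ for every $t\in[0,1]$, and substituting into the variance identity closes the argument: $Q(\pi^*) - Q(\pi_{\hat\theta}^\eta) \le \eta\cdot\cO(\epsilon/\eta) = \cO(\epsilon)$. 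The free constant $c_{m,n}$ enters through $\epsilon_c$ to apportion the $B\epsilon_c$ covering discretization error between the $\cO(\epsilon)$ target on the Stage-2 variance and the $\cO(1/(\eta^2D^2))$ target on the Stage-1 uniform bound. The main obstacle is the path-wise density-ratio control: without Stage~1, the interpolating $\pi_t$ could drift exponentially far from $\pi^*$ and destroy the quadratic surrogate, so the delicate point is to combine the two coverage-based uniform bounds cleanly into one ratio bound that holds at every intermediate $t$; the remaining book-keeping, including the union bound producing the $1-5\delta$ confidence over the five high-probability events, is routine.
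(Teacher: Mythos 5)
Your proposal is correct and follows essentially the same route as the paper: your exponential-family path identity $\la \pi_{\hat\theta}^\eta-\pi^*,\Delta\ra=\eta\int_0^1\mathrm{Var}_{\pi_t}(\Delta)\,dt$ is just the integral form of the paper's mean-value/Taylor decomposition in Lemma \ref{lm:decompose} (your $\pi_t$ is exactly the paper's $\pi_f^\eta$ with $f=(1-t)R(\theta_*)+tR(\hat\theta)$), and the remaining steps — Stage-1 coverage giving $\eta\|R(\hat\theta_0)-R(\theta_*)\|_\infty,\eta\|R(\hat\theta)-R(\theta_*)\|_\infty\le O(1)$ hence a bounded density ratio against $\pi_{\hat\theta_0}^\eta$, followed by the pooled on-policy bound $\E_{\pi_{\hat\theta_0}^\eta}[\Delta^2]=O(\epsilon/\eta)$ — mirror Lemmas \ref{lemma:coverage}, \ref{lemma:generalization}, \ref{lemma:confidence}, and \ref{lemma:on-policy_error}. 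No substantive gap; the differences (integral vs.\ mean value theorem, starting from the optimality cancellation rather than the exact $J$-identity) are cosmetic.
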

Theorem \ref{thm:kl-bandits} shows that the sample complexity of Algorithm \ref{alg:kl-bandits} is $\cO(\eta /\epsilon \log N_\cR(\epsilon / \delta))$ when the reward scale is a constant and $\epsilon$ is sufficiently small. The result indicates that the proposed two-stage mixed sampling strategy can achieve a suboptimality gap of $\epsilon$ with only an additive dependence on the coverage coefficient $D^2$. 

To illustrate the novel techniques to obtain the sharper bound, we highlight the crucial points in the sequel and defer the detailed proof to Appendix \ref{ssec:pf_thm:kl-bandits}.

\paragraph*{Part I: Decomposition of the suboptimality gap} The most challenging part is how to proceed with the suboptimality gap based on the strong convexity of the objective $Q$ with the KL-regularization. Given the closed-form solution of $\pi^*(a|x)=\pi_0(a|x)\exp(\eta R(\theta,x,a))/Z_{\theta_*}^\eta(x)$ and $\pi_{\hat{\theta}}^\eta(a|x)=\pi_0(a|x)\exp(\eta R(\hat\theta,x,a))/Z_{\hat\theta}^\eta(x)$, where $Z_\theta^\eta(x)=\sum_{a\in\cA}\pi_0(a|x)\cdot \exp(\eta R(\theta,x,a))$ denotes the normalization constant, we can write the suboptimality as
\begin{align*}
    &\EE_{\pi^*} \bigg[R(\theta_*,x,a) - \frac{1}{\eta}\log\frac{\pi^*(a|x)}{\pi_0(a|x)}\bigg] - \EE_{\pi_{\hat{\theta}}^\eta} \bigg[R^*(x, a) - \frac{1}{\eta}\log \frac{\pi_{\hat{\theta}}^\eta(a|x)}{\pi_0(a|x)}\bigg]\\
    &\qquad = \frac{1}{\eta}\EE_{\pi^*}\Big[\log\frac{\pi_0(a|x)\exp(\eta R(\theta_*,x,a))}{\pi^*(a|x)}\Big] - \frac{1}{\eta}\EE_{\pi_{\hat{\theta}}^\eta}\Big[\log\frac{\pi_0(a|x)\exp(\eta R(\theta_*,x,a))}{\pi_{\hat{\theta}}^\eta(a|x)}\Big]\\
    &\qquad = -\frac{1}{\eta}\big(J(x; \hat{\theta}) - J(x; \theta_*)\big),
\end{align*}
where we define $J(x; \theta)=\log Z_\theta^\eta(x) - \eta\E_{\pi_\theta^\eta}[R(\theta,x,a)-R(\theta_*,x,a)]$, and the last equation is deduced by taking the distribution of $\pi^*$ and $\pi_{\hat{\theta}}^\eta$ in the terms. 

Thus, the suboptimality is expressed by the gap between $\hat\theta$ and $\theta_*$ with respect to the function $J$. By taking the first-order Taylor expansion with respect to $\{\Delta(x,a)=R(\hat\theta,x,a)-R(\theta_*,x,a): a\in\cA\}$, we can prove the following lemma.
\begin{lemma}\label{lm:decompose}
For any estimator $\hat\theta\in\Theta$, and the policy $\pi_{\hat{\theta}}^\eta$ satisfying Definiton \ref{assu:oracle1}, we have
\begin{align}\label{eq:aa}
    Q(\pi^*)-Q(\pi_{\hat\theta}^\eta) &= \eta\E_{x\sim d_0}\Big[\sum_{a\in\cA}\pi_f^{\eta}(a|x)\Delta^2(x,a) - \sum_{a_1,a_2\in\cA}\pi_f^{\eta}(a_1|x)\pi_f^{\eta}(a_2|x)\Delta(x,a_1)\Delta(x,a_2)\Big]\notag\\
    &\le \eta\E_{x\sim d_0}\Big[\sum_{a\in\cA}\pi_f^{\eta}(a|x)\Delta^2(x,a)\Big],
\end{align}
where $f(\cdot,\cdot)=\gamma R(\hat\theta, \cdot, \cdot) + (1 - \gamma) R(\theta_*, \cdot, \cdot)~(\gamma\in(0,1))$ the inequality uses the fact that second term on the right-hand side of the equality is $(\sum_{a\in\cA}\pi_f^{\eta}(a|x)\Delta(x,a))^2 \ge 0$. 
\end{lemma}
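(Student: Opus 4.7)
The plan is to combine the identity $Q(\pi^*) - Q(\pi_{\hat\theta}^\eta) = -\eta^{-1}\E_{x\sim d_0}[J(x;\hat\theta) - J(x;\theta_*)]$ derived just before the lemma with a mean-value expansion of $J(x;\cdot)$ around $\theta_*$. Fix $x$ and view $J$ as a function of the reward vector $R \in \RR^{|\cA|}$ with entries $R(a) = R(\theta, x, a)$, and write $R_* := R(\theta_*,x,\cdot)$ and $\hat R := R(\hat\theta,x,\cdot) = R_* + \Delta(x,\cdot)$.

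The first step is to compute the directional derivative $\partial_v J(R)$ in an arbitrary direction $v \in \RR^{|\cA|}$. A direct calculation gives $\partial_v \log Z_R^\eta(x) = \eta\,\E_{\pi_R^\eta}[v]$, and, by the chain rule accounting for how $\pi_R^\eta$ varies with $R$, $\partial_v \E_{\pi_R^\eta}[R - R_*] = \eta\,\mathrm{Cov}_{\pi_R^\eta}(v, R - R_*) + \E_{\pi_R^\eta}[v]$. Subtracting, the linear-in-$v$ contributions cancel exactly and one obtains $\partial_v J(R) = -\eta^2\,\mathrm{Cov}_{\pi_R^\eta}(v, R - R_*)$. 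In particular this vanishes at $R = R_*$, so the true reward vector is a critical point of $J(x;\cdot)$ --- the statistical counterpart of the first-order optimality of $\pi^*$ for the KL-regularized objective $Q$.

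The second step is to apply the mean value theorem along the segment $R_* + t\Delta(x,\cdot)$, $t \in [0,1]$: there exists $\gamma \in (0,1)$ such that, setting $f = \gamma R(\hat\theta,\cdot,\cdot) + (1-\gamma) R(\theta_*,\cdot,\cdot)$, the increment evaluates to $J(x;\hat\theta) - J(x;\theta_*) = \partial_{\Delta(x,\cdot)} J(R_f) = -\eta^2\,\mathrm{Cov}_{\pi_f^\eta(\cdot|x)}\bigl(\Delta(x,\cdot),\,\gamma\Delta(x,\cdot)\bigr) = -\eta^2\gamma\,\mathrm{Var}_{\pi_f^\eta(\cdot|x)}(\Delta(x,\cdot))$. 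Substituting back into the suboptimality identity, bounding $\gamma \le 1$, and rewriting the variance as $\sum_a \pi_f^\eta(a|x)\Delta^2(x,a) - \bigl(\sum_a \pi_f^\eta(a|x)\Delta(x,a)\bigr)^2$ produces the displayed decomposition; dropping the non-negative squared-expectation term then gives the final inequality $\eta\,\E_{x\sim d_0}\bigl[\sum_a \pi_f^\eta(a|x)\Delta^2(x,a)\bigr]$.

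The main obstacle is the derivative computation in step one: one has to differentiate the normalized exponential $\pi_R^\eta(a|x)\propto \pi_0(a|x)\exp(\eta R(a))$ appearing inside both pieces of $J$ and verify that the two ``policy-motion'' contributions --- coming from the log-partition function and from the policy-weighted residual $\E_{\pi_R^\eta}[R-R_*]$ --- cancel exactly, leaving only a covariance that is quadratic in $\Delta$. This cancellation is the statistical manifestation of the strong concavity of the KL-regularized objective in $\pi$, and is precisely what lets us trade a first-order $\E_{\pi_0}[|\Delta|]$-type bound (which would yield the standard $\cO(1/\sqrt n)$ rate) for a second-order variance-type bound under the tilted measure $\pi_f^\eta$, paving the way for the sharper $\cO(1/\epsilon)$ sample complexity claimed in Theorem~\ref{thm:kl-bandits}.
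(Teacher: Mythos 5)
Your proposal is correct and follows essentially the same route as the paper: reduce the suboptimality to $-\eta^{-1}\E_{x\sim d_0}[J(x;\hat\theta)-J(x;\theta_*)]$, differentiate $J$ in the reward direction (your covariance formula $\partial_v J = -\eta^2\,\mathrm{Cov}_{\pi_R^\eta}(v,R-R_*)$ is exactly the paper's three-term derivative repackaged, including the cancellation of the linear terms), apply the mean value theorem along the segment to get $-\eta^2\gamma\,\mathrm{Var}_{\pi_f^\eta}(\Delta)$, and drop the nonnegative squared-mean term. Your explicit use of $\gamma\le 1$ is if anything slightly more careful than the lemma's stated equality (which silently absorbs the factor $\gamma$), and it yields the same final bound.
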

The proof of this lemma is provided in Appendix \ref{ssec:pf_thm:kl-bandits}.

\paragraph*{Part II: Utilizing the coverage condition}
In Algorithm \ref{alg:kl-bandits}, with the coverage condition (Definition \ref{assumption:data coverage}) and the concentration inequalities, if the sample size $m=\Theta(\eta^2D^2)$, we can prove that $\|R(\hat\theta,\cdot,\cdot)-R(\theta_*,\cdot,\cdot)\|_{\infty}\le\eta^{-1}$ and $\|R(\hat\theta_0,\cdot,\cdot)-R(\theta_*,\cdot,\cdot)\|_{\infty}\le\eta^{-1}$, which implies the whole-policy coverage condition for all contexts: $\|\pi_f^\eta(\cdot|\cdot)/\pi_{\hat\theta_0}^\eta(\cdot|\cdot)\|_{\infty}  \le e^4.$ Therefore, substituting it back into \eqref{eq:aa} leads to
\begin{align*}
     Q(\pi^*)-Q(\pi_{\hat\theta}^\eta)
\lesssim \eta\E_{x\sim d_0}\EE_{a\sim\pi_{\hat\theta_0}^\eta}\Big[\big(R(\hat\theta,x,a) - R(\theta_*,x,a)\big)^2\Big].
\end{align*}

Note that $\hat \theta$ in the RHS is computed using the data sampled from $\pi_{\hat \theta_0}^\eta$. By setting $n=\Theta(\eta/\epsilon)$, we obtain that $\pi_{\hat\theta}^\eta$ is $O(\epsilon)$ optimal.

\subsection{Result under Local-Coverage Condition}

In this subsection, we consider another coverage conditions appearing in previous work as described in Definition \ref{assumption:Local KL-ball Coverage}. 

\begin{definition}[Global-Policy Coverage] 
    \label{assumption:Global-Policy coverage}
    Given a reference policy $\pi_0$, $C_\GL$ is the minimum  positive real number satisfying that for any $\pi:\cX\rightarrow\cA$
        \begin{align*}
            \sup_{x\sim d_0,a\in\cA}\frac{\pi(a|x)}{\pi_0(a|x)} \le C_\GL.
        \end{align*}
\end{definition}
    
\begin{definition}[Local KL-ball Coverage, \citealt{song2024importance}]\label{assumption:Local KL-ball Coverage}
Given a reference policy $\pi_0$, for a positive constant $\rho_{\KL}<\infty$, and all policy satisfying that $\rE_{x\sim d_0}[\KL(\pi,\pi_0)] \le \rho_\KL$, we define
\begin{align*}
    \sup_{x\sim d_0,a\in\cA}\frac{\pi(a|x)}{\pi_0(a|x)} := C_{\rho_\KL}.
\end{align*}
\end{definition}

\begin{remark}[Relation between Local and Global Coverage Conditions]
    The local-coverage condition (Definition \ref{assumption:Local KL-ball Coverage}) is more precise because compared to the global conditions targeting all possible policies, it only constrains the coverage to a KL-ball. In \citet{song2024importance}, because of the specific form of the oracle (Definition \ref{assu:oracle1}), the considered policy class is 
    $
    \Pi=\{\pi(\cdot|\cdot)\propto\pi_0(\cdot|\cdot)\exp(\eta R(\theta,\cdot,\cdot)) :R(\theta,\cdot,\cdot)\in\cR \}
    $. Thus, they only need to assume that the condition hold for $\rho=2\eta B$, indicating that $C_{\rho_\KL}\le C_\GL$. On the other hand, the data coverage condition (Definition \ref{assumption:data coverage}) is measured on the level of reward functions instead of policies. In this sense, the data coverage condition and local-coverage condition do not encompass each other.
\end{remark}

\begin{corollary} \label{cor:upper-bound-bandit-local}
    Let $C_{\rho_\KL}$ be in Definition \ref{assumption:Local KL-ball Coverage} where $\rho_\KL = 2\eta B$. For any $\delta \in (0, 1 /6)$ and $\epsilon > 0$, if we set $n=c_{m,n}m = \Theta(C_{\rho_\KL} \eta / \epsilon \cdot B \log (N_\cR(\epsilon_c)/\delta))$ (where constant $c_{m,n}>0$, $\epsilon_c = \epsilon/(2(1+c_{m,n}^{-1})B)$) then with probability at least $1 - 6 \delta$ the output policy of Algorithm \ref{alg:kl-rlhf} $\pi_{\hat \theta}^\eta$ is $O(\epsilon)$ optimal. 
\end{corollary}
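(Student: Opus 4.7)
The plan is to reuse the proof skeleton of Theorem~\ref{thm:kl-bandits} but replace the two-step argument that first forces $\|R(\hat\theta_0,\cdot,\cdot)-R(\theta_*,\cdot,\cdot)\|_\infty\le\eta^{-1}$ (to produce a global $e^4$-coverage of $\pi_f^\eta$ by $\pi_{\hat\theta_0}^\eta$) with a single direct invocation of the local KL-ball coverage. This is exactly why the coverage constant now enters multiplicatively as $C_{\rho_\KL}$ instead of additively as $\eta^2D^2$, and why the first stage no longer has to attain any particular $L_\infty$ accuracy.

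First I would invoke Lemma~\ref{lm:decompose} verbatim to obtain
\begin{align*}
    Q(\pi^*)-Q(\pi_{\hat\theta}^\eta)\le \eta\,\E_{x\sim d_0}\,\E_{a\sim \pi_f^\eta(\cdot|x)}\bigl[\Delta^2(x,a)\bigr],
\end{align*}
where $\Delta(x,a)=R(\hat\theta,x,a)-R(\theta_*,x,a)$ and $f=\gamma R(\hat\theta,\cdot,\cdot)+(1-\gamma)R(\theta_*,\cdot,\cdot)$ for some $\gamma\in(0,1)$. Since both reward functions take values in $[0,B]$, so does $f$. Writing $\pi_f^\eta(a|x)=\pi_0(a|x)\exp(\eta f(x,a))/Z_f^\eta(x)$ with $Z_f^\eta(x)\ge 1$ gives
\begin{align*}
    \KL(\pi_f^\eta\,\|\,\pi_0)=\E_{a\sim\pi_f^\eta}\bigl[\eta f(x,a)-\log Z_f^\eta(x)\bigr]\le \eta B\le 2\eta B=\rho_\KL,
\end{align*}
so Definition~\ref{assumption:Local KL-ball Coverage} yields $\pi_f^\eta(a|x)/\pi_0(a|x)\le C_{\rho_\KL}$ pointwise and hence
\begin{align*}
    Q(\pi^*)-Q(\pi_{\hat\theta}^\eta)\le \eta\,C_{\rho_\KL}\,\E_{x\sim d_0}\,\E_{a\sim\pi_0(\cdot|x)}\bigl[\Delta^2(x,a)\bigr].
\end{align*}

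Next I would bound $\E_{d_0\times\pi_0}[\Delta^2]$ by standard least-squares concentration on the pooled dataset. Realizability together with a union bound over an $\epsilon_c$-cover of $\cR$ gives, with failure probability at most $\delta$,
\begin{align*}
    \sum_{i=1}^m \Delta^2(x_i^0,a_i^0)+\sum_{i=1}^n \Delta^2(x_i,a_i)\le cB^2\log(N_\cR(\epsilon_c)/\delta)+c'\,\epsilon_c(m+n)B.
\end{align*}
A Bernstein/Freedman martingale argument converts the first sum into $m\cdot \E_{d_0\times\pi_0}[\Delta^2]$ (at the price of an additional $\delta$ of failure probability); dropping the nonnegative second sum yields $\E_{d_0\times\pi_0}[\Delta^2]\le \tilde cB^2\log(N_\cR(\epsilon_c)/\delta)/m+\tilde c'\epsilon_c B$. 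Substituting $m=\Theta(n)=\Theta(C_{\rho_\KL}\eta B\log(N_\cR(\epsilon_c)/\delta)/\epsilon)$ and $\epsilon_c=\epsilon/(2(1+c_{m,n}^{-1})B)$ makes the bound $O(\epsilon/(\eta C_{\rho_\KL}))$, delivering the claimed $O(\epsilon)$ suboptimality after multiplication by $\eta C_{\rho_\KL}$.

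The main obstacle is that $\pi_f^\eta$ is a random object, depending on the data-dependent estimator $\hat\theta$ and on the interpolation parameter $\gamma$, whereas Definition~\ref{assumption:Local KL-ball Coverage} is phrased for a fixed policy. This turns out to be benign: the KL bound $\KL(\pi_f^\eta\|\pi_0)\le 2\eta B$ uses only reward boundedness and therefore holds uniformly over all $\theta\in\Theta$ and all $\gamma\in[0,1]$, so the coverage inequality transfers pointwise to every realization of the algorithm. Tracking the failure events (the two least-squares regressions producing $\hat\theta_0$ and $\hat\theta$, the martingale conversion of the empirical $\pi_0$-error to its population counterpart, and three union bounds over the $\epsilon_c$-cover) accounts for the total $6\delta$ confidence budget; the remainder is constant bookkeeping analogous to the proof of Theorem~\ref{thm:kl-bandits}.
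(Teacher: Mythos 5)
Your proposal follows essentially the same route as the paper's own proof: apply Lemma \ref{lm:decompose}, observe that $\pi_f^\eta$ lies in the KL-ball of radius $2\eta B$ so Definition \ref{assumption:Local KL-ball Coverage} transfers the on-policy squared error to $\pi_0$ with a multiplicative $C_{\rho_\KL}$, and then invoke the least-squares/covering concentration for the pooled data — your variant generalizes under $\pi_0$ via the $m$ first-stage samples rather than citing the on-policy lemma with the $1/n$ rate, which is the same order since $n=c_{m,n}m$. One caveat, shared verbatim with the paper's own write-up and thus not a gap in your approach: with $\epsilon_c=\epsilon/(2(1+c_{m,n}^{-1})B)$ the cover-approximation term in $\E_{d_0\times\pi_0}[\Delta^2]$ is only $O(\epsilon)$, so after multiplying by $\eta C_{\rho_\KL}$ it contributes $O(\eta C_{\rho_\KL}\epsilon)$ rather than the $O(\epsilon/(\eta C_{\rho_\KL}))$ you assert; this is repaired by shrinking $\epsilon_c$ by an extra factor of $\eta C_{\rho_\KL}$, which costs only logarithmic terms through $N_\cR(\epsilon_c)$.
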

In comparison with the sample complexity $\Theta(\eta^2D^2 + \eta/\epsilon)$ under data coverage in Theorem \ref{thm:kl-bandits}, the order $\Theta(C_{\rho_\KL} \eta / \epsilon)$ depends on a local coverage coefficient $C_{\rho_\KL}$, but has a multiplicative dependence on the coverage coefficient instead of additive dependence. whether the additive dependence can be achieved under the local-coverage condition is left as future work. Moreover, we compare this result with Theorem 4.2 in \citet{song2024importance} and suppose that the in-sample-error $\epsilon_{\mathrm{reward}}$ of \citet{song2024importance} is $O(1/n)$, their sample complexity is $\Theta(C_{\rho_\KL}^2/\epsilon^2)$, which is looser than ours $\Theta(C_{\rho_\KL} \eta / \epsilon)$ when $\eta = o(C_{\rho_\KL}/\epsilon)$.

\section{Reinforcement Learning from Preference Feedback} \label{sec:rlhf}

In this section, we consider the problem of aligning the language model with preference feedback. 

\subsection{Problem Setup}

In each round, we can sample a pair of actions (responses) $a_1, a_2$ and query a preference oracle to get the preference label $y \in \{0, 1\}$, where $y = 1$ means that the user prefers $a_1$ over $a_2$. Specifically, when receiving a prompt $x\in\cX$, and two actions (responses) $a^1,a^2\in\cA$ from some LLM policy $\pi(\cdot|x)$,  a preference oracle will give feedback $y$ defined as follows:      
\begin{definition}[Preference Oracle] \label{def:preference-oracle}
    A Preference Oracle is a function $\PP : \cX \times \cA \times \cA \to \{0, 1\}$. Given a context $x \in \cX$ and two actions $a_1, a_2 \in \cA$, the oracle can be queried to obtain a preference signal $y \sim Bernoulli(\PP(x, a_1, a_2))$, 
    where $y = 1$ indicates that $a_1$ is preferred to $a_2$ in the context $x$, and $y = 0$ indicates the opposite.
\end{definition}
To learn the preference, we follow \citet{ouyang2022training,zhu2023principled,rafailov2024direct,liu2023statistical,xiong2024iterative} and assume that the preference oracle is measured by the difference of ground-truth reward functions $R(\theta_*,x,a)$, which is named the Bradley-Terry (BT) model \citep{bradley1952rank}.
    
\begin{definition}[Bradley-Terry Model] \label{def:bt-model}
    Given a context $x \in \cX$ and two actions $a_1, a_2 \in \cA$, the probability of $a_1$ being preferred to $a_2$ is modeled as
    \begin{align}
        \PP(x, a_1, a_2) = \frac{\exp(R(\theta_*, x, a_1))}{\exp(R(\theta_*, x, a_1)) + \exp(R(\theta_*, x, a_2))} = \sigma(R(\theta_*, x, a_1) - R(\theta_*, x, a_2)),
    \end{align}
    where $\sigma(u) = (1 + e^{-u})^{-1}$ is the sigmoid function.
\end{definition}

The RLHF training always follows the fine-tuning process, which yields a reference policy $\pi_0$. When performing RLHF on specific tasks, to avoid overfitting, we impose KL-regularization to the learned reward model when optimizing the policy. Hence, our objective function is also \eqref{eq:obj}.

To learn the reward function, we introduce the following assumption to ensure the existence of an MLE estimation oracle that can globally maximize the likelihood of the BT model over all possible reward functions.

\begin{definition}[MLE estimation oracle] \label{assumption:mle-oracle}
    Given a set of context-action pairs $\{(x_i, a_i^1, a_i^2, y_i)\}_{i = 1}^n$ generated from the BT model, can output the parameter $\hat \theta$ such that
    \begin{align}
        \hat \theta = \argmax_{\theta \in \Theta} \sum_{i = 1}^n \underbrace{y_i \cdot \log \sigma(R(\theta, x_i, a_i^1) - R(\theta, x_i, a_i^2)) + (1 - y_i) \cdot \log \sigma(R(\theta, x_i, a_i^2) - R(\theta, x_i, a_i^1))}_{\cL(\theta| x_i, a_i^1, a_i^2, y_i)}. \notag
    \end{align} 
\end{definition}
    
Following the previous analysis for RLHF \citep{xiong2024iterative}, we assume the existence of a policy improvement oracle (Definition \ref{assu:oracle1}) that can compute the optimal policy $\pi_{\hat \theta}^\eta$ based on the reward function $\hat \theta$.

\begin{remark}
    We learn the reward function since we can always control the reward (like clipping and normalization) to ensure that the reward function is always bounded by $B$. The bounded assumption does not apply for direct preference learning like DPO \citep{rafailov2024direct} since there is no intrinsic policy function class encompassing the soundness \citep{song2024importance}, thus increasing the cases of overfitting.
\end{remark}

In RLHF setting, we cannot directly observe or estimate absolute reward values. Consequently, the most intuitive estimation approach is to focus on relative rewards: for any context $x$ and actions $a^1, a^2$, our estimated difference $f(s,a^1) - f(s,a^2)$ should closely approximate the true reward difference $r(s,a^1) - r(s,a^2)$. Therefore, we extend the data coverage condition to the RLHF setting as follows.

\begin{definition}[Data Coverage] \label{assumption:data coverage:2}
    Given a reference policy $\pi_0$, $D^2$ is the minimum  positive real number satisfying $\forall \ {(x, a) \in \cX \times \cA, \ \pi(a|x) > 0}$, we have for any pair of $\theta, \theta' \in \Theta$, there exists $b: \cX \to [-B, B]$ such that
        \begin{align*}
            \frac{|R(\theta', x, a) - R(\theta, x, a) - b(x)|^2}{\rE_{x'\sim d_0}\Var_{a' \sim \pi_0(\cdot | x')} [R(\theta', x', a') - R(\theta, x', a')]} \le D^2.
        \end{align*}
\end{definition}
We also use the linear model to describe this condition, where the reward function $R(\theta,x,a)=\theta^{\top}\phi(x,a)$ for $\theta \in \mathbb{R}^d$. Let the covariance matrix $$ \widetilde\Sigma = \mathbb{E}_{x\sim d_0}\mathbb{E}_{a\sim\pi_0(\cdot | x)}(\phi(x,a)-\mathbb{E}_{a'\sim\pi_0(\cdot|x)}\phi(x,a')) (\phi(x,a)-\mathbb{E}_{a'\sim\pi_0(\cdot | x)}\phi(x,a'))^{\top}. $$ the coverage condition means that for any pair of $\theta, \theta'$, there exist $b(x) = \theta^\top \mathbb{E}_{a' \sim \pi_0(\cdot|x)} \phi(x, a')$ such that
$$
\sup_{\theta, \theta' \in \Theta} \frac{|(\theta'-\theta)^{\top}\phi(x,a) - b(x)|^2}{(\theta'-\theta)^{\top}\Sigma(\theta'-\theta)} \le \|\phi(x, a)-\mathbb{E}_{a' \sim \pi_0(\cdot|x)} \phi(x, a')\|_{\Sigma^{-1}}^2,
$$
where the inequality uses the Cauchy-Schwarz inequality. This term can be bounded by $D^2 = O(d)$ through G-optimal design \citep{zhang2023mathematical,lattimore2020bandit}.

\subsection{Theoretical Guarantees}

\paragraph{Lower bound}

We provide a lower bound for the RLHF problem with preference feedback. The lower bound is derived by constructing a hard instance where the reward function is difficult to estimate from the preference feedback.

\begin{theorem} \label{thm:lower-bound-rlhf}
    For any $\epsilon \in (0, 1/256), \eta > 4$, and any algorithm $A$, there exists a KL-regularized preference learning problem with $O(N_{\cR}(\epsilon))$ coverage coefficient and reward function class $\cR$ such that $A$ requires at least $\Omega\big(\min(\frac{\eta \log N_\cR(\epsilon)}{\epsilon}, \frac{ \log N_\cR(\epsilon)}{\epsilon^2})\big)$ samples to achieve a suboptimality gap of $\epsilon$.
\end{theorem}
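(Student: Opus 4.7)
The plan is to mirror the proof of Theorem \ref{thm:lower-bound-bandits} with modifications that account for the binary preference observations under the Bradley-Terry model. I would construct a hard instance family of $M = N_{\cR}(\epsilon)$ reward functions that are simultaneously hard to distinguish under preference feedback and induce distinct KL-regularized optimal policies, then apply a Fano-type information-theoretic argument to lower bound the sample complexity.

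Specifically, I would partition the context space $\cX$ into $M$ pieces of equal mass under $d_0$ and associate each piece with its own pair of actions. I would then define candidate reward functions $\{R(\theta_j, \cdot, \cdot)\}_{j = 1}^M$ such that $R(\theta_j, x, \cdot)$ has a gap of magnitude $\Delta$ on the $j$-th piece's action pair (with zero reward elsewhere) and the sign of the gap distinguishes different hypotheses. Since each candidate varies nontrivially only on its own piece and $\pi_0$ covers both actions in every pair, the data coverage from Definition \ref{assumption:data coverage:2} is $O(M) = O(N_\cR(\epsilon))$. By a direct analog of Lemma \ref{lm:decompose} (whose derivation never used that $R$ was observed directly), the per-piece KL-regularized suboptimality of choosing the wrong sign is $\Omega(\eta \Delta^2)$ whenever $\eta \Delta = O(1)$, so a total suboptimality of $\epsilon$ forces the algorithm to recover the signs on all but a constant fraction of the $M$ pieces. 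I would take $\Delta^2 = \Theta(\epsilon/\eta)$ in the KL-dominated regime ($\eta \le 1/\epsilon$) and $\Delta = \Theta(\epsilon)$ in the saturated regime ($\eta > 1/\epsilon$); the two constructions jointly give the min of the two rates appearing in the theorem.

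For the information-theoretic step, I would compute the per-sample KL-divergence between two preference distributions whose underlying reward functions differ on a single action pair. A second-order Taylor expansion of $\KL(\mathrm{Ber}(\sigma(u_1)) \,\|\, \mathrm{Ber}(\sigma(u_2)))$ around $u_1 = u_2 = 0$ gives $\Theta((u_1 - u_2)^2)$, so each preference sample carries only $\Theta(\Delta^2)$ bits of information about the sign on the piece it queries. Combining Fano's inequality with the Hamming-recovery threshold established above yields a sample lower bound of $\Omega(\log N_\cR(\epsilon) / \Delta^2)$, which becomes $\Omega(\eta \log N_\cR(\epsilon) / \epsilon)$ in the KL-dominated regime and $\Omega(\log N_\cR(\epsilon) / \epsilon^2)$ in the saturated regime, matching the stated bound.

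The main obstacle will be making the KL-regularized suboptimality calculation precise: in the small-$\Delta$ regime the optimal policy under $\theta_j$ is only slightly perturbed from $\pi_0$, so the hardness of choosing the wrong sign is a second-order effect that relies on exactly the quadratic-in-reward-error structure of Lemma \ref{lm:decompose} rather than the linear gap arguments used in classical contextual bandit lower bounds. A secondary technical point is that the Bradley-Terry model ties the observation's Fisher information to the absolute reward scale, so one must carefully normalize the hard instance (for example, fixing one action in each pair to have zero reward) so that the coverage condition, the suboptimality lower bound, and the per-sample KL upper bound all yield the intended dependence on $\epsilon$ and $\eta$ simultaneously.
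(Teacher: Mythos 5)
Your proposal is correct and follows essentially the same route as the paper's proof: the same hard instance (a per-context sign of a reward gap $c=\Theta(\Delta)$ on a two-action pair under Bradley--Terry feedback), the same $\Theta(\Delta^2)$ per-sample KL bound for the induced Bernoulli observations, the same second-order (quadratic-in-$\Delta$, scaled by $\eta$) suboptimality calculation, and the same two-regime choice $\Delta^2=\Theta(\epsilon/\eta)$ versus $\Delta=\Theta(\epsilon)$ yielding the min of the two rates; the paper's Pinsker-plus-uniform-prior argument is just the Assouad-style form of your Fano/Hamming step. One small note: since the algorithm's output policy need not be a softmax policy, the paper obtains the quadratic suboptimality lower bound not from Lemma \ref{lm:decompose} (which is an upper bound for policies of the form $\pi_{\hat\theta}^\eta$) but by writing the gap as $\eta^{-1}\E_{\pi_{\mathrm{out}}}[\log(\pi_{\mathrm{out}}/\pi^*)]$ and comparing the misclassifying output policy to the uniform policy before Taylor expanding, which is exactly how you would resolve the obstacle you flagged.
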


We present a two-stage mixed-policy sampling algorithm for RLHF, which can be seen as an extention of Algorithm \ref{alg:kl-bandits}. Due to space limit, we defer it to Algorithm \ref{alg:kl-rlhf} in Appendix \ref{s:Algorithm for Preference Feedback}.

\paragraph{Upper bound} We provide the theoretical guarantees for Algorithm \ref{alg:kl-rlhf} in the following theorem.

\begin{theorem} \label{thm:upper-bound-rlhf}
    Suppose that Assumption \ref{assumption:data coverage} holds. For any $\delta \in (0, 1 /6)$ and $\epsilon > 0$, if we set $$m = \Theta(\eta^2 D^2 \cdot e^B \log (N_\cR(\epsilon_c)/\delta))\ \text{and}\ n = \Theta( \eta / \epsilon \cdot e^B \log (N_\cR(\epsilon_c)/\delta))$$ where $\epsilon_c = \min\{\frac{\epsilon}{2(1+c_{m,n}^{-1})e^B}, \frac{1}{8(1+c_{m,n})e^B\eta^2D^2}\}$ then with probability at least $1 - 6 \delta$ the output policy of Algorithm \ref{alg:kl-rlhf} $\pi_{\hat \theta}^\eta$ is $O(\epsilon)$ optimal. 
\end{theorem}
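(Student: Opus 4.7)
My plan is to mirror the proof of Theorem \ref{thm:kl-bandits}, replacing least-squares concentration with maximum likelihood concentration for the Bradley--Terry model and using a shift-invariant version of the data-coverage condition. The decomposition in Lemma \ref{lm:decompose} applies verbatim---it uses only the closed form of $\pi^*$ and $\pi_{\hat\theta}^\eta$, not the estimator---and its right-hand side can be rewritten as
\begin{align*}
Q(\pi^*)-Q(\pi_{\hat\theta}^\eta) \;=\; \eta\,\E_{x\sim d_0}\,\mathrm{Var}_{a\sim \pi_f^\eta}\!\bigl[\Delta(x,a)\bigr],
\end{align*}
with $\Delta(x,a)=R(\hat\theta,x,a)-R(\theta_*,x,a)$ and $f=\gamma R(\hat\theta,\cdot,\cdot)+(1-\gamma)R(\theta_*,\cdot,\cdot)$. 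The variance form is \emph{shift-invariant}, which is the crucial structural match with preference learning: the BT likelihood depends on $R$ only through differences, so MLE can never identify an additive context-only shift, but the identity above shows it never needs to.

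For stage one I would run a standard BT MLE concentration argument over an $\epsilon_c$-cover of $\cR$. With $m$ preference pairs drawn from $d_0$ and both actions from $\pi_0(\cdot|x)$,
\begin{align*}
\E_{x\sim d_0}\E_{a_1,a_2\sim\pi_0(\cdot|x)}\!\bigl[(\sigma(\Delta R_{\hat\theta_0})-\sigma(\Delta R_{\theta_*}))^2\bigr] \;\lesssim\; \frac{\log(N_\cR(\epsilon_c)/\delta)}{m},
\end{align*}
where $\Delta R_\theta=R(\theta,x,a_1)-R(\theta,x,a_2)\in[-2B,2B]$. Since $|\sigma'(u)|\gtrsim e^{-B}$ on this range, this upgrades, at a cost of $e^B$, to a squared $L^2$ bound on reward differences, which is exactly $2\,\E_x\mathrm{Var}_{a\sim\pi_0(\cdot|x)}[R(\hat\theta_0,x,a)-R(\theta_*,x,a)]$. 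Applying Definition \ref{assumption:data coverage:2} produces a shift $b_0:\cX\to[-B,B]$ with $\|R(\hat\theta_0,\cdot,\cdot)-R(\theta_*,\cdot,\cdot)-b_0\|_\infty^2 \lesssim D^2 e^B\log(N_\cR(\epsilon_c)/\delta)/m$. Choosing $m=\Theta(\eta^2 D^2 e^B\log(N_\cR(\epsilon_c)/\delta))$ makes this at most $\eta^{-1}$. Since the softmax policy $\pi_\theta^\eta\propto\pi_0\exp(\eta R(\theta,\cdot,\cdot))$ is invariant under shifting $R$ by any function of $x$ alone, the shift $b_0$ cancels in the normalisation, and the same short computation as in the bandit proof yields the pointwise coverage bound $\pi_f^\eta(a|x)/\pi_{\hat\theta_0}^\eta(a|x)\le e^4$.

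Stage two pools the $m+n$ pairs into a single MLE. Running the same concentration argument on the empirical mixture and restricting to the stage-two component (which carries weight $n/(m+n)$) gives
\begin{align*}
\E_{x\sim d_0}\,\mathrm{Var}_{a\sim\pi_{\hat\theta_0}^\eta(\cdot|x)}\!\bigl[\Delta(x,a)\bigr] \;\lesssim\; \frac{e^B\log(N_\cR(\epsilon_c)/\delta)}{n}.
\end{align*}
For any shift $b$, the coverage bound yields $\E_{a\sim\pi_f^\eta}[(\Delta-b)^2]\le e^4\,\E_{a\sim\pi_{\hat\theta_0}^\eta}[(\Delta-b)^2]$; taking the infimum over $b$ on each side shows $\E_x\mathrm{Var}_{\pi_f^\eta}[\Delta]\le e^4\E_x\mathrm{Var}_{\pi_{\hat\theta_0}^\eta}[\Delta]$, and combining with the decomposition gives $Q(\pi^*)-Q(\pi_{\hat\theta}^\eta)\lesssim \eta e^B\log(N_\cR(\epsilon_c)/\delta)/n$, so $n=\Theta(\eta e^B\log(N_\cR(\epsilon_c)/\delta)/\epsilon)$ suffices. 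A union bound over the two MLE concentration events, two reward-function covering events, and two invocations of the coverage condition delivers the stated $1-6\delta$ failure probability.

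The main obstacle is the careful bookkeeping of the shift $b_0(x)$: one must verify that (i) the suboptimality decomposition genuinely collapses to a variance, so the unidentifiable shift drops out; (ii) the $L^\infty$ control of $R(\hat\theta_0,\cdot,\cdot)-R(\theta_*,\cdot,\cdot)-b_0$ still translates to a pointwise policy-ratio bound, using that a context-only shift cancels in $\pi_\theta^\eta$; and (iii) the change of measure between $\pi_f^\eta$ and $\pi_{\hat\theta_0}^\eta$ respects the (distinct) optimal minimising shift for each variance expectation. These three observations are precisely what let us avoid the $e^{O(\eta B)}$ blow-up that a naive $L^\infty$ reduction would incur, and they are what distinguish this proof from the absolute-rating case of Theorem \ref{thm:kl-bandits}.
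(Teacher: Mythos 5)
Your overall route is the paper's route: the strong-convexity/MVT decomposition of Lemma \ref{lm:decompose}, BT--MLE concentration over an $\epsilon_c$-cover upgraded to squared reward-difference error at cost $e^B$, the shift-tolerant coverage condition of Definition \ref{assumption:data coverage:2} to get an $\ell_\infty$ bound of order $\eta^{-1}$ after centering, the softmax's invariance to context-only shifts to get the ratio $e^4$, and a change of measure to the stage-two on-policy error. Your variance reformulation $Q(\pi^*)-Q(\pi_{\hat\theta}^\eta)\le\eta\,\E_x\mathrm{Var}_{\pi_f^\eta}[\Delta]$ is just a cleaner packaging of what the paper does by observing $J(R(\hat\theta,x,\cdot))=J(R(\hat\theta,x,\cdot)-b(x))$ before applying the mean value theorem, and your infimum-over-shifts change-of-measure step is sound.

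There is, however, one genuine gap: the coverage bound $\pi_f^\eta(a|x)/\pi_{\hat\theta_0}^\eta(a|x)\le e^4$ cannot be obtained from pointwise control of $R(\hat\theta_0)-R(\theta_*)$ alone, because $f=\gamma R(\hat\theta,\cdot,\cdot)+(1-\gamma)R(\theta_*,\cdot,\cdot)$ involves the \emph{final pooled} estimator $\hat\theta$, not $\hat\theta_0$. You therefore also need $\eta\,|R(\hat\theta,x,a)-R(\theta_*,x,a)-b_2(x)|\le 1$ for all $(x,a)$ with $\pi_0(a|x)>0$, and your stage-two bound does not give this: it controls $\hat\theta$ only in on-policy $L^2$ under $\pi_{\hat\theta_0}^\eta$, which is neither pointwise nor under $\pi_0$, so it cannot be fed into Definition \ref{assumption:data coverage:2}. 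The fix (this is the paper's Lemma \ref{lemma:rlhf-coverage}) is to restrict the pooled MLE's loss bound to the first $m$ samples, which were drawn from $\pi_0$, transfer to the population under $\pi_0$ via Freedman, and then apply the coverage definition; since the covering slack from Lemma \ref{lemma:cover-rlhf} scales as $e^B(n+m)\epsilon_c$ while you divide by $m$, this step is exactly what forces the relation $n=c_{m,n}m$ and the second term $\frac{1}{8(1+c_{m,n})e^B\eta^2D^2}$ in the definition of $\epsilon_c$ --- a constraint the theorem statement carries but your proposal never uses or justifies. With that extra lemma (and the corresponding union bound accounting for both centered $\ell_\infty$ events), your argument closes and coincides with the paper's proof.
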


\begin{remark}[Comparison with Hybrid Framework]
We compare our two-stage mixed sampling method with hybrid RL. From the algorithmic perspective, a hybrid RL algorithm first learns from an offline dataset and then requires sufficient online iterations to ensure the performance \citep{xiong2024iterative}. For example, for a finite action space with $A$ actions, the number of online iterations should be $\Theta(A)$. In contrast, our method only requires two iterations of sampling from mixed policy and interacting with the environment. Moreover, the results of hybrid literature depend on both the coverage coefficient and the structure complexity of the function class (like the dimension for a linear function class or eluder dimension \citep{russo2013eluder}). Our result only needs the coverage condition of the reference policy. More importantly, we obtain a sharper bound on the sample complexity and derive the additive dependence on the coverage coefficient. 
\end{remark}

\begin{remark}
Although the coefficient $e^B$ appearing in sample size $m,n$ can be exponentially large, this term is caused by the non-linearity of the link function for the preference model, and is common in RLHF literature \citep{zhu2023principled,xiong2024iterative,ye2024theoretical,song2024importance}.
\end{remark}

Theorem \ref{thm:upper-bound-rlhf} shows that the sample complexity of Algorithm \ref{alg:kl-rlhf} is $\cO(\eta /\epsilon \log N_\cR(\epsilon / \delta))$ when the reward scale is a constant and $\epsilon$ is sufficiently small. The result indicates that the proposed two-stage mixed sampling strategy can achieve a suboptimality gap of $\epsilon$ with only an additive dependence on the coverage coefficient $D^2$.

Besides, the algorithm only requires sampling from the reference policy $\pi_0$ and the intermediate policy $\pi_{\hat \theta_0}^\eta$, which is more aligned with the practical scenarios where the preference feedback is collected from the human users and it is expensive to collect the data while the language model is being updated. Our result implies that we may achieve a near-optimal sample complexity by simply leveraging an intermediate policy to collect more data, and the training process of the reward model and the policy (language model) can be highly decoupled.
    
\paragraph{Upper bound for local coverage} We also show the result under the local-coverage assumption (Definition \ref{assumption:Local KL-ball Coverage}) as follows. 
\begin{corollary} \label{cor:upper-bound-rlhf-local}
    Let $C_{\rho_\KL}$ be in Definition \ref{assumption:Local KL-ball Coverage} where $\rho = 2\eta B$. For any $\delta \in (0, 1 /6)$ and $\epsilon > 0$, if we set $n=c_{m,n}m= \Theta(C_{\rho_\KL} \eta / \epsilon \cdot e^B \log (N_\cR(\epsilon_c)/\delta))$ (where constant $c_{m,n}>0$, $\epsilon_c = \frac{\epsilon}{2(1+c_{m,n}^{-1})e^B}$) then with probability at least $1 - 6 \delta$ the output policy of Algorithm \ref{alg:kl-rlhf} $\pi_{\hat \theta}^\eta$ is $O(\epsilon)$ optimal. 
\end{corollary}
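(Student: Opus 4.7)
}
The plan is to mirror the proof of Theorem \ref{thm:upper-bound-rlhf}, but to replace the use of the data-coverage condition (Definition \ref{assumption:data coverage:2}) by the local KL-ball coverage $C_{\rho_\KL}$ (Definition \ref{assumption:Local KL-ball Coverage}), which bypasses the need for a separate warm-up stage to enforce an $\ell_\infty$ bound on $R(\hat\theta_0,\cdot,\cdot)-R(\theta_*,\cdot,\cdot)$. Concretely, I would first invoke the RLHF analogue of Lemma \ref{lm:decompose} to write
\[
Q(\pi^*) - Q(\pi_{\hat\theta}^\eta) \;\le\; \eta \, \E_{x\sim d_0}\!\left[\sum_{a\in\cA} \pi_{f}^\eta(a|x)\,\big(R(\hat\theta,x,a)-R(\theta_*,x,a)\big)^2\right],
\]
where $f=\gamma R(\hat\theta,\cdot,\cdot)+(1-\gamma)R(\theta_*,\cdot,\cdot)$ for some $\gamma\in(0,1)$; this step is purely algebraic and identical to the bandit case, since the policy improvement oracle is the same in the RLHF problem.

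Next I would show that $\pi_f^\eta$ itself lies in the $\rho_\KL$-ball around $\pi_0$ with $\rho_\KL=2\eta B$. This follows because $\pi_f^\eta(\cdot|x)\propto\pi_0(\cdot|x)\exp(\eta f(x,\cdot))$ with $\|f\|_\infty\le B$ (as $f$ is a convex combination of bounded reward functions), and a direct computation gives $\E_{x\sim d_0}[\KL(\pi_f^\eta\|\pi_0)]\le 2\eta B$. Invoking Definition \ref{assumption:Local KL-ball Coverage} then yields $\pi_f^\eta(a|x)/\pi_0(a|x)\le C_{\rho_\KL}$ pointwise, which, after the same change-of-measure used for $\pi_{\hat\theta_0}^\eta$ in the proof of Theorem \ref{thm:upper-bound-rlhf}, gives
\[
Q(\pi^*)-Q(\pi_{\hat\theta}^\eta) \;\lesssim\; \eta\, C_{\rho_\KL} \cdot \E_{x\sim d_0}\E_{a\sim \pi_0(\cdot|x)}\!\left[\big(R(\hat\theta,x,a)-R(\theta_*,x,a)\big)^2\right],
\]
together with the analogous term where $a$ is drawn from $\pi_{\hat\theta_0}^\eta$ (the same bound applies there since $\pi_{\hat\theta_0}^\eta$ also lies in the $\rho_\KL$-ball).

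The third step is to control the mean-squared reward-difference error via the MLE concentration for the BT model that is already used in the proof of Theorem \ref{thm:upper-bound-rlhf}. Since $\hat\theta$ is fit jointly on the $m$ samples from $\pi_0$ and the $n$ samples from $\pi_{\hat\theta_0}^\eta$, the standard argument gives an in-sample squared error of order $e^B\log(N_\cR(\epsilon_c)/\delta)/(m+n)$ under the mixed sampling distribution, and after averaging the $\pi_0$ and $\pi_{\hat\theta_0}^\eta$ parts this transfers to a bound of order $e^B\log(N_\cR(\epsilon_c)/\delta)/\min(m,n)$ under each component. Plugging in and setting $n=c_{m,n}m=\Theta(C_{\rho_\KL}\eta/\epsilon\cdot e^B\log(N_\cR(\epsilon_c)/\delta))$ makes the right-hand side of the displayed inequality $O(\epsilon)$, completing the argument.

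The main obstacle I expect is bookkeeping rather than any new idea: one has to verify that the local-coverage radius $\rho_\KL=2\eta B$ indeed covers both $\pi_f^\eta$ and $\pi_{\hat\theta_0}^\eta$ uniformly in $\hat\theta_0,\hat\theta\in\Theta$ (which uses only $\|R(\theta,\cdot,\cdot)\|_\infty\le B$ for every $\theta\in\Theta$), and that the covering-number choice $\epsilon_c=\epsilon/(2(1+c_{m,n}^{-1})e^B)$ is small enough to absorb the discretization error through the BT log-likelihood's $e^{-B}$-smoothness. Once these bounds are in place, the union bound over the standard failure events of the two regressions and of the two concentration steps yields the claimed $1-6\delta$ confidence.
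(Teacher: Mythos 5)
Your overall route is the same as the paper's: the strong-convexity decomposition from Lemma \ref{lm:decompose}, the observation that $\pi_f^\eta$ lies in the KL-ball of radius $2\eta B$ so that Definition \ref{assumption:Local KL-ball Coverage} gives the change of measure with factor $C_{\rho_\KL}$, and then the BT-model MLE concentration to control the resulting squared error. However, there is a genuine gap in how you instantiate this in the preference-feedback setting: you drop the per-context baseline $b(x)$. Your step 1 bounds the suboptimality by $\eta\,\EE_{\pi_f^\eta}\bigl[(R(\hat\theta,x,a)-R(\theta_*,x,a))^2\bigr]$, and your step 3 claims that the BT-model MLE makes $\EE_{\pi_0}\bigl[(R(\hat\theta,x,a)-R(\theta_*,x,a))^2\bigr]$ small. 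The second claim is false in general: preference labels generated by the Bradley--Terry model depend only on within-context reward \emph{differences} $R(\theta,x,a^1)-R(\theta,x,a^2)$, so the MLE $\hat\theta$ can differ from $\theta_*$ by an arbitrary context-dependent shift (up to the reward scale $B$) while attaining the maximal likelihood. Accordingly, Lemmas \ref{lemma:concentration-rlhf}--\ref{lemma:confidence-reward} only control $\EE_{\pi}\bigl[(R(\hat\theta,x,a)-R(\theta_*,x,a)-b(x))^2\bigr]$ for a suitable $b:\cX\to\RR$ (e.g.\ $b(x)=\EE_{a'\sim\pi(\cdot|x)}[R(\hat\theta,x,a')-R(\theta_*,x,a')]$), not the unrecentred squared error your chain of inequalities requires.

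The missing step, which is exactly how the paper closes this gap in the proof of Theorem \ref{thm:upper-bound-rlhf} (and hence of this corollary), is to exploit the shift-invariance of the objective: since $\pi_{\hat\theta}^\eta$ is unchanged when $R(\hat\theta,x,\cdot)$ is shifted by $b(x)$, one has $J(R(\hat\theta,x,\cdot))=J(R(\hat\theta,x,\cdot)-b(x))$, and the Taylor/mean-value expansion can be carried out in the recentred perturbation $\Delta(x,a)=R(\hat\theta,x,a)-R(\theta_*,x,a)-b(x)$, with $f(\cdot,\cdot)=\gamma[R(\hat\theta,\cdot,\cdot)-b(\cdot)]+(1-\gamma)R(\theta_*,\cdot,\cdot)$. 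This yields
\begin{align*}
Q(\pi^*)-Q(\pi_{\hat\theta}^\eta)\le \eta\,\EE_{\pi_f^\eta}\bigl[(R(\hat\theta,x,a)-R(\theta_*,x,a)-b(x))^2\bigr],
\end{align*}
which is the quantity Lemma \ref{lemma:confidence-reward} actually bounds after the $C_{\rho_\KL}$ change of measure to $\pi_0$ (note the recentring does not affect the KL-ball membership of $\pi_f^\eta$, since per-context shifts leave the induced policy unchanged). With this correction your argument matches the paper's; the remaining differences (e.g.\ your extra term under $\pi_{\hat\theta_0}^\eta$, which is not needed once you change measure to $\pi_0$, and the bookkeeping with $n=c_{m,n}m$ and $\epsilon_c$) are minor.
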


\section{Conclusions and Future Work}
We have presented a comprehensive theoretical analysis of the role of reverse-KL regularization in decision-making models including contextual bandits and reinforcement learning from preference feedback, highlighting its significance in terms of sample complexity. Our results provide new insights into the power of regularization extending beyond its traditional role of mitigating errors from the current critic (or reward) model. Additionally, we examined the role of data coverage in both contextual bandits and RLHF. Our analysis shows that with sufficient coverage from the reference policy, a mixed sampling strategy can achieve a sample complexity that exhibits only an additive dependence on the coverage coefficient without the need for explicit exploration or additional structural assumptions.

For future directions, it is interesting to study if the sharp bound exists for the KL-regularized Markov Decision Process (MDP) problem. Additionally, it is also worthwhile to explore how to achieve a sample complexity bound with only additive dependence on the global and local coverage conditions in Definitions \ref{assumption:Global-Policy coverage} and \ref{assumption:Local KL-ball Coverage}.

\section*{Acknowledgements}
We are grateful to Wei Xiong for the helpful discussions and insightful feedback in revising this work.

\bibliographystyle{ims}
\bibliography{ref}

\appendix
\newpage

\section{KL-Regularized Algorithm for RLHF}\label{s:Algorithm for Preference Feedback}

In this section, we present a two-stage mixed-policy sampling algorithm for RLHF in Algorithm \ref{alg:kl-rlhf}, which can be seen as an extention of Algorithm \ref{alg:kl-bandits}. There are two stages in the algorithm.

\begin{algorithm}[h]
    \caption{Two-stage Mixed-Policy Sampling from Preference Feedback (TMPS-PF)}
    \label{alg:kl-rlhf}
    \begin{algorithmic}[1]
    \State \textbf{Input:} $\eta$, $\epsilon$, $\pi_0$, $\Theta$.
    \Statex \LeftComment{Use policy $\pi_0$ to achieve sufficient data coverage}
    \For {$i$ = $1, \ldots, m$} 
        \State Sample context $\tilde x_i \sim d_0$ and 2 actions $\tilde a_i^1, \tilde a_i^2 \sim \pi_0(\cdot|\tilde x_i)$.
        \State Observe preference label $\tilde y_i \in \{0, 1\}$ from the preference oracle defined in Definition \ref{def:preference-oracle}.
    \EndFor
    \State Compute the MLE estimator of the reward function based on $\{(\tilde x_i, \tilde a_i^1, \tilde a_i^2,  \tilde y_i)\}_{i=1}^m$:
    \begin{small}
    \begin{align*}
    \hat{\theta}_0 \gets \argmax_\theta \sum_{i = 1}^m \tilde y_i \cdot \log \sigma(R(\theta, \tilde x_i, \tilde a_i^1) - R(\theta, \tilde x_i, \tilde a_i^2)) + (1 - \tilde y_i) \cdot \log \sigma(R(\theta, \tilde x_i, \tilde a_i^2) - R(\theta, \tilde x_i, \tilde a_i^1)).
    \end{align*} \end{small}\label{alg2:line:regression1}
    \State Apply the planning oracle to compute $\pi_{\hat{\theta}_0}^\eta(\cdot|\cdot) \propto \pi_0(\cdot|\cdot) \exp\bigl(\eta R(\hat \theta_0, \cdot, \cdot)\bigr)$. \label{alg2:line:policy1}
    \Statex \LeftComment{Use policy $\pi_{\hat \theta_0}^\eta$ to sample new responses}
    \For {$i$ = $1, \ldots, n$} 
        \State Sample context $x_i \sim d_0$ and 2 actions $a_i^1, a_i^2 \sim \pi_{\hat\theta_0}^\eta(\cdot|x_i)$.
        \State Observe preference label $y_i \in \{0, 1\}$ from the preference oracle defined in Definition \ref{def:preference-oracle}.
    \EndFor
    \State Compute the MLE estimator of the reward function using $\{(x_i, a_i^1, a_i^2, y_i)\}_{i=1}^n$ together with $\{(\tilde x_i, \tilde a_i^1, \tilde a_i^2, \tilde y_i)\}_{i=1}^m$: 
    \begin{small}
    \begin{align*}
    \hat{\theta} \gets \argmax_\theta \sum_{i = 1}^m \tilde y_i \cdot \log \sigma(R(\theta, \tilde x_i, \tilde a_i^1) - R(\theta, \tilde x_i, \tilde a_i^2)) + (1 - \tilde y_i) \cdot \log \sigma(R(\theta, \tilde x_i, \tilde a_i^2) - R(\theta, \tilde x_i, \tilde a_i^1))\\ + \sum_{i = 1}^n y_i \cdot \log \sigma(R(\theta, x_i, a_i^1) - R(\theta, x_i, a_i^2)) + (1 - y_i) \cdot \log \sigma(R(\theta, x_i, a_i^2) - R(\theta, x_i, a_i^1))
    \end{align*} \end{small} \label{alg2:line:regression2}
    \State \textbf{Output} $\pi_{\hat{\theta}}^\eta(\cdot|\cdot) \propto \pi_0(\cdot|\cdot) \exp\bigl(\eta R(\hat \theta, \cdot, \cdot)\bigr)$.
    \end{algorithmic}
\end{algorithm}

In the first stage, we sample $m$ context-action pairs $\{(\tilde x_i, \tilde a_i^1, \tilde a_i^2, \tilde y_i)\}_{i = 1}^m$ from the BT model and call the preference oracle to get the preference labels. We then compute the MLE estimator of the reward function $\hat \theta_0$ based on the preference feedback in line \ref{alg2:line:regression1}. Afterwards, we apply the planning oracle to compute the optimal policy $\pi_{\hat \theta_0}^\eta$ based on the reward function $\hat \theta_0$ in line \ref{alg2:line:policy1}. Line \ref{alg2:line:regression1} and line \ref{alg2:line:policy1} correspond to the practical implementation of RLHF \citep{ouyang2022training,bai2022training,touvron2023llama} given a dataset of preference feedback.
    
In the second stage, we sample $n$ context-action pairs $\{(x_i, a_i^1, a_i^2, y_i)\}_{i = 1}^n$ using the intermediate policy $\pi_{\hat \theta_0}^\eta$ and call the preference oracle to get the preference labels. We then compute the MLE estimator of the reward function $\hat \theta$ based on the preference feedback from both stages. Finally, we apply the planning oracle to compute the optimal policy $\pi_{\hat \theta}^\eta$ based on the reward function $\hat \theta$.

\section{Experimental Results}

\begin{figure} 
    \begin{center}
    \subfigure[Benefit of mixed-policy sampling]{
    \includegraphics*[width=0.4\textwidth]{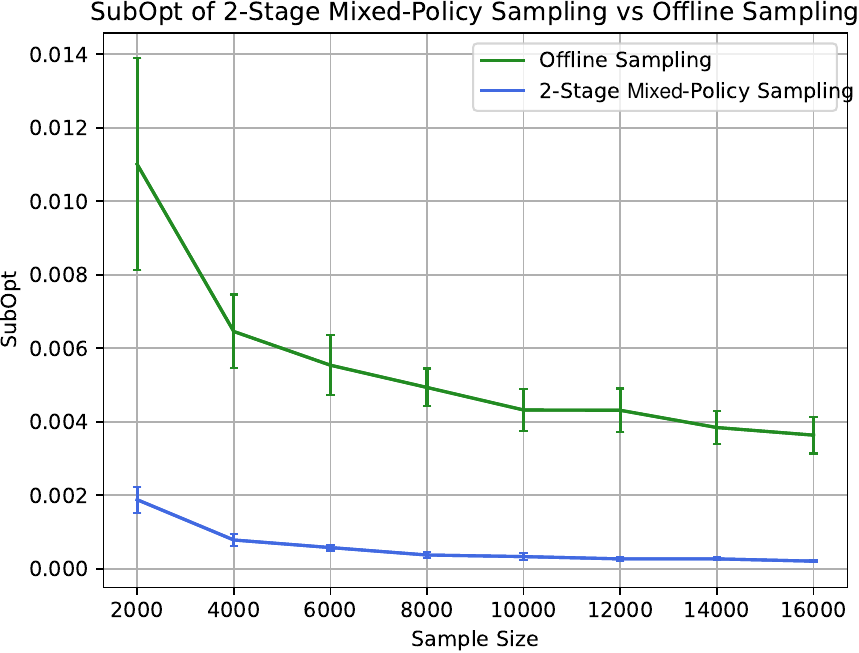}} 
    \subfigure[Results under different KL-penalty]{
    \includegraphics*[width=0.397\textwidth]{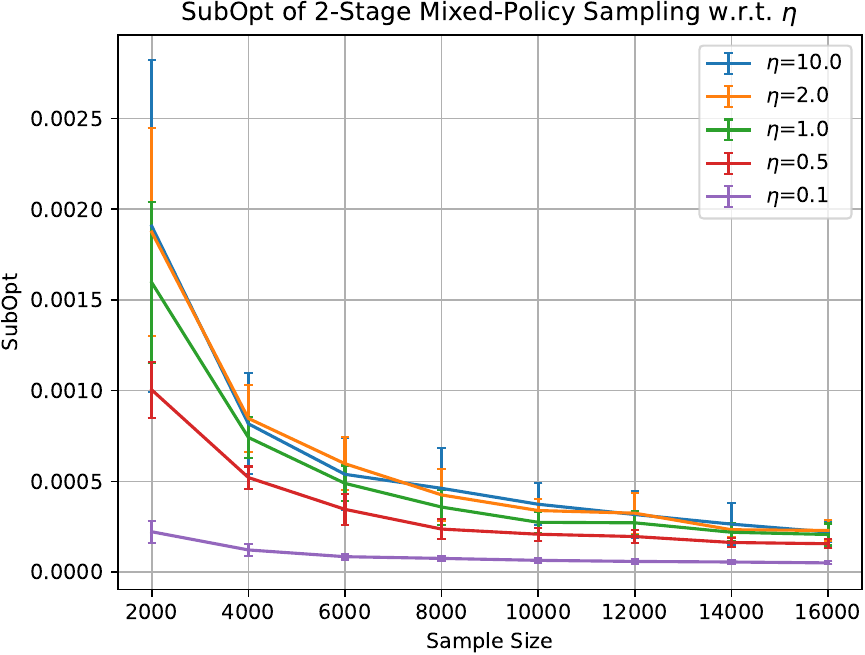}}
    \caption{Suboptimality gap for KL-regularized contextual bandits. \label{figure:r}}
    \end{center}
\end{figure}

\begin{figure} 
    \begin{center}
    \subfigure[Benefit of mixed-policy sampling \label{figure:p1}]{
    \includegraphics*[width=0.4\textwidth]{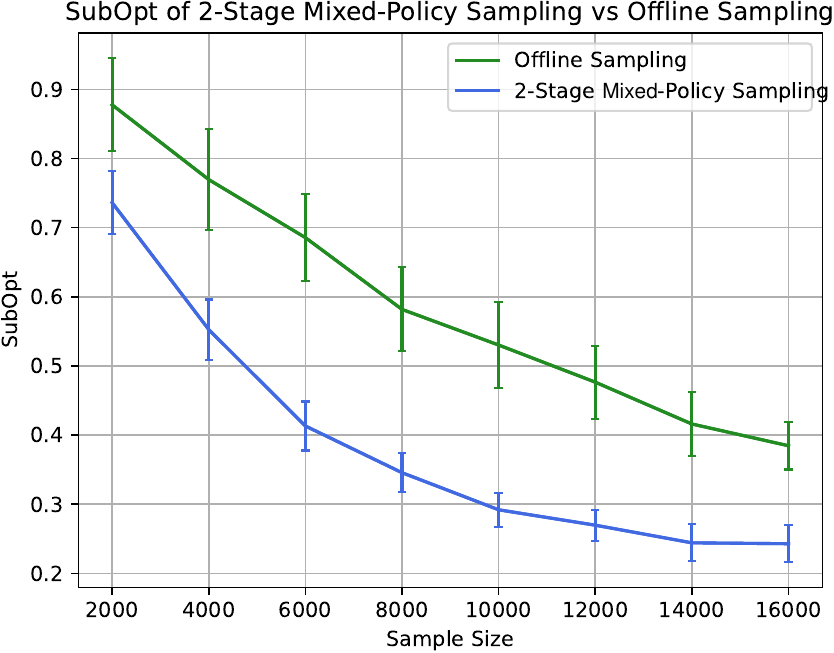}} 
    \subfigure[Results under different KL-penalty\label{figure:p2}]{
    \includegraphics*[width=0.397\textwidth]{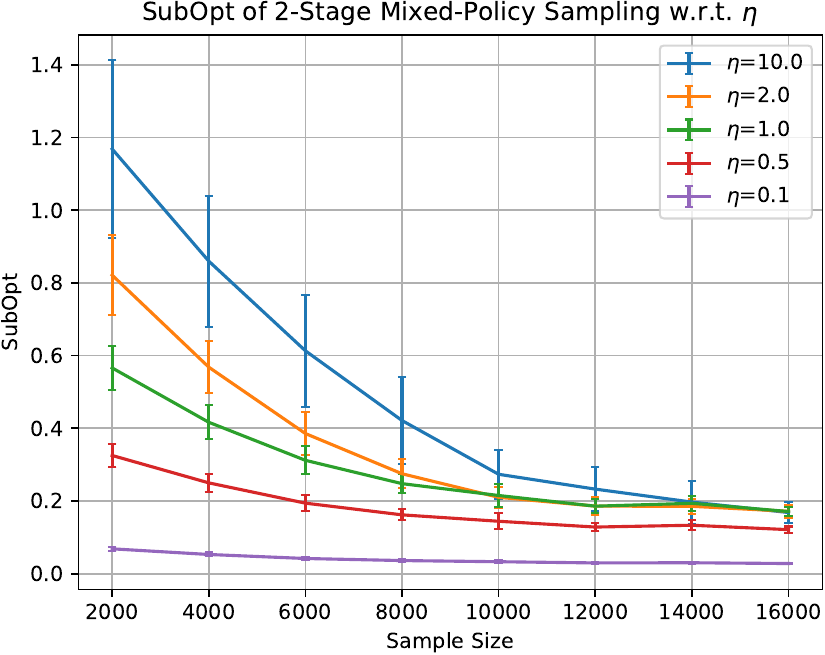}}
    \caption{Suboptimality gap for reinforcement learning from preference feedback. \label{figure:p}}
    \end{center}
\end{figure}

In this section, we conduct experiments with synthetic data to investigate the benefit of mixed-policy sampling and the effect of KL-regularization coefficient on the sample complexity of the problem. We plot the experimental results for RL from preference feedback in Figure \ref{figure:p} and the results for KL-regularized contextual bandits in Figure \ref{figure:r}. All the trials are repeated for 10 times and plotted with the standard variation. 

We consider the case where context distribution $d_0$ is a projected Gaussian distribution over the unit sphere and $\cA$ is a discrete set with $|\cA| = 5$. 
We construct the reward functions as $R(\phi, x, a) = \langle x, \phi(a) \rangle$, parameterized by a mapping $\phi$ from $\cA$ to $\RR^{10}$, and set the reference policy $\pi_0$ to be the uniform random policy. To generate $\phi_*$, we sample $\phi_*(a)$ independently for each $a \in \cA$ according to another projected gaussian distribution over the sphere with radius equal to 5. In Figure \ref{figure:p1}, we compare the suboptimality gaps of mixed-policy sampling with $m = n$ to those of offline sampling using $\pi_0$ under the same sample sizes. The result indicates that the usage of mixed-policy sampling reduces the suboptimality gap by a large margin. In Figure \ref{figure:p2}, it is shown that the sample complexity is remarkably affected by the KL-regularization term, corroborating our sharp analysis for regularized RLHF.

\section{Proofs from Section \ref{sec:kl-cb}}

\subsection{Proof of Theorem \ref{thm:lower-bound-bandits}}
\begin{proof}
Consider a simple case when $|\cX| = M$ and $|\cA| = 2$. We suppose that the context $x$ is drawn uniformly from $\cX$ at the beginning of each round.
Let $\Theta$ be the set consisting of mappings from $\cX$ to $\cA = \{0, 1\}$. For each $\theta \in \Theta$, we have $R(\theta, x, a) = \begin{cases}
    1/2 + c & \text{if } a = \theta(x), \\
    1/2 & \text{if } a \neq \theta(x),
\end{cases}$ where $c \in (0, 1 / 4)$ is a constant, and $\theta(x)$ is the optimal action under context $x$ when the model is $\theta$.

For any $(\theta, x, a) \in \Theta \times \cX \times \cA$, we assume the reward feedback $r \sim Bernoulli(R(\theta, x, a))$ when the model is $\theta$ and $a$ is chosen under context $x$.

We pick a pair of model $\theta_1, \theta_2$ in $\Theta$, such that $\theta_1(x) = \begin{cases} 
\theta_2(x) & \text{if } x \neq x_0, \\
1 - \theta_2(x) & \text{if } x = x_0.
\end{cases}$

We denote by $\PP_{\theta}$, $\rE_{\theta}$ the probability measure and expectation under the model $\theta$. Let $N(x)$ be the number of times the context $x$ is observed in the first $T$ rounds for an $x \in \cX$. 

For two Bernoulli random variables $X$ and $Y$ with parameters $1 / 2 - c$ and $1 / 2 + c$, we have \begin{align*}
    \KL(X \| Y) &= (1 / 2 - c) \log \frac{1 / 2 - c}{1 / 2 + c} + (1 / 2 + c) \log \frac{1 / 2 + c}{1 / 2 - c} 
    \\&= 2c \cdot \log \frac{1 + 2c}{1 - 2c} \le 16 c^2
\end{align*}
where the inequality follows from the fact that $\log(1 + x) \le x$ for $x \ge 0$ and $c \in (0, 1 / 4)$.

Applying Pinsker's inequality (Lemma \ref{lemma:pinsker}), we have for all event $A$ measurable with respect to the filtration generated by the observations, \begin{align*} 
    |\PP_{\theta_1}(A) - \PP_{\theta_2}(A)| \le \sqrt{8c^2 \rE_{\theta_1}[N(x_0)]}  = \sqrt{8c^2 T/M},
\end{align*} where the first inequality follows from the chain rule of KL divergence, and the fact that $\rE_{\theta_1}[N(x_0)] = T/M$. 

Set $A$ to be the event that $\pi_{\text{out}}(\theta_1(x_0)|x_0) > 1 /2$. Then we have \begin{align*} 
    \PP_{\theta_1}(\pi_{\text{out}}(\theta_1(x_0)|x_0) \le 1 / 2) + \PP_{\theta_2}(\pi_{\text{out}}(\theta_2(x_0)|x_0) \le 1 / 2) \ge 1 - |\PP_{\theta_1}(A) - \PP_{\theta_2}(A)| \ge 1 - \sqrt{8c^2 T/M}.
\end{align*} 
If the model $\theta$ is uniformly drawn from $\Theta$, then we have \begin{align*} 
    \rE_{\theta \sim \text{Unif}(\Theta)}\PP_{\theta}(\pi_{\text{out}}(\theta(x_0)) \le 1 / 2) \ge \frac{1}{2} - \sqrt{2c^2 T / M}
\end{align*} for an arbitrary $x_0$. 

Then we consider the following suboptimality gap:
\begin{align*} 
    &\rE_{\pi_{\theta_*}^\eta} \bigg[R(\theta_*, x, a) - \frac{1}{\eta}\log \frac{\pi_{\theta_*}^\eta(a|x)}{\pi_0(a|x)}\bigg] - \rE_{\pi_{\text{out}}} \bigg[R(\theta_*, x, a) - \frac{1}{\eta}\log \frac{\pi_{\text{out}}(a|x)}{\pi_0(a|x)}\bigg] 
    \\&= \frac{1}{\eta} \rE_{\pi_{\theta_*}^\eta} \bigg[\log \frac{\pi_0(a|x) \cdot \exp\bigl(\eta R(\theta_*, x, a)\bigr)}{\pi_{\theta_*}^\eta(a|x)}\bigg] - \frac{1}{\eta} \rE_{\pi_{\text{out}}} \bigg[\log \frac{\pi_0(a|x) \cdot \exp\bigl( \eta R(\theta_*, x, a)\bigr)}{\pi_{\text{out}}(a|x)}\bigg]
    \\&= \frac{1}{\eta} \rE_{\pi_{\text{out}}}\Bigl[\log \frac{\pi_{\text{out}}(a|x)}{\pi^*(a|x)}\Bigr],
\end{align*}
where the last equality follows from the fact that $\pi_{\theta_*}^\eta \propto \pi_0(a|x) \cdot \exp(\eta R(\theta_*, x, a))$. Note that we handle the difference in reward and the KL-divergence term together, which is distinct from the standard analysis of the lower bound for contextual bandits.

To bound the suboptimality gap, we further have \begin{align} 
    &\rE_{\theta \sim \text{Unif}(\Theta)}\rE_{\pi_{\text{out}}}\Bigl[\log \frac{\pi_{\text{out}}(a|x)}{\pi^*(a|x)}\Bigr] \notag
    \\&= \rE_{\theta \sim \text{Unif}(\Theta)} \frac{1}{M} \sum_{x \in \cX}  \rE_{a \sim \pi_{\text{out}}(\cdot | x)} \Bigl[\log \frac{\pi_{\text{out}}(a|x)}{\pi^*(a|x)}\Bigr] \notag
    \\&\ge \rE_{\theta \sim \text{Unif}(\Theta)} \frac{1}{M} \sum_{x \in \cX} \PP_{\theta}(\pi_{\text{out}}(\theta(x)) \le 1 / 2) \cdot \biggl[\frac{1}{2}\log \frac{1 + \exp(-\eta c)}{2} + \frac{1}{2} \log \frac{1 + \exp(\eta c)}{2}\biggr] \notag
    \\&\ge \Bigl(\frac{1}{2} - \sqrt{2c^2T / M}\Bigr)\Bigl[\frac{1}{2}\log \frac{1 + \exp(-\eta c)}{2} + \frac{1}{2} \log \frac{1 + \exp(\eta c)}{2}\Bigr], \label{eq:lower:kl}
\end{align}
where the first inequality follows from the fact that $\KL(\pi_{\text{out}}(\cdot | x) \| \pi^*(\cdot | x)) \ge \KL(\pi_{\text{unif}}(\cdot | x) \| \pi^*(\cdot | x))$ if $\pi_{\text{out}}(\theta(x)) \le 1 / 2$. Here $\pi_{\text{unif}}$ is the uniform distribution over $\cA$.
Note that \begin{align*} 
    \frac{\text{d}}{\text{d} u} \Bigl[\frac{1}{2} \log \frac{1 + e^{-u}}{2} + \frac{1}{2} \log \frac{1 + e^u}{2}\Bigr] \Big|_{u = 0} &= \frac{1}{2} \Bigl[\frac{1}{1 + \exp(-u)} - \frac{1}{1 + \exp(u)}\Bigr] \Big|_{u = 0} = 0, \\
    \frac{\text{d}^2}{\text{d} u^2}\Bigl[\frac{1}{2} \log \frac{1 + e^{-u}}{2} + \frac{1}{2} \log \frac{1 + e^u}{2}\Bigr] &= \frac{\exp(u)}{[1 + \exp(u)]^2}. 
\end{align*}
Thus, applying Taylor's expansion on the right-hand side of \eqref{eq:lower:kl}, we have \begin{align*} 
    \rE_{\theta \sim \text{Unif}(\Theta)}\rE_{\pi_{\text{out}}}\Bigl[\log \frac{\pi_{\text{out}}(a|x)}{\pi^*(a|x)}\Bigr] \ge \frac{1}{2} \cdot \Bigl(\frac{1}{2} - \sqrt{2c^2T / M}\Bigr) \eta^2c^2 \cdot \frac{1}{3 + \exp(\eta c)},
\end{align*}
which follows from the Taylor's expansion where $f(x) = f(0) + f'(0) x + \frac{1}{2}f''(z)x^2$ where $z \in [0, \eta c]$, and the fact that $\frac{1}{2}\frac{e^z}{(1+e^z)^2} = \frac{1}{2} \frac{1}{e^{-z} + e^z + 2} \leq \frac{1}{2} \frac{1}{3 + e^{\eta c}}$.

When $\epsilon < 1 / 64 \eta$, we can set $c = 8\sqrt{\epsilon/ \eta} $. To achieve a suboptimality gap of $\epsilon$, we need to satisfy: \begin{align*} 
    \frac{1}{2} \cdot \Bigl(\frac{1}{2} - \sqrt{2c^2T / M}\Bigr) \eta^2c^2 \cdot \frac{1}{3 + \exp(\eta c)} \le \eta \epsilon, 
\end{align*}
indicating that $T \ge \frac{\eta M}{2048 \epsilon} = \Omega(\frac{\eta M}{\epsilon})$. 

When $\epsilon \ge 1 / 64 \eta$, or equivalently, $\eta \ge 1 / 64 \epsilon$, we employ a different lower bound for \eqref{eq:lower:kl} as follows: 
\begin{align} 
    \frac{1}{2}\log \frac{1 + \exp(-\eta c)}{2} + \frac{1}{2} \log \frac{1 + \exp(\eta c)}{2} &= \frac{1}{2} \log \frac{2 + \exp(\eta c) + \exp(-\eta c)}{4} \notag
    \\&\ge \frac{1}{2} \cdot \frac{1}{2} \Bigl(\log \frac{\exp(\eta c) + \exp(-\eta c)}{2}\Bigr) \notag
    \\&\ge \frac{1}{4}(\eta c - \log 2), \label{eq:lower:kl2}
\end{align}
where the first inequality follows from Jensen's inequality.

Substituting \eqref{eq:lower:kl2} into \eqref{eq:lower:kl}, we have \begin{align*} 
    \epsilon \ge \frac{1}{\eta} \rE_{\theta \sim Unif(\Theta)}\rE_{\pi_{\text{out}}}\Bigl[\log \frac{\pi_{\text{out}}(a|x)}{\pi^*(a|x)}\Bigr] \ge \frac{1}{4} \cdot \Bigl(\frac{1}{2} - \sqrt{c^2T / 2M}\Bigr) (\eta c - \log 2) \cdot \frac{1}{\eta}.
\end{align*}
Set $c = 64 \epsilon$. Then we have $T = \Omega(M / \epsilon^2)$. 

Note that for the given $\epsilon$, $N_{\cR}(\epsilon) = M$, and since $d_0$ is the uniform distribution over $\cX$, we have $D^2 = O(M)$, which completes the proof.

\end{proof}

\subsection{Proof of Theorem \ref{thm:kl-bandits}}\label{ssec:pf_thm:kl-bandits}

We start with the following lemma, which provides an on-policy generalization bound for the reward function. Due to the on-policy nature of the algorithm (i.e., the usage of intermediate $\pi_{\hat \theta_0}^\eta$), we can leverage the covering number of the reward function class $\cR$ to derive the generalization error. Since we are using a fixed policy $\pi_{\hat \theta_0}^\eta$ to sample in the second stage, we can derive the generalization error of the reward function as follows:

\begin{lemma}[Generalization error of reward function] \label{lemma:generalization}
    For an arbitrary policy $\pi$, a set of context-action pairs $\{(x_i, a_i)\}_{i = 1}^n$ generated i.i.d. from $\pi$, and a distance threshold $0<\epsilon_c \le B$, we have with probability at least $1-\delta$, for any pair of parameters $\theta_1$ and $\theta_2$, \begin{align*} 
        &\rE_{\pi} |R(\theta_1,x,a)-R(\theta_2,x,a)|^2 \\&\le \frac{2}{n} \sum_{i = 1}^n |R(\theta_1,x_i,a_i)-R(\theta_2,x_i,a_i)|^2 + \frac{32B^2}{3n} \log(2N_\cR(\epsilon_c)/\delta) + 10\epsilon_c B.
    \end{align*}
\end{lemma}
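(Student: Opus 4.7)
The plan is to prove this concentration bound via Bernstein's inequality combined with a covering argument, exploiting the self-bounding structure of squared bounded random variables.

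First, I would fix a pair $(\theta_1, \theta_2) \in \Theta^2$ and set $Y_i := \bigl(R(\theta_1,x_i,a_i)-R(\theta_2,x_i,a_i)\bigr)^2$. Since $R \in [0,B]$, we have $Y_i \in [0,4B^2]$ and the key self-bounding inequality $\Var(Y_i) \le \E[Y_i^2] \le 4B^2 \cdot \E[Y_i]$. Applying Bernstein's inequality to $\{Y_i\}_{i=1}^n$ yields, with probability at least $1-\delta$,
\[
  \E[Y] \;\le\; \frac{1}{n}\sum_{i=1}^n Y_i + \sqrt{\frac{8 B^2\, \E[Y]\log(2/\delta)}{n}} + \frac{8B^2 \log(2/\delta)}{3n}.
\]
Then I would apply AM-GM (of the form $2\sqrt{ab} \le 2a + b/2$) to the middle term so as to absorb $\tfrac12 \E[Y]$ into the left-hand side; rearranging multiplies the constant in the deviation by a bounded factor and gives a per-pair bound of the form $\E[Y] \le \tfrac{2}{n}\sum_i Y_i + \tfrac{CB^2}{n}\log(2/\delta)$.

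Second, I would discretize using the $\epsilon_c$-cover $\mathcal{C}(\cR,\epsilon_c)$ of size $N_\cR(\epsilon_c)$. Applying the per-pair bound to every pair of representatives in the cover and taking a union bound over at most $N_\cR(\epsilon_c)^2$ pairs (replacing $\delta$ by $\delta/N_\cR(\epsilon_c)^2$) gives the $\tfrac{32B^2}{3n}\log(2N_\cR(\epsilon_c)/\delta)$ term once one uses $\log(N_\cR(\epsilon_c)^2/\delta) \le 2 \log(2 N_\cR(\epsilon_c)/\delta)$ and tracks constants carefully.

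Third, I would lift from the cover to arbitrary $\theta_1,\theta_2 \in \Theta$. Choose $\theta_1',\theta_2'$ in the cover with $\|R(\theta_j,\cdot,\cdot)-R(\theta_j',\cdot,\cdot)\|_\infty \le \epsilon_c$. Using $|a^2-b^2| \le |a+b|\cdot|a-b|$ with $|a|,|b|\le 2B$ and $|a-b|\le 2\epsilon_c$, both the expectation $\E_\pi|R(\theta_1,x,a)-R(\theta_2,x,a)|^2$ and the empirical average $\tfrac1n\sum_i |R(\theta_1,x_i,a_i)-R(\theta_2,x_i,a_i)|^2$ differ from their primed counterparts by at most $8 B \epsilon_c$; combining these approximation errors yields the residual $10 \epsilon_c B$ term.

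The main obstacle is bookkeeping the constants precisely so that Bernstein's inequality plus the self-bounding absorption, the union bound over the squared cover, and the two-sided approximation error together produce exactly the coefficients $2$, $\tfrac{32}{3}$, and $10$ claimed in the statement; nothing else in the argument is delicate, since the self-bounding property handles the usual obstacle of not knowing the variance $a~priori$.
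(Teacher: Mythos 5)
Your proposal matches the paper's proof essentially step for step: Freedman/Bernstein concentration with the self-bounding variance bound, absorption of the square-root term into the left-hand side (the paper's Lemma \ref{lemma:sqrt-trick} plays the role of your AM--GM step), a union bound over pairs of representatives in the $\epsilon_c$-cover, and a Lipschitz lift from the cover to arbitrary $\theta_1,\theta_2$. The only slip is your range bound $Y_i\in[0,4B^2]$: since $R\in[0,B]$, the difference satisfies $|R(\theta_1,x,a)-R(\theta_2,x,a)|\le B$, so $Y_i\le B^2$, and it is this tighter bound that yields the stated constants $\tfrac{32}{3}$ and $10$.
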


\begin{proof} 
    We first consider an $\epsilon_c$-net $\cR^c$ of the reward function class $\cR$ where $\cR^c = \{R(\theta, \cdot, \cdot) |\theta \in \Theta^c \}$ with size $N_\cR(\epsilon_c)$. For any $R(\theta, \cdot, \cdot) \in \cR$, there exists $\theta^c$ such that $\|R(\theta, \cdot, \cdot) - R(\theta^c, \cdot, \cdot)\|_\infty \leq \epsilon_c$.

    By Lemma \ref{lemma:freedman}, for each pair of $\theta_1^c, \theta_2^c \in \Theta^c$ (corresponding to $\theta_1, \theta_2$), we have with probability at least $1 - \delta$, \begin{align*} 
        &\bigg| \frac{1}{n} \sum_{i = 1}^n (R(\theta_1^c,x_i,a_i) - R(\theta_2^c,x_i,a_i))^2 - \rE_{\pi}|R(\theta_1^c,x,a)-R(\theta_2^c,x,a)|^2 \bigg| \\&\leq \sqrt{\frac{2\text{Var}_{\pi} |R(\theta_1^c,x,a)-R(\theta_2^c,x,a)|^2}{n} \log(2 / \delta)} + \frac{2}{3n} B^2 \log(2/\delta)
        \\&\leq \sqrt{\frac{2 B^2 \EE_{\pi} |R(\theta_1^c, x, a) - R(\theta_2^c, x, a)|^2}{n} \log(2 / \delta)} + \frac{2}{3n} B^2 \log(2/\delta)
    \end{align*}
    where the second inequality follows from the fact that $R(\theta_1^c,x,a), R(\theta_2^c,x,a) \leq B$.

    Using union bound over all $\theta_1^c, \theta_2^c \in \Theta^c$, we have with probability at least $1 - \delta$, for all $\theta_1^c, \theta_2^c \in \Theta^c$, \begin{align*} 
        & \rE_{\pi} |R(\theta_1^c, x, a) - R(\theta_2^c, x, a)|^2 - \frac{1}{n} \sum_{i = 1}^n (R(\theta_1^c,x_i,a_i) - R(\theta_2^c,x_i,a_i))^2 \\&\leq \sqrt{\frac{4B^2\EE_{\pi} |R(\theta_1^c, x, a) - R(\theta_2^c, x, a)|^2}{n} \log(2N_\cR(\epsilon_c) / \delta)} + \frac{4B^2}{3n} \log(2N_\cR(\epsilon_c)/\delta), 
    \end{align*}
    from which we further obtain the following inequality by Lemma \ref{lemma:sqrt-trick},
    \begin{align} 
            \rE_{\pi}|R(\theta_1^c, x, a) - R(\theta_2^c, x, a)|^2 \le \frac{2}{n} \sum_{i = 1}^n (R(\theta_1^c, x_i, a_i) - R(\theta_2^c, x_i, a_i))^2 + \frac{32B^2}{3n} \log(2N_\cR(\epsilon_c)/\delta). \label{eq:generalization-square}
    \end{align}

    Then we can complete the proof by the definition of $\epsilon$-net.
\end{proof}

Next, we provide the following lemma, which gives an upper bound on the cumulative square error of the learned reward function.

\begin{lemma}[Confidence bound for reward function] \label{lemma:confidence}
    For an arbitrary policy $\pi$, and a set of data $\{(x_i, a_i, r_i)\}_{i = 1}^n$ generated i.i.d. from $\pi$, suppose that $\hat \theta$ is the least squares estimator of $\theta_*$, i.e., $\hat \theta = \arg\min_{\theta \in \Theta} \sum_{i = 1}^n (R(\theta, x_i, a_i) - r_i)^2$. Then for any threshold $\epsilon_c > 0$, with probability at least $1 - \delta$, it holds that \begin{align*} 
        \sum_{i = 1}^n (R(\hat\theta, x_i, a_i) - R(\theta_*, x_i, a_i))^2 \le 16 B^2 \log(2 N_\cR(\epsilon_c)/\delta) + 4\epsilon_c n B.
    \end{align*}
\end{lemma}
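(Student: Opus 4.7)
\textbf{Proof proposal for Lemma \ref{lemma:confidence}.}
The plan is to apply the standard least-squares ``basic inequality'' combined with a covering argument on the reward class $\cR$, and then control the resulting noise correlation via Freedman's inequality (Lemma \ref{lemma:freedman}). First, I would exploit the optimality of $\hat\theta$: since $\sum_i (R(\hat\theta,x_i,a_i)-r_i)^2 \le \sum_i (R(\theta_*,x_i,a_i)-r_i)^2$ and $r_i = R(\theta_*,x_i,a_i)+\epsilon_i$, expanding the squares and cancelling common terms yields the basic inequality
\[
\sum_{i=1}^n \bigl(R(\hat\theta,x_i,a_i)-R(\theta_*,x_i,a_i)\bigr)^2 \;\le\; 2\sum_{i=1}^n \epsilon_i\bigl(R(\hat\theta,x_i,a_i)-R(\theta_*,x_i,a_i)\bigr).
\]
So it suffices to control the RHS uniformly over all candidate $\theta$.

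Next, I would pass to an $\epsilon_c$-cover $\cR^c$ of $\cR$ with cardinality $N_\cR(\epsilon_c)$, writing $\hat\theta^c$ for the cover element approximating $\hat\theta$ in $\|\cdot\|_\infty$. The noise correlation splits as
\[
\sum_{i=1}^n \epsilon_i\bigl(R(\hat\theta,x_i,a_i)-R(\theta_*,x_i,a_i)\bigr)
= \sum_{i=1}^n \epsilon_i\bigl(R(\hat\theta^c,x_i,a_i)-R(\theta_*,x_i,a_i)\bigr) + \sum_{i=1}^n \epsilon_i\bigl(R(\hat\theta,x_i,a_i)-R(\hat\theta^c,x_i,a_i)\bigr),
\]
where the second (discretization) term is bounded deterministically by $nB\epsilon_c$ using $|\epsilon_i|\le B$ and $\|R(\hat\theta,\cdot,\cdot)-R(\hat\theta^c,\cdot,\cdot)\|_\infty\le\epsilon_c$. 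For the first term, I would apply Lemma \ref{lemma:freedman} to the martingale difference sequence $\epsilon_i(R(\theta^c,x_i,a_i)-R(\theta_*,x_i,a_i))$ for each fixed $\theta^c\in\Theta^c$, whose conditional variance is controlled by $B^2 \EE_\pi|R(\theta^c,x,a)-R(\theta_*,x,a)|^2$ and whose range is $O(B^2)$, then take a union bound over $\Theta^c$ to pay a $\log N_\cR(\epsilon_c)$ factor.

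After union-bounding, I would use the AM-GM style inequality $\sqrt{ab}\le a/4+b$ (or Lemma \ref{lemma:sqrt-trick}) to absorb the $\sqrt{\sum_i (R(\hat\theta^c,x_i,a_i)-R(\theta_*,x_i,a_i))^2}$ factor from Freedman's bound back into the LHS of the basic inequality, leaving only an additive $O(B^2\log(N_\cR(\epsilon_c)/\delta))$ term plus the $O(nB\epsilon_c)$ discretization error; the constants $16$ and $4$ fall out from tracking the factors carefully.

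The main obstacle I expect is the bookkeeping when returning from $\hat\theta^c$ back to $\hat\theta$: the discretization introduces errors both inside the squared norm and inside the noise inner product, and both must be shown to scale at most linearly in $\epsilon_c$ with an explicit constant. Handling this requires that the noise be uniformly bounded (which holds since $r_i,R(\theta_*,x_i,a_i)\in[0,B]$ implies $|\epsilon_i|\le B$) and that the cover approximation be in $\|\cdot\|_\infty$. Once these two facts are combined with the Freedman/cover argument above, the stated bound follows.
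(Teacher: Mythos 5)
Your overall architecture matches the paper's proof of Lemma \ref{lemma:confidence}: the basic inequality from optimality of the least-squares estimator, an $\epsilon_c$-net of $\cR$ with a union bound, a martingale concentration bound on the noise correlation, and finally Lemma \ref{lemma:sqrt-trick} to absorb the square-root term. However, there is a concrete mismatch in your concentration step. You apply Freedman's inequality (Lemma \ref{lemma:freedman}) with the conditional variance bounded by $B^2\,\rE_\pi\bigl|R(\theta^c,x,a)-R(\theta_*,x,a)\bigr|^2$, i.e.\ the \emph{population} second moment (which is indeed the correct conditional variance under the i.i.d.\ filtration). But Freedman then yields a deviation of order $\sqrt{n B^2\,\rE_\pi\bigl[(R(\theta^c,x,a)-R(\theta_*,x,a))^2\bigr]\log(N_\cR(\epsilon_c)/\delta)}$, \emph{not} the empirical quantity $\sqrt{B^2\sum_{i}(R(\theta^c,x_i,a_i)-R(\theta_*,x_i,a_i))^2\log(N_\cR(\epsilon_c)/\delta)}$ that you propose to absorb. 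Since the left-hand side of the basic inequality is the empirical squared error, the absorption step as written does not go through: you are left with a population norm on the right that is not dominated by the empirical norm on the left without further work.

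There are two standard repairs. Either (i) run the concentration conditionally on the covariates $\{(x_i,a_i)\}_{i=1}^n$ (equivalently, use an Azuma--Hoeffding/self-normalized bound whose variance proxy is the predictable quantity $B^2\sum_i(R(\theta^c,x_i,a_i)-R(\theta_*,x_i,a_i))^2$), which is what the paper does and which directly produces the empirical square-root factor you want; or (ii) keep Freedman with the population variance and insert one more uniform concentration step over the net (in the spirit of Lemma \ref{lemma:generalization}) to convert $n\,\rE_\pi[(\cdot)^2]$ into $2\sum_i(\cdot)^2$ plus an $O(B^2\log(N_\cR(\epsilon_c)/\delta))$ remainder before invoking Lemma \ref{lemma:sqrt-trick}. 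The rest of your bookkeeping is fine: $|\epsilon_i|\le B$ holds since $r_i$ and $R(\theta_*,x_i,a_i)$ lie in $[0,B]$, the discretization term in the noise inner product is at most $nB\epsilon_c$, and replacing $\hat\theta^c$ by $\hat\theta$ costs another $O(nB\epsilon_c)$, consistent with the $4\epsilon_c n B$ term in the statement.
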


\begin{proof}
We have the following inequality for $\sum_{i = 1}^n (R(\hat\theta, x_i, a_i) - R(\theta_*, x_i, a_i))^2$, 
\begin{align*} 
    &\sum_{i = 1}^n (R(\hat\theta, x_i, a_i) - R(\theta_*, x_i, a_i))^2 \\&= \sum_{i = 1}^n (R(\hat\theta, x_i, a_i) - r_i)^2 - \sum_{i = 1}^n (R(\theta_*, x_i, a_i) - r_i)^2 \\&\quad + 2 \sum_{i = 1}^n (R(\hat\theta, x_i, a_i) - R(\theta_*, x_i, a_i)(r_i - R(\theta_*, x_i, a_i))
        \\&\leq 2 \sum_{i = 1}^n (R(\hat\theta, x_i, a_i) - R(\theta_*, x_i, a_i))(r_i - R(\theta_*, x_i, a_i)),
\end{align*}
where the last inequality follows from the fact that $\sum_{i = 1}^n (R(\hat\theta, x_i, a_i) - r_i)^2 \leq \sum_{i = 1}^n (R(\theta_*, x_i, a_i) - r_i)^2$.

We then consider an $\epsilon_c$-net $\cR^c$ of the reward function class $\cR$ where $\cR^c = \{R(\theta, \cdot, \cdot) |\theta \in \Theta^c \}$ with size $N_\cR(\epsilon_c)$. For any $R(\theta, \cdot, \cdot) \in \cR$, there exists $\theta^c$ such that $\|R(\theta, x, a) - R(\theta^c, x, a)\|_\infty \leq \epsilon_c$.
From Azuma-Hoeffding inequality, with probability at least $1 - \delta$, it holds for all $\theta \in \Theta^c$ that \begin{align*}
    &\sum_{i = 1}^n (R(\theta, x_i, a_i)- R(\theta_*, x_i, a_i))(r_i - R(\theta_*, x_i, a_i)) \\&\le \sqrt{2 B^2\sum_{i = 1}^n (R(\theta, x_i, a_i) - R(\theta_*, x_i, a_i))^2\log(2 N_\cR(\epsilon_c)/\delta)}.
\end{align*}
Then we further have with probability at least $1 - \delta$, there exists $\|R(\theta^c, \cdot, \cdot) - R(\hat \theta, \cdot, \cdot)\| \le \epsilon_c$ such that 
\begin{align*}
    &\sum_{i = 1}^n (R(\hat\theta, x_i, a_i)- R(\theta_*, x_i, a_i))(r_i - R(\theta_*, x_i, a_i)) \\&\le \sqrt{2 B^2\sum_{i = 1}^n (R(\theta, x_i, a_i) - R(\theta_*, x_i, a_i))^2\log(2 N_\cR(\epsilon_c)/\delta)} + 2\epsilon_c n B,
\end{align*}
which implies that \begin{align} 
    \sum_{i = 1}^n (R(\hat\theta, x_i, a_i) - R(\theta_*, x_i, a_i))^2 \le 16 B^2 \log(2 N_\cR(\epsilon_c)/\delta) + 4\epsilon_c n B \label{eq:confidence}
\end{align} by Lemma \ref{lemma:sqrt-trick}.
\end{proof}

With the above lemmas, we are now ready to prove the following lemma that bounds the estimation error of the reward function $R(\hat{\theta}, \cdot, \cdot)$ under the sampled policy $\pi_{\hat{\theta}_0}^\eta$.

\begin{lemma} \label{lemma:on-policy_error}
    Let $\hat{\theta}_0$ be the least squares estimator of the reward function based on the data $\{(x_i^0, a_i^0, r_i^0)\}_{i=1}^m$ generated from $\pi_0$ as defined in Algorithm \ref{alg:kl-bandits}. Then for any threshold $\epsilon_c > 0$,
    with probability at least $1 - 2\delta$, we have \begin{align*} 
        \rE_{\pi_{\hat{\theta}_0}^\eta} |R(\hat{\theta},x,a)-
        R(\theta_*,x,a)|^2 \le \frac{43B^2}{n} \log(2N_\cR(\epsilon_c)/\delta) + 10\epsilon_c(1 + m / n)B.
    \end{align*}
    
\end{lemma}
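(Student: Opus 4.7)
} The plan is to combine the on-policy generalization bound of Lemma \ref{lemma:generalization} (used to pass from empirical to population squared error at the pair $(\hat\theta,\theta_*)$) with the confidence bound of Lemma \ref{lemma:confidence} (applied in a slightly strengthened form to the pooled dataset $D_0 \cup \{(x_i,a_i,r_i)\}_{i=1}^n$) to control the second-stage empirical error. Throughout, I will work conditionally on $\hat\theta_0$, so that the samples $\{(x_i,a_i)\}_{i=1}^n$ are i.i.d. from $\pi_{\hat\theta_0}^\eta$ and Lemma \ref{lemma:generalization} applies uniformly over all pairs of parameters. A final union bound gives the claimed $1-2\delta$ probability.

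First, invoking Lemma \ref{lemma:generalization} with $\pi=\pi_{\hat\theta_0}^\eta$ and the sample $\{(x_i,a_i)\}_{i=1}^n$, and then specializing the uniform bound to the (random) pair $\theta_1=\hat\theta$, $\theta_2=\theta_*$, I obtain with probability at least $1-\delta$
\begin{align*}
\rE_{\pi_{\hat\theta_0}^\eta}|R(\hat\theta,x,a)-R(\theta_*,x,a)|^2 \le \frac{2}{n}\sum_{i=1}^n (R(\hat\theta,x_i,a_i)-R(\theta_*,x_i,a_i))^2 + \frac{32B^2}{3n}\log(2N_\cR(\epsilon_c)/\delta) + 10\epsilon_c B.
\end{align*}
This reduces the task to bounding the in-sample squared error of $\hat\theta$ on the second-stage data.

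Second, since $\hat\theta$ is the least-squares minimizer over the pooled dataset of size $m+n$, the standard basic inequality $\sum_{\text{all } i}(R(\hat\theta,z_i)-r_i)^2\le\sum_{\text{all }i}(R(\theta_*,z_i)-r_i)^2$ yields, after rearranging,
\begin{align*}
\sum_{i=1}^m (R(\hat\theta,x_i^0,a_i^0)-R(\theta_*,x_i^0,a_i^0))^2 + \sum_{i=1}^n (R(\hat\theta,x_i,a_i)-R(\theta_*,x_i,a_i))^2 \le 2\sum_{\text{all } i}(R(\hat\theta,z_i)-R(\theta_*,z_i))(r_i-R(\theta_*,z_i)).
\end{align*}
Running the exact same Azuma--Hoeffding plus $\epsilon_c$-covering argument as in the proof of Lemma \ref{lemma:confidence}, but on the pooled dataset of size $m+n$, gives with probability at least $1-\delta$
\begin{align*}
\sum_{i=1}^n (R(\hat\theta,x_i,a_i)-R(\theta_*,x_i,a_i))^2 \le 16B^2\log(2N_\cR(\epsilon_c)/\delta) + 4\epsilon_c(m+n)B,
\end{align*}
where I have dropped the non-negative first-stage sum on the left-hand side.

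Third, plugging this in-sample bound back into the generalization inequality and collecting terms, the leading $\log$-type contribution becomes $(32+32/3)(B^2/n)\log(2N_\cR(\epsilon_c)/\delta) = (128/3)(B^2/n)\log(2N_\cR(\epsilon_c)/\delta) \le (43 B^2/n)\log(2N_\cR(\epsilon_c)/\delta)$, while the $\epsilon_c$-terms combine to the desired $10\epsilon_c(1+m/n)B$ form after bounding small numerical constants. A union bound over the two high-probability events completes the proof with total failure probability at most $2\delta$. I do not anticipate a serious obstacle here; the main subtlety is merely to verify that Lemma \ref{lemma:generalization} can indeed be used at the random pair $(\hat\theta,\theta_*)$, which is legitimate because that lemma gives a bound that holds uniformly over all $(\theta_1,\theta_2)$ after an $\epsilon_c$-covering argument, and that the pooled-data confidence argument provides the correct $(m+n)\epsilon_c$ inflation from the discretization error.
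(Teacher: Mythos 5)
Your proposal is correct and follows essentially the same route as the paper's proof: apply Lemma \ref{lemma:generalization} on the second-stage data at the pair $(\hat\theta,\theta_*)$, control the in-sample error via the pooled least-squares basic inequality and the Azuma--Hoeffding-plus-covering argument of Lemma \ref{lemma:confidence} (with the $(m+n)\epsilon_c$ discretization inflation), drop the nonnegative first-stage sum, and union bound. The only difference is cosmetic: you are slightly more explicit about conditioning on $\hat\theta_0$ and about extending Lemma \ref{lemma:confidence} to the pooled dataset, and the residual constant bookkeeping in the $\epsilon_c$-terms is no looser than the paper's own.
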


\begin{proof} 
    By Lemma \ref{lemma:generalization}, we have with probability at least $1 - \delta$, the following upper bound holds for $\rE_{\pi_{\hat{\theta}_0}^\eta}|R(\theta_1, x, a) - R(\theta_2, x, a)|^2$, \begin{align} 
        &\rE_{\pi_{\hat{\theta}_0}^\eta}|R(\theta_1, x, a) - R(\theta_2, x, a)|^2 \notag\\&\le \frac{2}{n}\sum_{i = 1}^n |R(\theta_1,x_i,a_i)-R(\theta_2,x_i,a_i)|^2 + \frac{32B^2}{3n} \log(2N_\cR(\epsilon_c)/\delta) + 10\epsilon_c B. \label{eq:generalization-square-1}
    \end{align}
By Lemma \ref{lemma:confidence}, with probability at least $1 - \delta$ \begin{align} 
        \sum_{i = 1}^n |R(\theta_*,x_i,a_i)-R(\hat \theta,x_i,a_i)|^2 \le  16 B^2 \log(2 N_\cR(\epsilon_c)/\delta) + 4\epsilon_c (n + m)B. \label{eq:generalization-square-2}
    \end{align}
Then we can complete the proof using a union bound and substituting \eqref{eq:generalization-square-2} into \eqref{eq:generalization-square-1}.
\end{proof}

\begin{lemma} \label{lemma:coverage}
    If $m \ge 128 \eta^2 D^2 B^2 \cdot \log(2N_\cR(\epsilon_c)/\delta))$, and there exists a positive constant $c_{m,n}>0$ such that $n=c_{m,n}m$ in Algorithm \ref{alg:kl-bandits} and Assumption \ref{assumption:data coverage} holds, then by taking $\epsilon_c\le \min\{B,(8(1+c_{m,n})B\eta^2D^2)^{-1}\}$, with probability at least $1 - 3\delta$, we have \begin{align*} 
        \eta |R(\hat{\theta}_0,x,a)-R(\theta_*,x,a)| \leq 1, \quad \eta |R(\hat{\theta},x,a)-R(\theta_*,x,a)| \leq 1
    \end{align*} for any pair $(x, a) \in \cX \times \cA$ such that $\pi_0(a|x) > 0$.
\end{lemma}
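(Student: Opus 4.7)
The plan is to establish both uniform bounds by the same three-step chain: (i) use the confidence bound (Lemma~\ref{lemma:confidence}) to control the in-sample squared error on the $m$ first-stage samples, which are drawn from $d_0\times\pi_0$; (ii) invoke the on-policy generalization bound (Lemma~\ref{lemma:generalization}) with $\pi=\pi_0$ to convert that in-sample error into a bound on $\E_{x\sim d_0,a\sim\pi_0}[(R(\hat\theta,x,a)-R(\theta_*,x,a))^2]$ (and similarly for $\hat\theta_0$); (iii) feed this expected $\pi_0$-squared error into the coverage inequality of Definition~\ref{assumption:data coverage} to transfer it into a pointwise $\ell_\infty$-type bound on every $(x,a)$ with $\pi_0(a|x)>0$. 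The sample-size and $\epsilon_c$ conditions are exactly the thresholds that make the resulting upper bound $\le\eta^{-2}$.

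\textbf{Bound for $\hat\theta_0$.} I would first apply Lemma~\ref{lemma:confidence} to the stage-1 dataset $D_0=\{(x_i^0,a_i^0,r_i^0)\}_{i=1}^m$, giving with probability $\ge 1-\delta$ that $\sum_{i=1}^m(R(\hat\theta_0,x_i^0,a_i^0)-R(\theta_*,x_i^0,a_i^0))^2\le 16B^2\log(2N_\cR(\epsilon_c)/\delta)+4\epsilon_c m B$. Then Lemma~\ref{lemma:generalization} with $\pi=\pi_0$, which holds uniformly over all $(\theta_1,\theta_2)$, converts this into $\E_{\pi_0}[(R(\hat\theta_0,x,a)-R(\theta_*,x,a))^2]\lesssim B^2\log(2N_\cR(\epsilon_c)/\delta)/m+\epsilon_c B$. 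Under $m\ge 128\eta^2 D^2 B^2\log(2N_\cR(\epsilon_c)/\delta)$ and $\epsilon_c\le(8(1+c_{m,n})B\eta^2 D^2)^{-1}$ this is at most $(\eta D)^{-2}/4$ (with room to spare). Multiplying by $\eta^2 D^2$ and applying Definition~\ref{assumption:data coverage} with the pair $(\theta,\theta')=(\theta_*,\hat\theta_0)$ then yields $\eta^2(R(\hat\theta_0,x,a)-R(\theta_*,x,a))^2\le 1$ for every $(x,a)$ with $\pi_0(a|x)>0$.

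\textbf{Bound for $\hat\theta$.} The same chain applies, but with one bookkeeping subtlety: $\hat\theta$ is the least-squares estimator on the \emph{combined} dataset of $m+n$ samples, so Lemma~\ref{lemma:confidence} run on the full dataset gives $\sum_{i=1}^m(R(\hat\theta,x_i^0,a_i^0)-R(\theta_*,x_i^0,a_i^0))^2+\sum_{i=1}^n(R(\hat\theta,x_i,a_i)-R(\theta_*,x_i,a_i))^2\le 16B^2\log(2N_\cR(\epsilon_c)/\delta)+4\epsilon_c(m+n)B$. Dropping the nonnegative second sum leaves an in-sample bound on the stage-1 residuals of $\hat\theta$ with the $\epsilon_c$ slack inflated by a factor $(1+n/m)=(1+c_{m,n})$. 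Reusing the single application of Lemma~\ref{lemma:generalization} (which is uniform in $(\theta_1,\theta_2)$ and therefore covers $\hat\theta$ without a fresh event), I obtain $\E_{\pi_0}[(R(\hat\theta,x,a)-R(\theta_*,x,a))^2]\lesssim B^2\log(2N_\cR(\epsilon_c)/\delta)/m+(1+c_{m,n})\epsilon_c B$. The choices $m\ge 128\eta^2 D^2 B^2\log(\cdot)$ and $\epsilon_c\le(8(1+c_{m,n})B\eta^2 D^2)^{-1}$ are tuned precisely so that both terms are $\le(4\eta^2 D^2)^{-1}$; one final application of the coverage condition gives $\eta|R(\hat\theta,x,a)-R(\theta_*,x,a)|\le 1$.

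\textbf{Union bound and main obstacle.} The three failure events (one invocation of Lemma~\ref{lemma:generalization} on the $\pi_0$-subsample, one invocation of Lemma~\ref{lemma:confidence} for $\hat\theta_0$, one invocation of Lemma~\ref{lemma:confidence} for $\hat\theta$ on the combined data) together fail with probability at most $3\delta$. The only nontrivial point I expect to wrestle with is justifying the chain for $\hat\theta$: because the stage-2 samples are drawn from $\pi_{\hat\theta_0}^\eta$ rather than $\pi_0$, I cannot directly generalize on the combined data via Lemma~\ref{lemma:generalization} (which requires i.i.d.\ samples from a single policy). Discarding the stage-2 term in the confidence bound and generalizing only on the $m$ stage-1 samples sidesteps this, at the price of the $(1+c_{m,n})$ factor that shows up in the hypothesis on $\epsilon_c$. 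Verifying that Lemma~\ref{lemma:confidence} is still applicable to $\hat\theta$ despite the mixed-distribution data (its proof uses Azuma–Hoeffding with conditional means, which only requires the Bernoulli/bounded-noise structure for each individual sample) is the last item to check.
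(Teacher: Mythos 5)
Your proposal is correct and follows essentially the same route as the paper's own proof: one uniform application of Lemma~\ref{lemma:generalization} with $\pi=\pi_0$, Lemma~\ref{lemma:confidence}-type in-sample bounds for both $\hat\theta_0$ and $\hat\theta$ (the latter on the combined objective, dropping the nonnegative stage-2 sum, which is exactly where the paper's $4\epsilon_c(m+n)B$ slack and your $(1+c_{m,n})$ factor come from), a union bound giving $3\delta$, and the data-coverage condition of Definition~\ref{assumption:data coverage} to convert the $\pi_0$-expected squared error into the pointwise bound $\eta|R(\cdot,x,a)-R(\theta_*,x,a)|\le 1$. The subtlety you flag about applying the confidence bound to $\hat\theta$ despite the mixed sampling is handled implicitly in the paper the same way you propose (the martingale/Azuma argument only needs conditionally mean-zero bounded noise per sample), so no gap remains beyond constant-factor bookkeeping, which the paper itself does not track precisely.
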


\begin{proof} 
    By Lemma \ref{lemma:generalization}, with probability at least $1 - \delta$, for all $\theta_1, \theta_2 \in \Theta$, we have \begin{align*} 
        \rE_{\pi_0}|R(\theta_1, x, a) - R(\theta_2, x, a)|^2 \le \frac{2}{m} \sum_{i = 1}^m |R(\theta_1,x_i^0,a_i^0)-R(\theta_2,x_i^0,a_i^0)|^2 + \frac{32 B^2}{3m} \log(2N_\cR(\epsilon_c)/\delta).
    \end{align*}
By Lemma \ref{lemma:confidence}, with probability at least $1 - \delta$, we have \begin{align*} 
        \sum_{i = 1}^m |R(\hat \theta_0, x_i^0, a_i^0) - R(\theta_*, x_i^0, a_i^0)|^2 \le 16 B^2 \log(2 N_\cR(\epsilon_c)/\delta) + 4\epsilon_c m .
    \end{align*}
Also, with probability at least $1 - \delta$, we have \begin{align*} 
        \sum_{i = 1}^m |R(\theta_*, x_i^0, a_i^0) - R(\hat \theta, x_i^0, a_i^0)|^2 \le 16 B^2 \log(2 N_\cR(\epsilon_c)/\delta) + 4\epsilon_c (m + n)B.
    \end{align*}
Similar to the proof of Lemma \ref{lemma:on-policy_error}, we have if $m \ge 128\eta^2 D^2 B^2 \cdot \log(2N_\cR(\epsilon_c)/\delta),~n=c_{m,n}n$, then with probability at least $1 - 3\delta$, \begin{align*} 
        \ \rE_{\pi_0}|R(\theta_*, x, a) - R(\hat \theta_0, x, a)|^2 \le 1 / \eta^2 D^2, \quad \rE_{\pi_0}|R(\theta_*, x, a) - R(\hat \theta, x, a)|^2 \le 1 / \eta^2 D^2.
    \end{align*} which implies that $\eta |R(\hat{\theta}_0,x,a)-R(\theta_*,x,a)| \leq 1$ and $\eta |R(\hat{\theta},x,a)-R(\theta_*,x,a)| \leq 1$ 
     for all $(x, a) \in \cX \times \cA$ such that $\pi_0(a|x) > 0$.
\end{proof}

\begin{lemma}[Restatement of Lemma \ref{lm:decompose}]\label{lm:decompose_re}
For any estimator $\hat\theta\in\Theta$, and the policy $\pi_{\hat{\theta}}^\eta$ satisfying Definiton \ref{assu:oracle1}, we have
\begin{align*}
    Q(\pi^*)-Q(\pi_{\hat\theta}^\eta) &= \eta\E_{x\sim d_0}\Big[\sum_{a\in\cA}\pi_f^{\eta}(a|x)\Delta^2(x,a) - \sum_{a_1,a_2\in\cA}\pi_f^{\eta}(a_1|x)\pi_f^{\eta}(a_2|x)\Delta(x,a_1)\Delta(x,a_2)\Big]\\
    &\le \eta\E_{x\sim d_0}\Big[\sum_{a\in\cA}\pi_f^{\eta}(a|x)\Delta^2(x,a)\Big],
\end{align*}
where $\Delta(x,a)=R(\hat\theta, x, a) - R(\theta_*, x, a)$, $f(\cdot,\cdot)=\gamma R(\hat\theta, \cdot, \cdot) + (1 - \gamma) R(\theta_*, \cdot, \cdot)~(\gamma\in(0,1))$ the inequality uses the fact that second term on the right-hand side of the equality is $(\sum_{a\in\cA}\pi_f^{\eta}(a|x)\Delta(x,a))^2 \ge 0$. 
\end{lemma}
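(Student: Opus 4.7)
The plan is to combine the reduction already derived in the paragraph preceding the lemma with a second-order Taylor-type expansion of the log-partition function along the straight line between $R(\theta_*,x,\cdot)$ and $R(\hat\theta,x,\cdot)$ in reward space. Concretely, I would start from
\[
    Q(\pi^*) - Q(\pi_{\hat\theta}^\eta) = -\eta^{-1}\E_{x\sim d_0}\bigl[J(x;\hat\theta) - J(x;\theta_*)\bigr],
\]
where $J(x;\theta)=\log Z_\theta^\eta(x)-\eta\,\EE_{a\sim\pi_\theta^\eta(\cdot|x)}[R(\theta,x,a)-R(\theta_*,x,a)]$, and translate the claim into a statement about how $J$ varies in $\theta$.

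For a fixed $x$, I would introduce the univariate log-partition function
\[
F(t) \;:=\; \log\sum_{a\in\cA}\pi_0(a|x)\exp\bigl(\eta R(\theta_*,x,a)+t\,\eta\,\Delta(x,a)\bigr),\qquad t\in[0,1],
\]
so that $F(0)=\log Z_{\theta_*}^\eta(x)$ and $F(1)=\log Z_{\hat\theta}^\eta(x)$, and such that the softmax policy $\pi_t^\eta(a|x)\propto\pi_0(a|x)\exp(\eta R(\theta_*,x,a)+t\eta\Delta(x,a))$ coincides with $\pi_f^\eta$ when $t=\gamma$ and $f=\gamma R(\hat\theta,\cdot,\cdot)+(1-\gamma)R(\theta_*,\cdot,\cdot)$. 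Direct differentiation using the standard cumulant generating function identities gives
\[
F'(t) \;=\; \eta\sum_{a}\pi_t^\eta(a|x)\Delta(x,a),\qquad
F''(t) \;=\; \eta^2\Bigl[\sum_a\pi_t^\eta(a|x)\Delta^2(x,a)-\bigl(\sum_a\pi_t^\eta(a|x)\Delta(x,a)\bigr)^2\Bigr],
\]
which is exactly $\eta^2$ times the variance of $\Delta(x,\cdot)$ under $\pi_t^\eta$, and in particular non-negative.

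Next I would observe that
\[
J(x;\hat\theta)-J(x;\theta_*) \;=\; F(1)-F(0)-F'(1),
\]
and rewrite the right-hand side via integration by parts (or equivalently a Taylor expansion with integral remainder about $t=1$) as $-\int_0^1 t\,F''(t)\,dt$. Because $t\mapsto F''(t)$ is continuous and non-negative, the weighted mean value theorem for integrals produces some $\gamma\in(0,1)$ for which this integral equals a constant multiple of $F''(\gamma)$. Substituting the variance formula for $F''(\gamma)$, taking expectation over $x\sim d_0$, and identifying $\pi_\gamma^\eta$ with $\pi_f^\eta$ for $f=\gamma R(\hat\theta,\cdot,\cdot)+(1-\gamma)R(\theta_*,\cdot,\cdot)$ yields the claimed equality. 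The inequality then follows immediately by dropping the non-negative cross term $\bigl(\sum_a\pi_f^\eta(a|x)\Delta(x,a)\bigr)^2\ge0$.

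\paragraph{Expected obstacle.} The main technical point is isolating the intermediate policy $\pi_f^\eta$ cleanly. A naive Lagrange-remainder Taylor expansion of $J(x;\theta)$ in $\theta$ about $\theta_*$ at a midpoint $R_\gamma=R(\theta_*,x,\cdot)+\gamma\Delta(x,\cdot)$ does not directly produce the variance expression, because the Hessian of $J$ at a general interpolation point contains additional $\eta^3$ cross terms that involve third central moments of $\Delta(x,\cdot)$ under $\pi_\gamma^\eta$. Routing the remainder through the scalar cumulant function $F(t)$, whose second derivative equals the variance exactly for every $t$, and then invoking the weighted mean value theorem for integrals, bypasses these extraneous cross terms and lets the existential $\gamma$ appear in a single place, namely inside $\pi_f^\eta$.
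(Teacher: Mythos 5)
Your proposal is correct and follows the same skeleton as the paper's proof: both reduce the gap to $-\eta^{-1}\E_{x\sim d_0}[J(x;\hat\theta)-J(x;\theta_*)]$ and identify it with the variance of $\Delta(x,\cdot)$ under a Gibbs policy built from an interpolated reward $f=\gamma R(\hat\theta,\cdot,\cdot)+(1-\gamma)R(\theta_*,\cdot,\cdot)$; the difference is how the mean-value step is executed. The paper differentiates $J$ with respect to the perturbation $\Delta_f(x,a)$ and applies a first-order multivariate mean value theorem, so the intermediate gradient dotted with $\Delta$ yields $\eta\gamma$ times the variance form; you route everything through the scalar cumulant function $F(t)$, use $J(x;\hat\theta)-J(x;\theta_*)=F(1)-F(0)-F'(1)=-\int_0^1 t\,F''(t)\,dt$, and invoke the weighted mean value theorem for integrals, yielding $\tfrac{\eta}{2}$ times the same variance form; this makes the non-negative curvature manifest and, as you note, avoids the third-central-moment terms of a naive second-order expansion (the paper avoids them by stopping at first order). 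Two minor caveats: your constant $\tfrac12$ (like the paper's $\gamma$) does not literally reproduce the coefficient-$\eta$ equality printed in the lemma, but the inequality actually used downstream follows in both cases; and to get a single $\gamma$ rather than an $x$-dependent one, apply the weighted MVT to $\E_{x\sim d_0}[F_x(t)]$, mirroring how the paper applies its MVT after taking the expectation over $x$.
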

\begin{proof}[Proof of Lemma \ref{lm:decompose}]
We have \begin{align*} 
            &\EE_{\pi_{\theta_*}^\eta} \bigg[R(\theta_*,x,a) - \frac{1}{\eta}\log\frac{\pi_{\theta_*}^{\eta}(a|x)}{\pi_0(a|x)}\bigg] - \EE_{\pi_{\hat\theta}^\eta} \bigg[R(\theta_*, x, a) - \frac{1}{\eta}\log \frac{\pi_{\hat\theta}^\eta(a|x)}{\pi_0(a|x)}\bigg] 
            \\&= \frac{1}{\eta} \EE_{\pi_{\theta_*}^\eta} \bigg[\log \frac{\pi_0(a|x) \cdot \exp\bigl(\eta R(\theta_*, x, a)\bigr)}{\pi_{\theta_*}^\eta(a|x)}\bigg] - \frac{1}{\eta} \EE_{\pi_{\hat \theta}^\eta} \bigg[\log \frac{\pi_0(a|x) \cdot \exp\bigl( \eta R(\theta_*, x, a)\bigr)}{\pi_{\hat \theta}^\eta(a|x)}\bigg]
            \\&= \frac{1}{\eta} \rE_{x \sim d_0} \big[\log Z_{\theta_*}^\eta(x)\big] - \frac{1}{\eta} \rE_{x \sim d_0} \big[\log Z_{\hat\theta}^\eta(x)\big] - \rE_{x \sim d_0} \bigg[\sum_{a \in \cA} \pi_{\hat \theta}^\eta(a|x) \cdot \big(R(\theta_*, x, a) - R(\hat \theta, x, a)\big)\bigg],
        \end{align*}
        where the first equality follows from the definition of the KL-divergence, the second equality follows from Lemma \ref{lem:kl_solu}.

    For an arbitrary reward function $f: \cX \times \cA \to \RR$, let $\Delta_f(x, a) = f(x, a) - R(\theta_*, x, a)$. Consider the following first derivative of $J(f) = \log Z_f^\eta(x) - \eta \sum_{a \in \cA} \pi_{f}^\eta(a|x) \cdot \Delta_f(x, a)$, where $Z_f^\eta(x) = \sum_{a \in \cA} \pi_0(a|x) \cdot \exp(\eta \cdot f(x, a))$ and $\pi_f^\eta(a|x) \propto \pi_0(a|x) \cdot \exp(\eta \cdot f(x, a))$.
        \begin{align*} 
            & \frac{\partial}{\partial \Delta_f(x, a)} \bigg[\log Z_{f}^\eta(x) - \eta \sum_{a \in \cA} \pi_{f}^\eta(a|x) \cdot \Delta_f(x, a)\bigg] \\&= \frac{1}{Z_f^\eta(x)} \cdot \pi_0(a|x) \exp\bigl(\eta \cdot f(x, a)\bigr) \cdot \eta - \eta \cdot \pi_f^\eta(a|x) \\&\quad - \eta \cdot \Delta_f(x, a) \cdot \frac{\pi_0(a|x) \cdot \exp\bigl(\eta \cdot f(x, a)\bigr)}{Z_f^\eta(x)} \cdot \eta + \eta \cdot \Delta_f(x, a) \cdot \frac{\bigl[\pi_0(a|x) \cdot \exp\bigl(\eta \cdot f(x, a)\bigr)\bigr]^2}{[Z_f^\eta(x)]^2} \cdot \eta
            \\&\quad + \eta \sum_{a' \in \cA \backslash \{a\}} \frac{\pi_0(a'|x) \cdot \exp\bigl(\eta \cdot f(x, a')\bigr)}{Z_f^\eta(x)} \cdot \eta \cdot \Delta_f(x, a') \cdot \frac{\pi_0(a|x) \cdot \exp\bigl(\eta \cdot f(x, a)\bigr)}{Z_f^\eta(x)}
            \\&= -\eta^2 \pi_f^\eta(a|x) \Delta_f(x, a) + \eta^2 [\pi_f^\eta(a|x)]^2 \cdot \Delta_f(x, a) + \eta^2 \sum_{a' \in \cA \backslash \{a\}} \pi_f^\eta(a'|x) \pi_f^\eta(a|x) \Delta_f(x, a')
        \end{align*}
        where the first equality is derived by taking the derivative of $\log Z_{f}^\eta(x)$ and the second term with respect to $\Delta_f$.
        Therefore, by the Mean Value Theorem, there exists an $f(\cdot, \cdot) = \gamma R(\hat\theta, \cdot, \cdot) + (1 - \gamma) R(\theta_*, \cdot, \cdot)$ for some $\gamma \in [0,1]$ such that 
        \begin{align*} 
            &\rE_{x \sim d_0}[J(R(\hat\theta, \cdot, \cdot)) - J(R(\theta_*, \cdot, \cdot))] = \frac{1}{\eta}\rE_{x \sim d_0} \biggl[ -\eta^2 \sum_{a \in \cA}\pi_f^\eta(a|x) \cdot \gamma \cdot \bigl(R(\hat\theta, x, a) - R(\theta_*, x, a)\bigr)^2\biggr] \\&\quad + \frac{1}{\eta} \rE_{x \sim d_0} \biggl[\gamma \eta^2 \sum_{a_1 \in \cA} \sum_{a_2 \in \cA} \pi_f^\eta(a_1|x)\pi_f^\eta(a_2|x)\bigl(R(\hat\theta, x, a_1) - R(\theta_*, x, a_1)\bigr)\bigl(R(\hat\theta, x, a_2) - R(\theta_*, x, a_2)\bigr) \biggr]
            \\&\ge -\eta \cdot \EE_{\pi_f^\eta} \bigl[\bigl(R(\hat\theta, x, a) - R(\theta_*, x, a)\bigr)^2\bigr]
        \end{align*}
        where the last inequality holds since \begin{align*} &\sum_{a_1 \in \cA} \sum_{a_2 \in \cA} \pi_f^\eta(a_1|x)\pi_f^\eta(a_2|x)\bigl(R(\hat\theta, x, a_1) - R(\theta_*, x, a_1)\bigr)\bigl(R(\hat\theta, x, a_2) - R(\theta_*, x, a_2)\bigr) \\&= \big[\rE_{a \sim \pi_f^\eta(\cdot|x)}[R(\hat\theta, x, a) - R(\theta_*, x, a)]\big]^2 \ge 0.
        \end{align*}
\end{proof}

Now, we are ready to prove the theorem.
\begin{proof}[Proof of Theorem \ref{thm:kl-bandits}]
By Lemma \ref{lm:decompose}, we have
\begin{align*}
    Q(\pi^*)-Q(\pi_{\hat\theta}^\eta) 
    &\le \eta\E_{x\sim d_0}\Big[\sum_{a\in\cA}\pi_f^{\eta}(a|x)\big(R(\hat{\theta},x,a)-R(\theta_*x,a)\big)^2\Big].
\end{align*}
    By Lemma \ref{lemma:coverage}, if $m \ge  128 \eta^2 D^2 B^2 \cdot \log(2N_\cR(\epsilon_c)/\delta)$, for any $(x, a) \in \cX \times \cA$ such that $\pi_0(a|x) > 0$, it holds that \begin{align*} 
        \eta |R(\hat{\theta}_0,x,a)-R(\theta_*,x,a)| \leq 1, \quad \eta |R(\hat{\theta},x,a)-R(\theta_*,x,a)| \leq 1,
        \end{align*}
        which means that for any $(x, a) \in \cX \times \cA$
        $$\frac{\pi_f^\eta(a|x)}{\pi_{\hat\theta_0}^\eta(a|x)}  \le e^4.$$
    Let $\epsilon_c = \min\{\frac{\epsilon}{(1+c_{m,n}^{-1})B}, \frac{1}{8(1+c_{m,n})B\eta^2D^2}, B\}$. By Lemma \ref{lemma:on-policy_error}, if $m \ge  128 \eta^2 D^2 B^2 \cdot \log(2N_\cR(\epsilon_c)/\delta)$ and $ n \ge \eta / \epsilon \cdot B^2 \log (N_\cR(\epsilon_c)/\delta)$ and $n=c_{m,n}m$ then with high probability the output policy $\pi_{\hat \theta}^\eta$ is $O(\epsilon)$ optimal. 
\end{proof}

\subsection{Proof of Corollary \ref{cor:upper-bound-bandit-local}}
\begin{proof}[Proof of Corollary \ref{cor:upper-bound-bandit-local}]
    The proof follows the same lines as Theorem \ref{thm:upper-bound-rlhf} by replacing the data coverage condition with the local-coverage condition. 
    It still holds that \begin{align*} 
        Q(\pi^*) - Q(\pi_{\hat \theta_0}^\eta) \le \eta \cdot \EE_{\pi_f^\eta} \bigl[\bigl(R(\hat\theta_0, x, a) - R(\theta_*, x, a) \bigr)^2\bigr]
    \end{align*}
    where $\pi_f^\eta(a|x) \propto \pi_0(a|x) \cdot \exp(\eta \cdot f(x, a))$ and $f(\cdot, \cdot) = \gamma R(\hat\theta_0, \cdot, \cdot) + (1 - \gamma) R(\theta_*, \cdot, \cdot)$ for some $\gamma \in (0, 1)$. 
    Thus, We have $\KL(\pi_f^\eta(a|x) \| \pi_0) \le 2\eta B$, which further implies that \begin{align*} 
        Q(\pi^*) - Q(\pi_{\hat \theta}^\eta) \le \eta \cdot C_{\rho_\KL} \cdot O\bigg(\frac{1}{n} B\log(N_\cR(\epsilon_c) / \delta) + B(1+c_{m,n}^{-1}) \epsilon_c\bigg)
    \end{align*}
    by Lemma \ref{lemma:confidence-reward}.
    Then we can conclude by substituting the value of $m$ into the suboptimality gap. 
\end{proof}

\section{Proofs from Section \ref{sec:rlhf}}

\subsection{Proof of Theorem \ref{thm:lower-bound-rlhf}}
\begin{proof}[Proof of Theorem \ref{thm:lower-bound-rlhf}]
    The proof follows a similar construction as the one for Theorem \ref{thm:lower-bound-bandits}.
    Consider a simple case when $|\cX| = M$ and $|\cA| = 2$. We suppose that the context $x$ is drawn uniformly from $\cX$ at the beginning of each round.
    Let $\Theta$ be the set consisting of mappings from $\cX$ to $\cA = \{0, 1\}$. For each $\theta \in \Theta$, we have $R(\theta, x, a) = \begin{cases}
        c & \text{if } a = \theta(x), \\
        0 & \text{if } a \neq \theta(x),
    \end{cases}$ where $c \in (0, 1 / 4)$ is a constant, and $\theta(x)$ is the optimal action under context $x$ when the model is $\theta$.
    
    We pick a pair of model $\theta_1, \theta_2$ in $\Theta$, such that $\theta_1(x) = \begin{cases} 
    \theta_2(x) & \text{if } x \neq x_0, \\
    1 - \theta_2(x) & \text{if } x = x_0.
    \end{cases}$
    
    We denote by $\PP_{\theta}$, $\rE_{\theta}$ the probability measure and expectation under the model $\theta$.

    We have the following upper bound for two Bernoulli distribution $y_1 \sim Bernoulli(\sigma(c))$ and $y_2 \sim Bernoulli(\sigma(-c))$ with $\sigma(x) = 1 / (1 + \exp(-x))$: \begin{align*}
        \sigma(c) \log \frac{\sigma(c)}{\sigma(-c)} + \sigma(-c) \log \frac{\sigma(-c)}{\sigma(c)} &= 2 \cdot (\frac{1}{2} - \sigma(-c)) \log \frac{\sigma(c)}{\sigma(-c)}
        \\&= \frac{1-e^{-c}}{1 + e^{-c}} \cdot \log \frac{1 + e^c}{1 + e^{-c}}
        \\&\le \frac{1-e^{-c}}{1 + e^{-c}} \cdot \frac{e^c - e^{-c}}{1 + e^{-c}}
        \\&\le \frac{c \cdot (c + e^{\frac{1}{4}} c)}{[(1 + e^{-\frac{1}{4}})]^2} \le c^2,
    \end{align*}
    where the first equality follows from the fact that $\sigma(-c) = 1 - \sigma(c)$, the first inequality holds since $\log x \le x - 1$, and the second inequality holds since $c \le 1 / 4$.
    
    Applying Pinsker's inequality (Lemma \ref{lemma:pinsker}), we have for all event $A$ measurable with respect to the filtration generated by the observations, \begin{align*} 
        |\PP_{\theta_1}(A) - \PP_{\theta_2}(A)|  \le \sqrt{c^2 \rE_{\theta_1}[N(x_0)]} = \sqrt{c^2 T/M},
    \end{align*} where the first inequality follows from the chain rule of KL divergence, and the fact that $\rE_{\theta_1}[N(x_0)] = T/M$.
    
    Set $A$ to be the event that $\pi_{\text{out}}(\theta_1(x_0)|x_0) > 1 /2$. Then we have \begin{align*} 
        \PP_{\theta_1}(\pi_{\text{out}}(\theta_1(x_0)|x_0) \le 1 / 2) + \PP_{\theta_2}(\pi_{\text{out}}(\theta_2(x_0)|x_0) \le 1 / 2) \ge 1 - |\PP_{\theta_1}(A) - \PP_{\theta_2}(A)| \ge 1 - \sqrt{c^2 T/M}.
    \end{align*}
    If the model $\theta$ is uniformly drawn from $\Theta$, then we have \begin{align*} 
        \rE_{\theta \sim \text{Unif}(\Theta)}\PP_{\theta}(\pi_{\text{out}}(\theta(x_0)) \le 1 / 2) \ge \frac{1}{2} - \sqrt{c^2 T / 4M}
    \end{align*} for an arbitrary $x_0$. 
    
    Then we consider the following suboptimality gap: 
    \begin{align*} 
        &\rE_{\pi_{\theta_*}^\eta} \bigg[R(\theta_*, x, a) - \frac{1}{\eta}\log \frac{\pi_{\theta_*}^\eta(a|x)}{\pi_0(a|x)}\bigg] - \rE_{\pi_{\text{out}}} \bigg[R(\theta_*, x, a) - \frac{1}{\eta}\log \frac{\pi_{\text{out}}(a|x)}{\pi_0(a|x)}\bigg] 
        \\&= \frac{1}{\eta} \rE_{\pi_{\theta_*}^\eta} \bigg[\log \frac{\pi_0(a|x) \cdot \exp\bigl(\eta R(\theta_*, x, a)\bigr)}{\pi_{\theta_*}^\eta(a|x)}\bigg] - \frac{1}{\eta} \rE_{\pi_{\text{out}}} \bigg[\log \frac{\pi_0(a|x) \cdot \exp\bigl( \eta R(\theta_*, x, a)\bigr)}{\pi_{\text{out}}(a|x)}\bigg]
        \\&= \frac{1}{\eta} \rE_{\pi_{\text{out}}}\Bigl[\log \frac{\pi_{\text{out}}(a|x)}{\pi^*(a|x)}\Bigr],
    \end{align*}
    where the last equality follows from the fact that $\pi_{\theta_*}^\eta \propto \pi_0(a|x) \cdot \exp(\eta R(\theta_*, x, a))$.
    To bound the suboptimality gap, we further have \begin{align} 
        &\rE_{\theta \sim \text{Unif}(\Theta)}\rE_{\pi_{\text{out}}}\Bigl[\log \frac{\pi_{\text{out}}(a|x)}{\pi^*(a|x)}\Bigr] \notag
        \\&= \rE_{\theta \sim \text{Unif}(\Theta)} \frac{1}{M} \sum_{x \in \cX} \rE_{a \sim \pi_{\text{out}}(\cdot | x)} \Bigl[\log \frac{\pi_{\text{out}}(a|x)}{\pi^*(a|x)}\Bigr] \notag
        \\&\ge \rE_{\theta \sim \text{Unif}(\Theta)} \frac{1}{M} \sum_{x \in \cX} \PP_{\theta}(\pi_{\text{out}}(\theta(x)) \le 1 / 2) \cdot \biggl[\frac{1}{2}\log \frac{1 + \exp(-\eta c)}{2} + \frac{1}{2} \log \frac{1 + \exp(\eta c)}{2}\biggr] \notag
        \\&\ge \Bigl(\frac{1}{2} - \sqrt{c^2T / 4M}\Bigr)\Bigl[\frac{1}{2}\log \frac{1 + \exp(-\eta c)}{2} + \frac{1}{2} \log \frac{1 + \exp(\eta c)}{2}\Bigr] \label{eq:lower:kl:2}
    \end{align}
    Note that \begin{align*} 
        \frac{\text{d}}{\text{d} u} \Bigl[\frac{1}{2} \log \frac{1 + e^{-u}}{2} + \frac{1}{2} \log \frac{1 + e^u}{2}\Bigr] \Big|_{u = 0} &= \frac{1}{2} \Bigl[\frac{1}{1 + \exp(-u)} - \frac{1}{1 + \exp(u)}\Bigr] \Big|_{u = 0} = 0, \\
        \frac{\text{d}^2}{\text{d} u^2}\Bigl[\frac{1}{2} \log \frac{1 + e^{-u}}{2} + \frac{1}{2} \log \frac{1 + e^u}{2}\Bigr] &= \frac{\exp(u)}{[1 + \exp(u)]^2}. 
    \end{align*}
    Thus, applying Taylor's expansion on the right-hand side of \eqref{eq:lower:kl:2}, we have \begin{align*} 
        \rE_{\theta \sim \text{Unif}(\Theta)}\rE_{\pi_{\text{out}}}\Bigl[\log \frac{\pi_{\text{out}}(a|x)}{\pi^*(a|x)}\Bigr] \ge \frac{1}{2} \cdot \Bigl(\frac{1}{2} - \sqrt{c^2T / 4M}\Bigr) \eta^2c^2 \cdot \frac{1}{3 + \exp(\eta c)}
    \end{align*}
    When $\epsilon < 1 / 64 \eta$, we can set $c = 8\sqrt{\epsilon/ \eta} $. To achieve a suboptimality gap of $\epsilon$, we need to satisfy: \begin{align*} 
        \frac{1}{2} \cdot \Bigl(\frac{1}{2} - \sqrt{c^2T / 4M}\Bigr) \eta^2c^2 \cdot \frac{1}{3 + \exp(\eta c)} \le \eta \epsilon, 
    \end{align*}
    indicating that $T \ge \frac{\eta M}{512 \epsilon} = \Omega(\frac{\eta M}{\epsilon})$. 
    
    When $\epsilon \ge 1 / 64 \eta$, or equivalently, $\eta \ge 1 / 64 \epsilon$, we employ a different lower bound for \eqref{eq:lower:kl} as follows: 
    \begin{align} 
        \frac{1}{2}\log \frac{1 + \exp(-\eta c)}{2} + \frac{1}{2} \log \frac{1 + \exp(\eta c)}{2} &= \frac{1}{2} \log \frac{2 + \exp(\eta c) + \exp(-\eta c)}{4} \notag
        \\&\ge \frac{1}{2} \cdot \frac{1}{2} \Bigl(\log \frac{\exp(\eta c) + \exp(-\eta c)}{2}\Bigr) \notag
        \\&\ge \frac{1}{4}(\eta c - \log 2), \label{eq:lower:kl2:2}
    \end{align}
    where the first inequality follows from Jensen's inequality.
    Substituting \eqref{eq:lower:kl2:2} into \eqref{eq:lower:kl:2}, we have \begin{align*} 
        \epsilon \ge \frac{1}{\eta} \rE_{\theta \sim \text{Unif}(\Theta)}\rE_{\pi_{\text{out}}}\Bigl[\log \frac{\pi_{\text{out}}(a|x)}{\pi^*(a|x)}\Bigr] \ge \frac{1}{4} \cdot \Bigl(\frac{1}{2} - \sqrt{c^2T / 4M}\Bigr) (\eta c - \log 2) \cdot \frac{1}{\eta}.
    \end{align*}
    Set $c = 64 \epsilon$. Then we have $T = \Omega(M / \epsilon^2)$. 
    
\end{proof}

\subsection{Proof of Theorem \ref{thm:upper-bound-rlhf}}

First, we provide the following lemma for the connection between the likelihood loss and the reward difference, which is a key step to upper bound the reward difference between $\hat \theta$ and $\theta_*$.

\begin{lemma} \label{lemma:concentration-rlhf} 
    For an arbitrary policy $\pi$, and a set of context-action pairs $\{(x_i, a_i^1, a_i^2, y_i)\}_{i = 1}^n$ generated i.i.d. from the BT model and $\pi$, we have with probability at least $1-\delta$, for any $s \le n$, \begin{align*} 
        &\frac{1}{2} \sum_{i = 1}^s \cL(\theta| x_i, a_i^1, a_i^2, y_i) - \cL(\theta_*| x_i, a_i^1, a_i^2, y_i) \\& \le \log(1/\delta) - \frac{1}{8} e^{-B} \sum_{i = 1}^s \big([R(\theta, x_i, a_i^2) - R(\theta, x_i, a_i^1)] - [R(\theta_*, x_i, a_i^2) - R(\theta_*, x_i, a_i^1)]\big)^2
    \end{align*}
\end{lemma}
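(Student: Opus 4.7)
\textbf{Proof plan for Lemma \ref{lemma:concentration-rlhf}.}

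The plan is a standard Chernoff--Ville argument on the likelihood ratio, combined with a quantitative lower bound on the Hellinger distance between Bradley--Terry Bernoulli distributions. Write $p_\theta(y|x,a^1,a^2)$ for the BT likelihood and, for each sample $i$, define
\begin{equation*}
Z_i \;=\; \exp\!\bigl(\tfrac{1}{2}[\cL(\theta|x_i,a_i^1,a_i^2,y_i)-\cL(\theta_*|x_i,a_i^1,a_i^2,y_i)]\bigr)\;=\;\sqrt{p_\theta(y_i|\cdot)/p_{\theta_*}(y_i|\cdot)} ,
\end{equation*}
and let $\xi_i=\E[Z_i\,|\,x_i,a_i^1,a_i^2]$ be its conditional expectation under $y_i\sim\mathrm{Ber}(\sigma(v_i))$, where $u_i=R(\theta,x_i,a_i^1)-R(\theta,x_i,a_i^2)$ and $v_i$ is the analogous quantity for $\theta_*$. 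Then $M_s=\prod_{i=1}^s Z_i/\xi_i$ is a nonnegative martingale of mean one with respect to the natural filtration, so Ville's maximal inequality gives $\Pr(\sup_s M_s\ge1/\delta)\le\delta$. Taking logarithms yields, with probability at least $1-\delta$, uniformly in $s\le n$,
\begin{equation*}
\tfrac{1}{2}\sum_{i=1}^{s}[\cL(\theta|\cdot)-\cL(\theta_*|\cdot)] \;\le\; \log(1/\delta) + \sum_{i=1}^{s}\log\xi_i .
\end{equation*}

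Next I compute $\xi_i$ explicitly. Since $y_i$ is Bernoulli with parameter $\sigma(v_i)$ and the likelihood ratio at label $y$ equals $(\sigma(u_i)/\sigma(v_i))^{y}(\sigma(-u_i)/\sigma(-v_i))^{1-y}$, one gets
\begin{equation*}
\xi_i \;=\; \sqrt{\sigma(u_i)\sigma(v_i)} + \sqrt{\sigma(-u_i)\sigma(-v_i)} \;=\; 1 - \tfrac{1}{2}H^{2}\!\bigl(\mathrm{Ber}(\sigma(u_i)),\mathrm{Ber}(\sigma(v_i))\bigr),
\end{equation*}
the Bhattacharyya coefficient of the two Bernoullis. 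Using $\log(1-x)\le -x$ and setting $\Delta_i=u_i-v_i$, it suffices to prove the pointwise inequality $H^{2}(\mathrm{Ber}(\sigma(u)),\mathrm{Ber}(\sigma(v)))\ge \tfrac{1}{4}e^{-B}\Delta^{2}$ whenever $|u|,|v|\le B$, since then $\sum\log\xi_i\le-\tfrac{1}{8}e^{-B}\sum\Delta_i^{2}$ and the lemma follows. The bound $|u_i|,|v_i|\le B$ uses the assumed reward range $R(\cdot,\cdot,\cdot)\in[0,B]$.

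The main work, and the one step I expect to be delicate, is the quantitative Hellinger lower bound with the correct $e^{-B}$ (as opposed to $e^{-2B}$) factor. Writing $p=\sigma(u)$, $q=\sigma(v)$, the identity
\begin{equation*}
H^{2}(\mathrm{Ber}(p),\mathrm{Ber}(q)) \;=\; (p-q)^{2}\!\left[\frac{1}{(\sqrt{p}+\sqrt{q})^{2}} + \frac{1}{(\sqrt{1-p}+\sqrt{1-q})^{2}}\right]
\end{equation*}
together with $(\sqrt{a}+\sqrt{b})^{2}\le 2(a+b)$ and the elementary estimate $(p+q)(2-p-q)\le 1$ shows that the bracket is at least $1$, hence $H^{2}\ge(p-q)^{2}$. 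A naive application of the mean-value theorem $p-q=\sigma'(\xi)\Delta$ would then cost a full square of $\sigma'$ and yield only $H^{2}\gtrsim e^{-2B}\Delta^{2}$, losing a factor of $e^{-B}$. To recover the sharper constant stated, I will instead Taylor-expand the Bhattacharyya coefficient $\xi(u)$ around $u=v$: one checks $\xi(v)=1$ and $\xi'(v)=0$, and a direct computation gives $\xi''(u)=-\tfrac{1}{2}\sigma'(u)\cdot(1+o_{u\to v}(1))$ type expression; equivalently, $H^{2}(u)\ge\tfrac{1}{2}\sigma'(\zeta)\Delta^{2}$ for some intermediate $\zeta\in[-B,B]$ by a second-order remainder argument, using only one power of $\sigma'$. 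Combined with the bound $\sigma'(\zeta)=\sigma(\zeta)\sigma(-\zeta)\ge e^{-B}/4$ valid on $[-B,B]$ (since $(1+e^{-B})(1+e^{B})\le 4e^{B}$), this delivers $H^{2}\ge\tfrac{1}{8}\cdot\tfrac{1}{?}\,e^{-B}\Delta^{2}$ with the required constant; the factor $\tfrac{1}{8}$ in the lemma then comes from combining this with the leading $\tfrac{1}{2}$ in $1-\xi=\tfrac{1}{2}H^{2}$ and the inequality $\log(1-x)\le-x$. All remaining steps—the martingale construction, Ville's inequality, and the MGF identity—are routine.
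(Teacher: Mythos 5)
Your overall skeleton coincides with the paper's proof: the paper applies its Lemma \ref{lemma:concentration} (an exponential-martingale inequality that is exactly your Chernoff--Ville argument on $\prod_i Z_i/\xi_i$) to $X_i=\tfrac12[\cL(\theta|\cdot)-\cL(\theta_*|\cdot)]$, identifies $\log\E_{i-1}[e^{X_i}]$ with the log-Bhattacharyya coefficient $\log\bigl(\sqrt{\sigma(u_i)\sigma(v_i)}+\sqrt{\sigma(-u_i)\sigma(-v_i)}\bigr)$, uses $\log(1+r)\le r$, rewrites $1-\xi_i$ as half the squared Hellinger distance, and then lower-bounds the Hellinger term by the squared reward-difference gap. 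Up to and including the step $\sum_i\log\xi_i\le-\tfrac12\sum_i H^2_i$, your plan is the paper's proof.

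The gap is in the one step you yourself flag as the main work, and your proposed fix does not go through as written. Taylor's theorem with Lagrange remainder gives $1-\xi(u)=-\tfrac12\,\xi''(\zeta)\,(u-v)^2$ for \emph{some} unknown $\zeta$ between $u$ and $v$, so to conclude you need $\inf_{\zeta\in[u\wedge v,\,u\vee v]}\bigl(-\xi''(\zeta)\bigr)\gtrsim e^{-B}$. That infimum bound is false: writing $\xi(u)=\sqrt{\sigma(v)}\sqrt{\sigma(u)}+\sqrt{\sigma(-v)}\sqrt{\sigma(-u)}$, a direct computation gives
\begin{equation*}
-\xi''(\zeta)=\tfrac12\sqrt{\sigma(v)}\,\sqrt{\sigma(\zeta)}\,\sigma(-\zeta)\bigl[\sigma(\zeta)-\tfrac12\sigma(-\zeta)\bigr]+\tfrac12\sqrt{\sigma(-v)}\,\sqrt{\sigma(-\zeta)}\,\sigma(\zeta)\bigl[\sigma(-\zeta)-\tfrac12\sigma(\zeta)\bigr],
\end{equation*}
which is negative, of order $-e^{-B/2}$, when (say) $v$ is near $B$ and $\zeta$ is near $-B$; the identity $-\xi''(v)=\tfrac14\sigma'(v)$ holds only at $\zeta=v$, so your "one power of $\sigma'$" bound is only local. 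Your fallback identity route, $H^2=(p-q)^2\bigl[(\sqrt p+\sqrt q)^{-2}+(\sqrt{1-p}+\sqrt{1-q})^{-2}\bigr]\ge(p-q)^2$, followed by the mean-value theorem $p-q=\sigma'(\xi)(u-v)$, recovers exactly the $e^{-2B}$ loss you were trying to avoid (this cruder chain is in fact what the paper's own write-up uses, invoking $\sigma'\ge e^{-B}$ and absorbing the square into the constant). To genuinely obtain the $e^{-B}$ rate you would need to keep the denominator, e.g.\ bound $H^2\ge(p-q)^2/[(p+q)(2-p-q)]$ and compare $(\sigma(u)-\sigma(v))^2$ with $e^{-B}(u-v)^2\,(\sigma(u)+\sigma(v))(\sigma(-u)+\sigma(-v))$ by a case analysis over $u,v\in[-B,B]$; none of this is carried out in your sketch (and your final display still contains an undetermined constant), so the key quantitative step of the lemma remains unproven.
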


\begin{proof} 
    Applying Lemma \ref{lemma:concentration} to the sequence $$\bigg\{-\frac{1}{2} y_i \cdot \log \frac{\sigma(R(\theta_*, x_i, a_i^1) - R(\theta_*, x_i, a_i^2))}{\sigma(R(\theta, x_i, a_i^1) - R(\theta, x_i, a_i^2))} - \frac{1}{2} (1 - y_i) \cdot \log \frac{\sigma(R(\theta_*, x_i, a_i^2) - R(\theta_*, x_i, a_i^1))}{\sigma(R(\theta, x_i, a_i^2) - R(\theta, x_i, a_i^1))}\bigg\}_{i = 1}^n, $$
    We have with probability at least $1 - \delta$, for all $s \le n$, \begin{align*} 
        &\frac{1}{2} \sum_{i = 1}^s \cL(\theta| x_i, a_i^1, a_i^2, y_i) - \cL(\theta_*| x_i, a_i^1, a_i^2, y_i) \\& \le \log(1/\delta) + \sum_{i = 1}^s \log \bigg(\sqrt{\sigma(R(\theta_*, x_i, a_i^2) - R(\theta_*, x_i, a_i^1)) \cdot \sigma(R(\theta, x_i, a_i^2) - R(\theta, x_i, a_i^1))}\\&\quad\quad + \sqrt{\sigma(R(\theta_*, x_i, a_i^1) - R(\theta_*, x_i, a_i^2)) \cdot \sigma(R(\theta, x_i, a_i^1) - R(\theta, x_i, a_i^2))}\bigg)
        \\&\le \log(1/\delta) + \sum_{i = 1}^s \bigg(\sqrt{\sigma(R(\theta_*, x_i, a_i^2) - R(\theta_*, x_i, a_i^1)) \cdot \sigma(R(\theta, x_i, a_i^2) - R(\theta, x_i, a_i^1))}\\&\quad\quad + \sqrt{\sigma(R(\theta_*, x_i, a_i^1) - R(\theta_*, x_i, a_i^2)) \cdot \sigma(R(\theta, x_i, a_i^1) - R(\theta, x_i, a_i^2))} - 1\bigg)
        \\&= \log(1/\delta) - \frac{1}{2} \sum_{i = 1}^s \bigg(\sqrt{\sigma(R(\theta_*, x_i, a_i^2) - R(\theta_*, x_i, a_i^1))} - \sqrt{\sigma(R(\theta, x_i, a_i^2) - R(\theta, x_i, a_i^1))}\bigg)^2
        \\&\quad - \frac{1}{2} \sum_{i = 1}^s \bigg(\sqrt{\sigma(R(\theta_*, x_i, a_i^1) - R(\theta_*, x_i, a_i^2))} - \sqrt{\sigma(R(\theta, x_i, a_i^1) - R(\theta, x_i, a_i^2))}\bigg)^2
        \\&\le \log(1/\delta) - \frac{1}{8} \sum_{i = 1}^s \bigl(\sigma(R(\theta_*, x_i, a_i^2) - R(\theta_*, x_i, a_i^1)) - \sigma(R(\theta, x_i, a_i^2) - R(\theta, x_i, a_i^1))\bigr)^2
        \\&\le \log(1/\delta) - \frac{1}{8} e^{-B} \sum_{i = 1}^s \big([R(\theta, x_i, a_i^2) - R(\theta, x_i, a_i^1)] - [R(\theta_*, x_i, a_i^2) - R(\theta_*, x_i, a_i^1)]\big)^2,
    \end{align*}
    where the second inequality holds due to $\log(1 + r) \le r$ for $r > -1$, the equality follows from the fact that $\sigma(r) + \sigma(-r) = 1$ and the last inequality holds since $\sigma'(r) = \sigma(r) \cdot (1 - \sigma(r)) \ge e^{-B}$ for all $r \in [-B, B]$.
\end{proof}

To further control the error bound for the reward function with the help of Lemma \ref{lemma:concentration-rlhf}, we introduce the following lemma to show that the likelihood function class $\cL$ can be well-covered by the $\epsilon$-net of the reward function class $\cR$.

\begin{lemma}[Covering number of $\cL$] \label{lemma:cover-rlhf}
    For any $\epsilon_c > 0$, consider an $\epsilon_c$-net $\cR^c = \{R(\theta, \cdot, \cdot) |\theta \in \Theta^c \}$ for the reward function class $\cR$ with size $N_\cR(\epsilon_c)$. Then for any $\theta \in \Theta$, there exists $\theta^c \in \Theta^c$ such that for any $s \in [n]$, \begin{align*} 
        \sum_{i = 1}^s \cL(\theta| x_i, a_i^1, a_i^2, y_i) \le \sum_{i = 1}^s \cL(\theta^c| x_i, a_i^1, a_i^2, y_i) + 2s \epsilon_c.
    \end{align*}
\end{lemma}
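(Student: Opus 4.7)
The plan is to argue pointwise: for an arbitrary $\theta \in \Theta$, pick the nearest element $\theta^c$ in the $\epsilon_c$-net so that $\|R(\theta,\cdot,\cdot) - R(\theta^c,\cdot,\cdot)\|_\infty \le \epsilon_c$, and show that each summand $\cL(\theta \mid x_i, a_i^1, a_i^2, y_i) - \cL(\theta^c \mid x_i, a_i^1, a_i^2, y_i)$ is bounded by $2\epsilon_c$ in absolute value. Summing over $i = 1,\ldots,s$ immediately yields the claimed $2s\epsilon_c$ slack, so this reduces the lemma to a one-sample Lipschitz estimate.

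The one-sample estimate rests on the fact that $\log \sigma$ is $1$-Lipschitz: since $\frac{d}{du}\log\sigma(u) = 1 - \sigma(u) \in [0,1]$, we have $|\log\sigma(u) - \log\sigma(v)| \le |u - v|$. Applied to $u = R(\theta, x_i, a_i^1) - R(\theta, x_i, a_i^2)$ and $v = R(\theta^c, x_i, a_i^1) - R(\theta^c, x_i, a_i^2)$, the triangle inequality combined with $\|R(\theta,\cdot,\cdot) - R(\theta^c,\cdot,\cdot)\|_\infty \le \epsilon_c$ gives $|u - v| \le 2\epsilon_c$. The same bound applies to the pair with $a_i^1$ and $a_i^2$ swapped, which handles the $(1-y_i)$ term. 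Since $y_i \in \{0,1\}$, the two terms defining $\cL$ combine to give $|\cL(\theta \mid \cdot) - \cL(\theta^c \mid \cdot)| \le 2\epsilon_c$ on every sample.

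There is no real obstacle here: the only thing to be careful about is choosing $\theta^c$ so that the bound $\|R(\theta,\cdot,\cdot) - R(\theta^c,\cdot,\cdot)\|_\infty \le \epsilon_c$ is valid uniformly over the pairs $(x_i, a_i^1, a_i^2)$ appearing in the sum. This is automatic because the $\epsilon_c$-cover is with respect to $\|\cdot\|_\infty$, so a single $\theta^c$ works simultaneously for all $i \le n$, and in particular for all $s \in [n]$. Thus the statement follows without any probabilistic argument, purely from the Lipschitz continuity of $\log \sigma$ and the definition of the covering number.
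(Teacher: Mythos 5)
Your proposal is correct and matches the paper's own argument: both rest on the $1$-Lipschitzness of $\log\sigma$ (via $\frac{d}{du}\log\sigma(u)=1-\sigma(u)\in(0,1)$) together with the $\|\cdot\|_\infty$ covering property, giving a per-sample bound of $2\epsilon_c$ that is then summed over $i \le s$. No gap to report.
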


\begin{proof} 
    For any $r \in \RR$, we have \begin{align*} 
        \frac{\text{d} \log(\sigma(r))}{\text{d} r} = \frac{1}{\sigma(r)} \cdot \sigma(r) \cdot (1 - \sigma(r)) = 1 - \sigma(r) \in (0, 1). 
    \end{align*}
    Thus, for any $\theta \in \Theta$, $x \in \cX, a^1, a^2 \in \cA$ and $y \in \{0, 1\}$, there exists $\theta^c \in \Theta^c$ such that \begin{align*} 
        & \big|\cL(\theta| x, a^1, a^2, y) - \cL(\theta^c| x, a^1, a^2, y)\big| \\&\leq \big|[R(\theta, x, a^1) - R(\theta, x, a^2)] - [R(\theta^c, x, a^1) - R(\theta^c, x, a^2)]\big| = 2\epsilon_c.
    \end{align*}
\end{proof}

With the above two lemmas, we are now ready to provide the confidence bound for the MLE estimator of the reward function.

\begin{lemma} \label{lemma:confidence-rlhf}
    Consider a set of context-action pairs $\{(x_i, a_i^1, a_i^2, y_i)\}_{i = 1}^n$ where labels $\{y_i\}_{i = 1}^n$ are generated independently from the BT model. Suppose that $\hat \theta$ is the MLE estimator as defined in Definition \ref{assumption:mle-oracle}. We have with probability at least $1-\delta$, \begin{small} \begin{align*} 
        \sum_{i = 1}^n \big([R(\hat \theta, x_i, a_i^2) - R(\hat \theta, x_i, a_i^1)] - [R(\theta_*, x_i, a_i^2) - R(\theta_*, x_i, a_i^1)]\big)^2 \le O(e^{B} \log(N_\cR(\epsilon_c)/\delta) + e^B n \epsilon_c).
    \end{align*} \end{small}
\end{lemma}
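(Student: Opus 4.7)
}
The plan is to combine the MLE optimality of $\hat\theta$ with the Bernstein-type concentration of the log-likelihood ratio given by Lemma \ref{lemma:concentration-rlhf}, and then transfer the bound from an $\epsilon_c$-net back to $\hat\theta$ via the Lipschitz-type covering estimate of Lemma \ref{lemma:cover-rlhf}. For brevity, write $\Delta_i(\theta):=[R(\theta,x_i,a_i^2)-R(\theta,x_i,a_i^1)]-[R(\theta_*,x_i,a_i^2)-R(\theta_*,x_i,a_i^1)]$, so that the goal is to upper bound $\sum_{i=1}^n \Delta_i(\hat\theta)^2$.

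First, fix any $\epsilon_c$-net $\Theta^c$ of $\Theta$ (equivalently of $\cR$) with $|\Theta^c|=N_\cR(\epsilon_c)$. Apply Lemma \ref{lemma:concentration-rlhf} at confidence $\delta/N_\cR(\epsilon_c)$ and union-bound over $\theta^c\in\Theta^c$: with probability at least $1-\delta$, every $\theta^c\in\Theta^c$ satisfies
\begin{align*}
\tfrac12\sum_{i=1}^n\big[\cL(\theta^c\mid x_i,a_i^1,a_i^2,y_i)-\cL(\theta_*\mid x_i,a_i^1,a_i^2,y_i)\big]\le \log(N_\cR(\epsilon_c)/\delta)-\tfrac{1}{8}e^{-B}\sum_{i=1}^n\Delta_i(\theta^c)^2.
\end{align*}
Condition on this event for the remainder of the argument.

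Next, pick $\theta^c\in\Theta^c$ with $\|R(\hat\theta,\cdot,\cdot)-R(\theta^c,\cdot,\cdot)\|_\infty\le\epsilon_c$. On one hand, MLE optimality of $\hat\theta$ (Definition \ref{assumption:mle-oracle}) gives $\sum_i\cL(\hat\theta\mid\cdot)\ge\sum_i\cL(\theta_*\mid\cdot)$, and Lemma \ref{lemma:cover-rlhf} yields $\sum_i\cL(\hat\theta\mid\cdot)\le\sum_i\cL(\theta^c\mid\cdot)+2n\epsilon_c$, so
\begin{align*}
\tfrac12\sum_{i=1}^n\big[\cL(\theta^c\mid\cdot)-\cL(\theta_*\mid\cdot)\big]\ge -n\epsilon_c.
\end{align*}
Chaining this with the concentration display above rearranges to $\sum_i\Delta_i(\theta^c)^2\le 8e^B\bigl(\log(N_\cR(\epsilon_c)/\delta)+n\epsilon_c\bigr)$. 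Finally transfer back to $\hat\theta$: since $|\Delta_i(\hat\theta)-\Delta_i(\theta^c)|\le 2\epsilon_c$, the inequality $(u+v)^2\le 2u^2+2v^2$ gives $\Delta_i(\hat\theta)^2\le 2\Delta_i(\theta^c)^2+8\epsilon_c^2$, and summing yields
\begin{align*}
\sum_{i=1}^n\Delta_i(\hat\theta)^2\le 16 e^B\log(N_\cR(\epsilon_c)/\delta)+16 e^B n\epsilon_c+8n\epsilon_c^2 = O\bigl(e^B\log(N_\cR(\epsilon_c)/\delta)+e^B n\epsilon_c\bigr),
\end{align*}
which is the claimed bound.

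The main obstacle is the two-sided use of the $\epsilon_c$-net: Lemma \ref{lemma:cover-rlhf} controls only likelihood gaps, whereas the target quantity is the squared \emph{reward-difference} gap, so we must independently invoke the $\|\cdot\|_\infty$ Lipschitz bound on $R$ to relate $\Delta_i(\hat\theta)^2$ to $\Delta_i(\theta^c)^2$; everything else is a routine MLE-plus-concentration argument once the factor $e^{-B}$ from Lemma \ref{lemma:concentration-rlhf} is tracked correctly.
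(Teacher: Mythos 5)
Your proposal is correct and follows essentially the same route as the paper: union-bounding the concentration bound of Lemma \ref{lemma:concentration-rlhf} over an $\epsilon_c$-net, invoking MLE optimality of $\hat\theta$ together with Lemma \ref{lemma:cover-rlhf} to lower-bound the likelihood gap at the net point, and absorbing the net-to-$\hat\theta$ transfer of the squared reward-difference term into the $O(e^B n\epsilon_c)$ slack. The only difference is that you spell out explicitly the two transfer steps (likelihood via Lemma \ref{lemma:cover-rlhf}, squared differences via $\|\cdot\|_\infty$), which the paper compresses into its ``for any $\theta\in\Theta$ ... $+O(n\epsilon_c)$'' display.
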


\begin{proof} 
    By Lemmas \ref{lemma:concentration-rlhf} and \ref{lemma:cover-rlhf}, we have with probability at least $1 - \delta$, for any $\theta \in \Theta$, \begin{small} \begin{align*} 
        &\frac{1}{2} \sum_{i = 1}^n \cL(\theta| x_i, a_i^1, a_i^2, y_i) - \cL(\theta_*| x_i, a_i^1, a_i^2, y_i) \\& \le \log(N_\cR(\epsilon_c)/\delta) - \frac{1}{8} e^{-B} \sum_{i = 1}^n \big([R(\theta, x_i, a_i^2) - R(\theta, x_i, a_i^1)] - [R(\theta_*, x_i, a_i^2) - R(\theta_*, x_i, a_i^1)]\big)^2 + O(n \epsilon_c).
    \end{align*} \end{small}

    Since $\hat \theta$ is the MLE estimator, we have $\sum_{i = 1}^n \cL(\theta| x_i, a_i^1, a_i^2, y_i) - \cL(\theta_*| x_i, a_i^1, a_i^2, y_i) \ge 0$, which further implies \begin{align*} 
        0 &\le \log(N_\cR(\epsilon_c)/\delta) - \frac{1}{8} e^{-B} \sum_{i = 1}^n \big([R(\theta, x_i, a_i^2) - R(\theta, x_i, a_i^1)] - [R(\theta_*, x_i, a_i^2) - R(\theta_*, x_i, a_i^1)]\big)^2\\ 
        &\quad+ O(n \epsilon_c).
    \end{align*}
Then we have \begin{align*} 
        \sum_{i = 1}^n \big([R(\hat \theta, x_i, a_i^2) - R(\hat \theta, x_i, a_i^1)] - [R(\theta_*, x_i, a_i^2) - R(\theta_*, x_i, a_i^1)]\big)^2 \le O(e^{B} \log(N_\cR(\epsilon_c)/\delta) + e^B n \epsilon_c).
    \end{align*}
\end{proof}

Finally, we provide the on-policy confidence bound for the squared reward difference between the MLE estimator $\hat \theta$ and the optimal reward function $\theta_*$.

\begin{lemma} \label{lemma:confidence-reward}
    Consider an arbitrary policy $\pi$, and a set of context-action pairs $\{(x_i, a_i^1, a_i^2, y_i)\}_{i = 1}^n$ generated i.i.d. from the BT model and $\pi$. Suppose that $\hat \theta$ is the MLE estimator. We have with probability at least $1 - 2\delta$, there exists a mapping $b: \cX \to \RR$ such that\begin{align*} 
        \EE_{\pi} \big[\big(R(\hat \theta, x, a) - R(\theta_*, x, a)- b(x)\big)^2\big] \le O\bigg(\frac{1}{n}  e^{B} \log(N_\cR(\epsilon_c) / \delta) + e^B \epsilon_c\bigg).
    \end{align*}
\end{lemma}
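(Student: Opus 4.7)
}
The plan is to exploit the fact that preference feedback only identifies the reward up to a context-dependent shift, so I will set
\[
    b(x) := \EE_{a \sim \pi(\cdot|x)}\big[R(\hat\theta, x, a) - R(\theta_*, x, a)\big].
\]
With this choice, the population error of interest becomes the within-context variance of the reward difference, and a standard identity yields
\[
    \EE_\pi\big[(R(\hat\theta,x,a)-R(\theta_*,x,a)-b(x))^2\big] = \tfrac{1}{2}\,\EE_{x\sim d_0,\,a^1,a^2\sim\pi(\cdot|x)}\!\big[([R(\hat\theta,x,a^2)-R(\hat\theta,x,a^1)]-[R(\theta_*,x,a^2)-R(\theta_*,x,a^1)])^2\big].
\]
Thus it suffices to control the population expectation of the squared preference-gap error, which is exactly the quantity whose in-sample version is bounded by Lemma \ref{lemma:confidence-rlhf}.

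Next, I would promote the in-sample bound of Lemma \ref{lemma:confidence-rlhf} (which is $O(e^B\log(N_\cR(\epsilon_c)/\delta)+e^B n\epsilon_c)$ for the sum over the $n$ samples) into a population bound by a generalization argument in the spirit of Lemma \ref{lemma:generalization}. Concretely, for each fixed pair of parameters $(\theta_1,\theta_2)$ in an $\epsilon_c$-cover $\Theta^c$ of $\Theta$, define the bounded random variable
\[
    Z_i(\theta_1,\theta_2) := \big([R(\theta_1,x_i,a_i^2)-R(\theta_1,x_i,a_i^1)]-[R(\theta_2,x_i,a_i^2)-R(\theta_2,x_i,a_i^1)]\big)^2 \in [0, 16 B^2].
\]
Applying Freedman's inequality (Lemma \ref{lemma:freedman}) to $Z_i$ and using $\Var[Z_i]\le 16B^2\,\EE[Z_i]$, then taking a union bound over the $N_\cR(\epsilon_c)^2$ pairs in the cover and applying Lemma \ref{lemma:sqrt-trick} to resolve the self-bounding inequality, gives a uniform two-sided comparison of the form
\[
    \EE[Z(\theta_1,\theta_2)] \le \tfrac{2}{n}\sum_{i=1}^n Z_i(\theta_1,\theta_2) + \tfrac{O(B^2)}{n}\log(N_\cR(\epsilon_c)/\delta) + O(B\epsilon_c).
\]
Covering $\hat\theta$ and $\theta_*$ by their nearest elements in $\Theta^c$ incurs an extra additive $O(B\epsilon_c)$ term.

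Combining this generalization bound (applied at $(\hat\theta,\theta_*)$) with the in-sample control from Lemma \ref{lemma:confidence-rlhf} and the variance-to-pairwise identity above yields exactly
\[
    \EE_\pi\big[(R(\hat\theta,x,a)-R(\theta_*,x,a)-b(x))^2\big] \;\le\; O\!\left(\tfrac{1}{n}e^B\log(N_\cR(\epsilon_c)/\delta) + e^B\epsilon_c\right),
\]
with failure probability at most $2\delta$ (one $\delta$ for Lemma \ref{lemma:confidence-rlhf}, one for the generalization step).

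\paragraph{Main obstacle.} The delicate step is the generalization argument: naively bounding the squared-gap random variable by its worst case $O(B^2)$ gives only an $O(B/\sqrt{n})$ rate and loses the crucial $1/n$ scaling. Getting the fast rate requires a Bernstein/Freedman step using $\Var[Z_i]\le 16B^2\EE[Z_i]$ together with the self-bounding trick (Lemma \ref{lemma:sqrt-trick}), applied uniformly over a covering of $\Theta\times\Theta$. The only subtlety specific to the preference setting is that the relevant function class is the class of reward \emph{differences}, not rewards themselves, but since an $\epsilon_c$-cover of $\cR$ in $\|\cdot\|_\infty$ induces a $2\epsilon_c$-cover of the difference class, the covering-number factor remains $\log N_\cR(\epsilon_c)$ up to constants.
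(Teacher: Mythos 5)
Your proposal is correct and follows essentially the same route as the paper: you choose $b(x)$ as the conditional mean of the reward-difference error, reduce the population error to the pairwise squared gap via the variance identity, transfer the in-sample bound of Lemma \ref{lemma:confidence-rlhf} to a population bound using Freedman's inequality over an $\epsilon_c$-cover with the self-bounding trick (Lemma \ref{lemma:sqrt-trick}), and combine the two events for a total failure probability of $2\delta$. The only differences are cosmetic (a union bound over pairs rather than against the fixed $\theta_*$, and slightly looser constants), which do not affect the stated $O(\cdot)$ bound.
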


\begin{proof} 
    By Lemma \ref{lemma:confidence-rlhf}, we have with probability at least $1 - \delta$, \begin{align*} 
        \sum_{i = 1}^n \big([R(\hat \theta, x_i, a_i^2) - R(\hat \theta, x_i, a_i^1)] - [R(\theta_*, x_i, a_i^2) - R(\theta_*, x_i, a_i^1)]\big)^2 \le O(e^{B} \log(N_\cR(\epsilon_c)/\delta) + e^B n \epsilon_c).
    \end{align*}

    We consider an $\epsilon_c$-net $\cR^c = \{R(\theta, \cdot, \cdot) |\theta \in \Theta^c \}$ for the reward function class $\cR$ with size $N_\cR(\epsilon_c)$. For any $R(\theta, \cdot, \cdot)$, there exists $R(\theta^c, \cdot, \cdot)$ such that \begin{align*} 
        \big|R(\theta, x, a) - R(\theta^c, x, a)\big| \le O(\epsilon_c)
    \end{align*} for all $x \in \cX, a \in \cA$.

    Applying Lemma \ref{lemma:freedman}, with probability at least $1 - \delta$, we have \begin{align*} 
        &\sum_{i = 1}^n \big([R( \theta^c, x_i, a_i^2) - R( \theta^c, x_i, a_i^1)] - [R(\theta_*, x_i, a_i^2) - R(\theta_*, x_i, a_i^1)]\big)^2 \\&\quad- n \EE_{x \sim d_0}\EE_{a^1, a^2\sim\pi} \big[\big(R(\theta^c, x, a^1) - R(\theta_*, x, a^1)- R(\theta^c, x, a^2) + R(\theta_*, x, a^2)\big)^2\big] \\&\leq \sqrt{\sum_{i = 1}^n 4B^2 \EE_{x \sim d_0}\EE_{a^1, a^2\sim\pi} \big[\big(R(\theta^c, x, a^1) - R(\theta_*, x, a^1)- R(\theta^c, x, a^2) + R(\theta_*, x, a^2)\big)^2\big] \log(N_\cR(\epsilon_c) / \delta)}
        \\&\quad + \frac{8}{3} B^2 \log(N_\cR(\epsilon_c) / \delta)
    \end{align*} for all $\theta^c \in \Theta^c$.
By Lemma \ref{lemma:sqrt-trick} and the definition of $\Theta^c$, we further have \begin{small} \begin{align} 
        &\EE_{x \sim d_0}\EE_{a^1, a^2\sim\pi} \big[\big(R(\hat \theta, x, a^1) - R(\theta_*, x, a^1)- R(\hat \theta, x, a^2) + R(\theta_*, x, a^2)\big)^2\big] \notag \\&\leq O(\frac{1}{n} B^2 \log(N_\cR(\epsilon_c) / \delta) + \frac{1}{n} \sum_{i = 1}^n \big([R(\hat \theta, x_i, a_i^2) - R(\hat \theta, x_i, a_i^1)] - [R(\theta_*, x_i, a_i^2) - R(\theta_*, x_i, a_i^1)]\big)^2 + B \epsilon_c), \label{eq:rlhf:generalization}
    \end{align} \end{small}
    from which we can further derive that \begin{align*} 
        &\EE_{x \sim d_0}\EE_{a^1, a^2\sim\pi} \big[\big(R(\hat \theta, x, a^1) - R(\theta_*, x, a^1)- R(\hat \theta, x, a^2) + R(\theta_*, x, a^2)\big)^2\big] \\&\le O\big(\frac{1}{n} e^B\log(N_\cR(\epsilon_c) / \delta) + e^B \epsilon_c\big) 
    \end{align*} with probability at least $1 - 2\delta$
    by Lemma \ref{lemma:confidence-rlhf} and the union bound. 

    We can then complete the proof by setting \begin{align*} 
        b(x) = \EE_{a^2\sim\pi(\cdot|x)} \big[R(\hat \theta, x, a^2) - R(\theta_*, x, a^2)\big].
    \end{align*}
\end{proof}

\begin{lemma}[Coverage of $\pi_*$ and $\pi_{\hat\theta}$ by $\pi_{\hat \theta_0}$] \label{lemma:rlhf-coverage}
    If $m \ge 32 \eta^2 D^2 e^B \log(N_\cR(\epsilon_c))$, $n=c_{m,n}m$ and $\epsilon_c \le \frac{1}{(1+c_{m,n})e^B\eta^2D^2}$ in Algorithm \ref{alg:kl-rlhf} and Assumption \ref{assumption:data coverage} holds, then with probability at least $1 - 4\delta$, there exists $b_1: \cX \to \RR$ and $b_2: \cX \to \RR$ such that \begin{align*} 
        \eta |R(\hat \theta_0, x, a) - R(\theta_*, x, a) - b_1(x)| \le 1,\quad  \eta |R(\hat \theta, x, a) - R(\theta_*, x, a) - b_2(x)| \le 1
    \end{align*}
    for all $x \in \cX, a \in \cA$ such that $\pi_0(a|x) > 0$.
\end{lemma}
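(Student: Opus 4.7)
The proof mirrors Lemma \ref{lemma:coverage} from the bandit case, substituting the MLE-based confidence machinery of Lemma \ref{lemma:confidence-reward} and the preference-adapted coverage condition of Definition \ref{assumption:data coverage:2}, which is stated with the bias term $b(x)$ because preference feedback only identifies reward differences.

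For $\hat\theta_0$, which is the MLE computed on the $m$ samples drawn from $\pi_0$ in the first stage of Algorithm \ref{alg:kl-rlhf}, I would invoke Lemma \ref{lemma:confidence-reward} directly with $\pi = \pi_0$ and sample size $m$. That lemma, applied with $b_1(x) = \EE_{a' \sim \pi_0(\cdot | x)}[R(\hat\theta_0, x, a') - R(\theta_*, x, a')]$, yields with probability at least $1 - 2\delta$ that $\EE_{x \sim d_0, a \sim \pi_0(\cdot|x)}[(R(\hat\theta_0, x, a) - R(\theta_*, x, a) - b_1(x))^2] \le O(e^B m^{-1} \log(N_\cR(\epsilon_c)/\delta) + e^B \epsilon_c)$. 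By construction of $b_1$, this LHS equals $\rE_{x \sim d_0}\Var_{a \sim \pi_0(\cdot|x)}[R(\hat\theta_0, x, a) - R(\theta_*, x, a)]$, so Definition \ref{assumption:data coverage:2} converts it into a pointwise bound: $|R(\hat\theta_0, x, a) - R(\theta_*, x, a) - b_1(x)|^2 \le D^2$ times the same RHS, for every $(x, a)$ with $\pi_0(a|x) > 0$. The hypothesis $m \ge 32 \eta^2 D^2 e^B \log(N_\cR(\epsilon_c)/\delta)$ and the stated cap on $\epsilon_c$ make this at most $\eta^{-2}$.

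For $\hat\theta$, the subtlety is that it is the MLE over the concatenation of the $m$ offline samples (from $\pi_0$) and the $n$ online samples (from $\pi_{\hat\theta_0}^\eta$), so Lemma \ref{lemma:confidence-reward} does not apply off the shelf. The workaround: MLE optimality on the combined dataset still gives $\sum_{i=1}^m \cL(\hat\theta | \tilde x_i, \tilde a_i^1, \tilde a_i^2, \tilde y_i) + \sum_{j=1}^n \cL(\hat\theta | x_j, a_j^1, a_j^2, y_j) \ge \sum \cL(\theta_* | \cdot)$, and applying Lemma \ref{lemma:concentration-rlhf} to the combined length-$(m+n)$ sequence together with the covering argument used inside Lemma \ref{lemma:confidence-rlhf} yields an in-sample bound $O(e^B \log(N_\cR(\epsilon_c)/\delta) + e^B (m+n)\epsilon_c)$ on the sum of squared pairwise reward-difference errors across the whole dataset, and hence in particular on its offline subset. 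Feeding this into the Bernstein-plus-$\sqrt{\cdot}$-trick step of Lemma \ref{lemma:confidence-reward} produces the on-policy bound $\EE_{x \sim d_0, a^1, a^2 \sim \pi_0(\cdot|x)}[(R(\hat\theta, x, a^1) - R(\theta_*, x, a^1) - R(\hat\theta, x, a^2) + R(\theta_*, x, a^2))^2] \le O(e^B m^{-1} \log(N_\cR(\epsilon_c)/\delta) + e^B(1 + n/m)\epsilon_c)$. Taking $b_2(x) = \EE_{a^2 \sim \pi_0(\cdot|x)}[R(\hat\theta, x, a^2) - R(\theta_*, x, a^2)]$ identifies the LHS with the $\pi_0$-variance of $R(\hat\theta, \cdot, \cdot) - R(\theta_*, \cdot, \cdot)$, to which Definition \ref{assumption:data coverage:2} applies; plugging $n = c_{m,n} m$ and the stated values of $m$ and $\epsilon_c$ gives $\eta |R(\hat\theta, x, a) - R(\theta_*, x, a) - b_2(x)| \le 1$.

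A union bound over the two confidence events (each contributing at most $2\delta$: one MLE concentration step via Lemma \ref{lemma:concentration-rlhf} and one Bernstein-style generalization step) produces the claimed failure probability of $4\delta$. The main obstacle is the second part: carefully arguing that the MLE bound established on the entire $m+n$ dataset still controls the offline-only in-sample error with only an additive $e^B(m+n)\epsilon_c$ loss, and then threading this through the preference-specific generalization argument so that the final dependence on $m$, $n$, $\epsilon_c$, and $e^B$ matches the constants fixed in the statement.
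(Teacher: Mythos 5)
Your proposal follows essentially the same route as the paper's proof: you control $\hat\theta_0$ by applying Lemma \ref{lemma:confidence-reward} with $\pi=\pi_0$, and for $\hat\theta$ you rerun the argument by combining the MLE optimality on the full $m+n$ dataset (via Lemmas \ref{lemma:concentration-rlhf} and \ref{lemma:confidence-rlhf}, i.e.\ the paper's \eqref{eq:rlhf-confidence-1}) with a Freedman-plus-Lemma-\ref{lemma:sqrt-trick} generalization step restricted to the offline samples (the paper's \eqref{eq:rlhf-confidence-2}), before invoking Definition \ref{assumption:data coverage:2} and a union bound giving $1-4\delta$. This matches the paper's argument, including the correct observation that Lemma \ref{lemma:confidence-reward} cannot be applied off the shelf to the mixed-data estimator $\hat\theta$, so no changes are needed.
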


\begin{proof} 
    By Lemma \ref{lemma:confidence-rlhf} and the union bound, we have with probability at least $1 - \delta$, it holds that \begin{align} 
        &\sum_{i = 1}^m \big([R(\hat \theta, \tilde x_i, \tilde a_i^2) - R(\hat \theta, \tilde x_i, \tilde a_i^1)] - [R(\theta_*, \tilde x_i, \tilde a_i^2) - R(\theta_*, \tilde x_i, \tilde a_i^1)]\big)^2 \notag
        \\&\quad + \sum_{i = 1}^n \big([R(\hat \theta, x_i, a_i^2) - R(\hat \theta, x_i, a_i^1)] - [R(\theta_*, x_i, a_i^2) - R(\theta_*, x_i, a_i^1)]\big)^2 \notag
        \\&\le O(e^{B} \log(N_\cR(\epsilon_c)/\delta) + e^B (n + m) \epsilon_c). \label{eq:rlhf-confidence-1}
    \end{align}
Consider an $\epsilon_c$-net $\cR^c = \{R(\theta, \cdot, \cdot) |\theta \in \Theta^c \}$ for the reward function class $\cR$ with size $N_\cR(\epsilon_c)$. For any $R(\theta, \cdot, \cdot)$, there exists $R(\theta^c, \cdot, \cdot)$ such that \begin{align*} 
        \big|R(\theta, x, a) - R(\theta^c, x, a)\big| \le O(\epsilon_c)
    \end{align*} for all $x \in \cX, a \in \cA$.

    Applying Lemma \ref{lemma:freedman}, with probability at least $1 - \delta$, we have \begin{align*} 
        &\sum_{i = 1}^m \big([R( \theta^c, \tilde x_i, \tilde a_i^2) - R( \theta^c, \tilde x_i, \tilde a_i^1)] - [R(\theta_*, x_i, a_i^2) - R(\theta_*, x_i, a_i^1)]\big)^2 \\&\quad- m \EE_{x \sim d_0}\EE_{a^1, a^2\sim\pi_0} \big[\big(R(\theta^c, x, a^1) - R(\theta_*, x, a^1)- R(\theta^c, x, a^2) + R(\theta_*, x, a^2)\big)^2\big] \\&\leq \sqrt{\sum_{i = 1}^m 4B^2 \EE_{x \sim d_0}\EE_{a^1, a^2\sim\pi_0} \big[\big(R(\theta^c, x, a^1) - R(\theta_*, x, a^1)- R(\theta^c, x, a^2) + R(\theta_*, x, a^2)\big)^2\big] \log(N_\cR(\epsilon_c) / \delta)}
        \\&\quad + \frac{8}{3} B^2 \log(N_\cR(\epsilon_c) / \delta)
    \end{align*} for all $\theta^c \in \Theta^c$.
By Lemma \ref{lemma:sqrt-trick} and the definition of $\Theta^c$, we further have \begin{small} \begin{align} 
        &\EE_{x \sim d_0}\EE_{a^1, a^2\sim\pi} \big[\big(R(\hat \theta, x, a^1) - R(\theta_*, x, a^1)- R(\hat \theta, x, a^2) + R(\theta_*, x, a^2)\big)^2\big] \notag \\
        &\leq O\bigg(\frac{1}{m} B^2 \log(N_\cR(\epsilon_c) / \delta\bigg) \notag\\ 
        &\quad + \frac{1}{m} \sum_{i = 1}^n \big([R(\hat \theta, \tilde x_i, \tilde a_i^2) - R(\hat \theta, \tilde x_i, \tilde a_i^1)] - [R(\theta_*, \tilde x_i, \tilde a_i^2) - R(\theta_*, \tilde x_i, \tilde a_i^1)]\big)^2 + B \epsilon_c). \label{eq:rlhf-confidence-2}
    \end{align} \end{small}
Substituting \eqref{eq:rlhf-confidence-1} into \eqref{eq:rlhf-confidence-2}, we have with probability at least $1 - 2\delta$, \begin{align*} 
        &\EE_{x \sim d_0}\EE_{a^1, a^2\sim\pi_0} \big[\big(R(\hat \theta, x, a^1) - R(\theta_*, x, a^1)- R(\hat \theta, x, a^2) + R(\theta_*, x, a^2)\big)^2\big] \\&\le O\big(\frac{1}{m} e^B\log(N_\cR(\epsilon_c) / \delta) + e^B \cdot \frac{n + m}{m}\cdot \epsilon_c\big). 
    \end{align*}
    Therefore, there exists a mapping $b_2: \cX \to \RR$ such that \begin{align*} 
        \rE_{\pi_0}\big[\big(R(\hat \theta, x, a) - R(\theta_*, x, a) - b_2(x)\big)^2\big] \le O\big(\frac{1}{m} e^B\log(N_\cR(\epsilon_c) / \delta) + e^B \cdot \frac{n + m}{m}\cdot \epsilon_c\big).
    \end{align*}
    By Lemma \ref{lemma:confidence-reward}, we have with probability at least $1 - 2\delta$, there exists a mapping $b_1: \cX \to \RR$ such that \begin{align*} 
        \rE_{\pi_0}\big[\big(R(\hat \theta_0, x, a) - R(\theta_*, x, a) - b_1(x)\big)^2\big] \le O\big(\frac{1}{m} e^B\log(N_\cR(\epsilon_c) / \delta) + e^B (1+c_{m,n})\epsilon_c\big).
    \end{align*}
Hence, we can complete the proof by a union bound over the two events and Assumption \ref{assumption:data coverage:2}. 
\end{proof}

Now we are ready to prove Theorem \ref{thm:upper-bound-rlhf}.
\begin{proof}[Proof of Theorem \ref{thm:upper-bound-rlhf}]
    Let $b$ be the mapping defined in Lemma \ref{lemma:confidence-reward} for $\hat\theta$ 
    We have \begin{align*} 
            &\EE_{\pi_{\theta_*}^\eta} \bigg[R(\theta_*,x,a) - \frac{1}{\eta}\log\frac{\pi_{\theta_*}^{\eta}(a|x)}{\pi_0(a|x)}\bigg] - \EE_{\pi_{\hat\theta}^\eta} \bigg[R(\theta_*, x, a) - \frac{1}{\eta}\log \frac{\pi_{\hat\theta}^\eta(a|x)}{\pi_0(a|x)}\bigg] 
            \\&= \frac{1}{\eta} \EE_{\pi_{\theta_*}^\eta} \bigg[\log \frac{\pi_0(a|x) \cdot \exp\bigl(\eta R(\theta_*, x, a)\bigr)}{\pi_{\theta_*}^\eta(a|x)}\bigg] - \frac{1}{\eta} \EE_{\pi_{\hat \theta}^\eta} \bigg[\log \frac{\pi_0(a|x) \cdot \exp\bigl( \eta R(\theta_*, x, a)\bigr)}{\pi_{\hat \theta}^\eta(a|x)}\bigg]
            \\&= \frac{1}{\eta} \rE_{x \sim d_0} \big[\log Z_{\theta_*}^\eta(x)\big] - \frac{1}{\eta} \rE_{x \sim d_0} \big[\log Z_{\hat\theta}^\eta(x)\big] - \rE_{x \sim d_0} \bigg[\sum_{a \in \cA} \pi_{\hat \theta}^\eta(a|x) \cdot \big(R(\theta_*, x, a) - R(\hat \theta, x, a)\big)\bigg].
        \end{align*}
    For an arbitrary reward function $f: \cX \times \cA \to \RR$, let $\Delta(x, a) = f(x, a) - R(\theta_*, x, a)$. Consider the following first derivative of $J(f) = \log Z_f^\eta(x) - \eta \sum_{a \in \cA} \pi_{f}^\eta(a|x) \cdot \Delta(x, a)$, where $Z_f^\eta(x) = \sum_{a \in \cA} \pi_0(a|x) \cdot \exp(\eta \cdot f(x, a))$ and $\pi_f^\eta(a|x) \propto \pi_0(a|x) \cdot \exp(\eta \cdot f(x, a))$.

    Similar to the proof of Theorem \ref{thm:kl-bandits}, we still have
    \begin{align*} 
        & \frac{\partial}{\partial \Delta(x, a)} \bigg[\log Z_{f}^\eta(x) - \eta \sum_{a \in \cA} \pi_{f}^\eta(a|x) \cdot \Delta(x, a)\bigg] \\&= \frac{1}{Z_f^\eta(x)} \cdot \pi_0(a|x) \exp\bigl(\eta \cdot f(x, a)\bigr) \cdot \eta - \eta \cdot \pi_f^\eta(a|x) \\&\quad - \eta \cdot \Delta(x, a) \cdot \frac{\pi_0(a|x) \cdot \exp\bigl(\eta \cdot f(x, a)\bigr)}{Z_f^\eta(x)} \cdot \eta + \eta \cdot \Delta(x, a) \cdot \frac{\bigl[\pi_0(a|x) \cdot \exp\bigl(\eta \cdot f(x, a)\bigr)\bigr]^2}{[Z_f^\eta(x)]^2} \cdot \eta
        \\&\quad + \eta \sum_{a' \in \cA \backslash \{a\}} \frac{\pi_0(a'|x) \cdot \exp\bigl(\eta \cdot f(x, a')\bigr)}{Z_f^\eta(x)} \cdot \eta \cdot \Delta(x, a') \cdot \frac{\pi_0(a|x) \cdot \exp\bigl(\eta \cdot f(x, a)\bigr)}{Z_f^\eta(x)}
        \\&= -\eta^2 \pi_f^\eta(a|x) \Delta(x, a) + \eta^2 [\pi_f^\eta(a|x)]^2 \cdot \Delta(x, a) + \eta^2 \sum_{a' \in \cA \backslash \{a\}} \pi_f^\eta(a'|x) \pi_f^\eta(a|x) \Delta(x, a'). 
    \end{align*}
    Note that \begin{align*} 
        &J(R(\hat \theta, x, \cdot)) = \log Z_{\hat \theta}^\eta(x) - \eta \sum_{a \in \cA} \pi_{\hat \theta}^\eta(a|x) \cdot \bigl(R(\hat \theta, x, a) - R(\theta_*, x, a)\bigr)
        \\&= \log \sum_{a \in \cA} \pi_0(a|x) \cdot \exp(\eta (R(\hat\theta, x, a) - b(x))) - \eta \sum_{a \in \cA} \pi_{\hat \theta}^\eta(a|x) \cdot \bigl(R(\hat \theta, x, a) - R(\theta_*, x, a) - b(x)\bigr)
        \\&= J(R(\hat \theta, x, \cdot) - b(x)).
    \end{align*}
    Therefore, there exists $f(\cdot, \cdot) = \gamma [R(\hat\theta, \cdot, \cdot) - b(\cdot)] + (1 - \gamma) R(\theta_*, \cdot, \cdot)$ such that \quad  ($\gamma \in (0, 1)$)
    \begin{align*} 
        &\rE_{x \sim d_0}[J(R(\hat\theta, \cdot, \cdot)) - J(R(\theta_*, \cdot, \cdot))] \\&= \frac{1}{\eta}\rE_{x \sim d_0} \biggl[ -\eta^2 \sum_{a \in \cA}\pi_f^\eta(a|x) \cdot \gamma \cdot \bigl(R(\hat\theta, x, a) - R(\theta_*, x, a)  - b(x)\bigr)^2\biggr] \\&\quad + \frac{1}{\eta} \rE_{x \sim d_0} \bigg[\gamma \eta^2 \sum_{a_1 \in \cA} \sum_{a_2 \in \cA} \pi_f^\eta(a_1|x)\pi_f^\eta(a_2|x)\bigl(R(\hat\theta, x, a_1) - R(\theta_*, x, a_1) - b(x)\bigr)\\&\quad\bigl(R(\hat\theta, x, a_2) - R(\theta_*, x, a_2) - b(x)\bigr) \bigg]
        \\&\ge -\eta \cdot \EE_{\pi_f^\eta} \bigl[\bigl(R(\hat\theta, x, a) - R(\theta_*, x, a) - b(x)\bigr)^2\bigr]
    \end{align*}
    By Lemma \ref{lemma:cover-rlhf}, if $m \ge  32 \eta^2 D^2 e^B \cdot \log(2N_\cR(\epsilon_c)/\delta)$, for any $(x, a) \in \cX \times \cA$ such that $\pi_0(a|x) > 0$, it holds that \begin{align*} 
    \eta |R(\hat{\theta}_0,x,a)-R(\theta_*,x,a) - b_1(x)| \leq 1, \quad \eta |R(\hat{\theta},x,a)-R(\theta_*,x,a) - b_2(x)| \leq 1,
    \end{align*}
    which means that $$\frac{\pi_f^\eta}{\pi_{\hat\theta_0}^\eta}  \le e^4.$$
Let $\epsilon_c = \min\{\frac{\epsilon}{2(1+c_{m,n}^{-1})e^B}, \frac{1}{(1+c_{m,n})e^B\eta^2D^2}\}$. By Lemma \ref{lemma:confidence-reward}, under the condition of the theorem, with high probability the output policy $\pi_{\hat \theta}^\eta$ is $O(\epsilon)$ optimal. 
\end{proof}
    
\subsection{Proof of Corollary \ref{cor:upper-bound-rlhf-local}}

In this subsection, we also discuss our result under the local-coverage condition (Definition \ref{assumption:Local KL-ball Coverage}).

\begin{proof}[Proof of Corollary \ref{cor:upper-bound-rlhf-local}]
    The proof follows the same lines as Theorem \ref{thm:upper-bound-rlhf} by replacing the data coverage condition with the local-coverage condition. 
    It still holds that \begin{align*} 
        Q(\pi^*) - Q(\pi_{\hat \theta_0}^\eta) \le \eta \cdot \EE_{\pi_f^\eta} \bigl[\bigl(R(\hat\theta_0, x, a) - R(\theta_*, x, a) - b(x)\bigr)^2\bigr],
    \end{align*}
    where $\pi_f^\eta(a|x) \propto \pi_0(a|x) \cdot \exp(\eta \cdot f(x, a))$ and $f(\cdot, \cdot) = \gamma[R(\hat\theta_0, \cdot, \cdot) - b(\cdot)] + (1 - \gamma) R(\theta_*, \cdot, \cdot)$ for some $\gamma \in (0, 1)$. 
    Thus, We have $\KL(\pi_f^\eta(a|x) \| \pi_0) \le 2\eta B$, which further implies that \begin{align*} 
        Q(\pi^*) - Q(\pi_{\hat \theta}^\eta) \le \eta \cdot C_{\rho_\KL} \cdot O\big(\frac{1}{n} e^B\log(N_\cR(\epsilon_c) / \delta) + e^B(1+c_{m,n}^{-1}) \epsilon_c\big)
    \end{align*}
    by Lemma \ref{lemma:confidence-reward}.
    Then we can conclude by substituting the value of $m$ into the suboptimality gap. 
\end{proof}

\section{Auxiliary Lemmas}

\begin{lemma}[Freedman's Inequality] \label{lemma:freedman}
    Let $M, v > 0$ be fixed constants. Let $\{X_i\}_{i = 1}^n$ be a stochastic process, $\{\cG_{i}\}_i$ be a sequence of $\sigma$-fields, and $X_i$ be $\cG_i$-measurable, while almost surely \begin{align*} 
        \EE[X_i|\cG_i] = 0, |X_i| \le M, \text{ and } \sum_{i = 1}^n \EE[X_i^2|\cG_{i-1}] \le v.
    \end{align*}
    Then for any $\delta > 0$, with probability at least $1 - \delta$, it holds that \begin{align*} 
        \sum_{i = 1}^n X_i \le \sqrt{2v \log(1/\delta)} + \frac{2}{3} M \log(1/\delta).
    \end{align*}
\end{lemma}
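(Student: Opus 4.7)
The plan is to apply the standard exponential-supermartingale (Chernoff) method to derive a Bernstein-type tail bound, which then inverts to give the stated high-probability inequality. I would first interpret the hypothesis as saying that $\{X_i\}$ forms a martingale difference sequence with respect to the filtration $\{\cG_i\}$, so $\EE[X_i \mid \cG_{i-1}] = 0$ while $|X_i| \le M$ and $V_n := \sum_{i=1}^n \EE[X_i^2 \mid \cG_{i-1}] \le v$ almost surely.

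The first step is to construct, for each $\lambda \in (0, 3/M)$, the process
\begin{align*}
Z_n(\lambda) = \exp\bigl(\lambda S_n - \lambda^2 \phi(\lambda)\, V_n\bigr), \quad S_n := \sum_{i=1}^n X_i, \quad \phi(\lambda) := \frac{e^{\lambda M} - 1 - \lambda M}{(\lambda M)^2}.
\end{align*}
The elementary inequality $e^{y} \le 1 + y + \phi(\lambda)\, y^2$ for $y \le \lambda M$, applied pointwise with $y = \lambda X_i$ and combined with $\EE[X_i \mid \cG_{i-1}] = 0$ and $|X_i| \le M$, yields $\EE[e^{\lambda X_i} \mid \cG_{i-1}] \le \exp(\lambda^2 \phi(\lambda)\, \EE[X_i^2 \mid \cG_{i-1}])$. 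Iterating this bound over $i$ shows that $Z_n(\lambda)$ is a nonnegative supermartingale with $\EE[Z_n(\lambda)] \le 1$.

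The second step is Markov's inequality. Since $V_n \le v$ almost surely, the event $\{S_n \ge t\}$ is contained in $\{Z_n(\lambda) \ge \exp(\lambda t - \lambda^2 \phi(\lambda)\, v)\}$, yielding $\PP(S_n \ge t) \le \exp(-\lambda t + \lambda^2 \phi(\lambda)\, v)$. I would then use the classical estimate $\phi(\lambda) \le (2(1 - \lambda M/3))^{-1}$ valid for $\lambda M < 3$, and choose the (approximate) optimizer $\lambda = t/(v + Mt/3)$, which lies in the admissible range. This produces the Bernstein-type tail $\PP(S_n \ge t) \le \exp(-t^2/(2v + 2Mt/3))$. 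Finally, inverting this by setting the right-hand side equal to $\delta$ reduces to the quadratic inequality $t^2 \ge (2v + \tfrac{2M}{3}t)\log(1/\delta)$; a direct substitution then verifies that $t = \sqrt{2v\log(1/\delta)} + \tfrac{2M}{3}\log(1/\delta)$ suffices, giving the claim.

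The main obstacle is purely bookkeeping: verifying the elementary inequality underlying the definition of $Z_n(\lambda)$, and carefully tracking the constants $\sqrt{2}$ and $2/3$ through the bound on $\phi(\lambda)$ and the specific optimizing choice of $\lambda$. Since Freedman's inequality is classical, no new technical machinery is required beyond this standard exponential-supermartingale argument.
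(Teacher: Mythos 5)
Your proposal is correct, and the constants work out: the supermartingale $Z_n(\lambda)$ with the bound $\phi(\lambda)\le\bigl(2(1-\lambda M/3)\bigr)^{-1}$ and the choice $\lambda=t/(v+Mt/3)$ gives $\PP(S_n\ge t)\le\exp\bigl(-t^2/(2v+\tfrac{2M}{3}t)\bigr)$, and substituting $t=\sqrt{2v\log(1/\delta)}+\tfrac{2M}{3}\log(1/\delta)$ indeed satisfies the inversion inequality, matching the stated $\tfrac{2}{3}M\log(1/\delta)$ term exactly. The paper states this lemma as a classical auxiliary result without proof, so there is no in-paper argument to compare against; your exponential-supermartingale derivation is precisely the standard proof of Freedman's inequality, and your reading of the hypothesis as $\EE[X_i\mid\cG_{i-1}]=0$ (rather than the paper's literal $\EE[X_i\mid\cG_i]=0$, which together with $\cG_i$-measurability would force $X_i=0$) is the correct intended interpretation.
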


\begin{lemma} \label{lemma:sqrt-trick}
    Suppose $a, b \ge 0$. If $x^2 \le a + b \cdot x$, then $x^2 \le 2b^2 + 2a$. \notag
\end{lemma}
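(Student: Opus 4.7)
} This is the standard ``square-root trick'' used throughout the paper's concentration arguments, and the plan is to eliminate the linear term $b\cdot x$ on the right-hand side by a single application of AM-GM. Concretely, I would invoke the identity $(b-x)^2\ge 0$, which rearranges to $bx\le \tfrac{x^2}{2}+\tfrac{b^2}{2}$. This inequality holds for arbitrary real $b,x$, so I do not need to worry about signs (in the applications in Section~\ref{sec:kl-cb} and Section~\ref{sec:rlhf}, $x$ plays the role of a square-root of a variance-like quantity, hence is non-negative, but the proof does not need this).

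The main step is then a one-line substitution. Plugging $bx\le \tfrac{x^2}{2}+\tfrac{b^2}{2}$ into the hypothesis $x^2\le a+bx$ gives $x^2\le a+\tfrac{x^2}{2}+\tfrac{b^2}{2}$; moving $\tfrac{x^2}{2}$ to the left and multiplying by $2$ yields $x^2\le 2a+b^2$. Since $b^2\le 2b^2$ when $b\ge 0$, the claimed bound $x^2\le 2a+2b^2$ follows immediately. In fact this argument produces a slightly sharper constant, $x^2\le 2a+b^2$, which is occasionally convenient.

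There is essentially no obstacle here: the only thing to double-check is that the AM-GM step does not require a sign assumption on $x$, which is confirmed by writing it as $bx\le |bx|\le \tfrac{x^2+b^2}{2}$. An alternative route would solve the quadratic $x^2-bx-a\le 0$ to get $x\le \tfrac{b+\sqrt{b^2+4a}}{2}\le b+\sqrt{a}$ (via subadditivity $\sqrt{b^2+4a}\le b+2\sqrt{a}$), and then square and apply AM-GM once more; this yields the same bound but through an extra step and a case split on the sign of $x$, so the direct AM-GM route above is preferable.
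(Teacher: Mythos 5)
Your proof is correct, and it takes a slightly different (and arguably cleaner) route than the paper's. The paper solves the quadratic $q(x) = x^2 - b x - a \le 0$ explicitly, bounds $x$ by the larger root $(b+\sqrt{b^2+4a})/2$, and then squares and uses $(u+v)^2 \le 2u^2 + 2v^2$ to reach $x^2 \le b^2 + 2a \le 2b^2 + 2a$; strictly speaking the squaring step there needs $x \ge 0$ or the observation that $|x|$ is also bounded by the larger root, which is harmless in the applications but is a small gap your argument avoids. Your AM-GM absorption $bx \le |b||x| \le \tfrac{x^2 + b^2}{2}$ sidesteps the root computation entirely, works for arbitrary real $x$, and yields the marginally sharper bound $x^2 \le 2a + b^2$; both proofs are one-liners and deliver the constant the paper actually uses, so the difference is stylistic, with yours being a touch more robust and self-contained.
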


\begin{proof}
    By solving the root of quadratic polynomial $q(x):= x^2 - b\cdot x - a$, we obtain $\max\{x_1, x_2\} = (b + \sqrt{b^2 + 4a})/2$. Hence, we have $x \le (b + \sqrt{b^2 + 4a})/2$ provided that $q(x) \le 0$. Then we further have \begin{align} 
        x^2 \le \frac{1}{4} \left(b + \sqrt{b^2 + 4a}\right)^2 \le \frac{1}{4} \cdot 2 \left(b^2 + b^2 + 4a\right) \le 2b^2 + 2a. 
    \end{align}
\end{proof}

\begin{lemma}[Pinsker's Inequality] \label{lemma:pinsker}
    If $\PP_1$, $\PP_2$ are two probability measures on a common measurable space $(\Omega, \cF)$, then it holds that \begin{align*} 
        \delta(\PP_1, \PP_2) \le \sqrt{\frac{1}{2} \KL(\PP_1 \| \PP_2)},
    \end{align*}
    where $\delta(\cdot, \cdot)$ is the total variation distance and $\KL(\cdot \| \cdot)$ is the Kullback-Leibler divergence.
\end{lemma}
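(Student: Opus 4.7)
The plan is to reduce the general case to a Bernoulli-versus-Bernoulli comparison via the data-processing inequality, and then establish Pinsker's inequality in that simple two-point setting by a direct convexity argument.

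First I would introduce the event $A = \{\omega \in \Omega : \mathrm{d}\PP_1/\mathrm{d}\PP_2(\omega) \ge 1\}$ (with respect to any common dominating measure, e.g.\ $(\PP_1+\PP_2)/2$). By the variational characterization of total variation, $\delta(\PP_1,\PP_2) = \sup_{B \in \cF} |\PP_1(B) - \PP_2(B)| = \PP_1(A) - \PP_2(A)$, since on $A$ the density of $\PP_1$ dominates and on $A^c$ the reverse holds. Hence with $p := \PP_1(A)$ and $q := \PP_2(A)$ we obtain $\delta(\PP_1,\PP_2) = p - q$.

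Next I would apply the data-processing inequality for KL divergence to the binary partition $\{A, A^c\}$: if $f: \Omega \to \{0,1\}$ denotes the indicator of $A$, then the pushforward distributions are $\mathrm{Bern}(p)$ and $\mathrm{Bern}(q)$, and $\KL(\PP_1 \| \PP_2) \ge \KL(\mathrm{Bern}(p) \| \mathrm{Bern}(q))$. Thus it suffices to prove the scalar inequality
\begin{align*}
    g(p,q) \;:=\; p\log\frac{p}{q} + (1-p)\log\frac{1-p}{1-q} - 2(p-q)^2 \;\ge\; 0 \qquad \forall\, p,q \in [0,1].
\end{align*}

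For this, fix $q \in (0,1)$ and view $g$ as a function of $p$. Direct differentiation gives $\partial_p g(p,q) = \log\frac{p(1-q)}{q(1-p)} - 4(p-q)$, which vanishes at $p=q$, and
\begin{align*}
    \partial_p^2 g(p,q) \;=\; \frac{1}{p} + \frac{1}{1-p} - 4 \;=\; \frac{1 - 4p(1-p)}{p(1-p)} \;\ge\; 0,
\end{align*}
since $4p(1-p) \le 1$ by AM-GM. Hence $g(\cdot,q)$ is convex in $p$ with a critical point at $p=q$ where $g(q,q)=0$, so $g(p,q) \ge 0$ for all $p$, handling the boundary cases $q \in \{0,1\}$ by continuity (or by noting $p=q$ is forced whenever $\KL$ is finite). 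Combining this with the data-processing step yields $\KL(\PP_1\|\PP_2) \ge 2(p-q)^2 = 2\,\delta(\PP_1,\PP_2)^2$, which rearranges to the claim.

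The main obstacle, and really the only nontrivial step, is the Bernoulli inequality $\KL(\mathrm{Bern}(p)\|\mathrm{Bern}(q)) \ge 2(p-q)^2$; everything else is a routine reduction. The convexity computation above is the cleanest route, though one could alternatively argue via the integral representation $\KL(\mathrm{Bern}(p)\|\mathrm{Bern}(q)) = \int_q^p \frac{x-q}{x(1-x)}\,\mathrm{d}x$ and bound the integrand below by $4(x-q)$ using $x(1-x) \le 1/4$.
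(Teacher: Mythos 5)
Your proof is correct. The paper states Pinsker's inequality as a classical auxiliary lemma and gives no proof of its own, so there is nothing to compare against; your argument is the standard and complete one. The reduction chain is sound: the set $A = \{\mathrm{d}\PP_1/\mathrm{d}\PP_2 \ge 1\}$ attains the supremum in the variational form of the total variation distance, the data-processing inequality reduces the claim to $\KL(\mathrm{Bern}(p)\,\|\,\mathrm{Bern}(q)) \ge 2(p-q)^2$, and your convexity computation ($\partial_p^2 g = \frac{1-4p(1-p)}{p(1-p)} \ge 0$ with a stationary point at $p = q$ where $g$ vanishes) establishes that scalar inequality, with the degenerate cases $q \in \{0,1\}$ handled as you note since the divergence is then infinite unless $p = q$. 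This yields $\KL(\PP_1\|\PP_2) \ge 2\,\delta(\PP_1,\PP_2)^2$, which is exactly the stated bound; the only ingredient you take for granted is the data-processing inequality itself, which is acceptable for a lemma at this level.
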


\begin{lemma}[Lemma A.4, \citealt{foster2021statistical}] \label{lemma:concentration}
    For any sequence of real-valued random variables $(X_t)_{t\le T}$ adapted to a filtration $(\cF_t)_{t\le T}$, it holds that with probability at least $1 - \delta$, for all $T' \le T$, \begin{align*}
        \sum_{t = 1}^{T'} X_t \leq \sum_{t = 1}^{T'} \log \bigl(\rE_{t - 1}[e^{X_t}]\bigr) + \log(1/\delta).
    \end{align*}
\end{lemma}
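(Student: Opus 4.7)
The plan is to construct the canonical exponential process and apply Ville's maximal inequality for non-negative supermartingales. Define, for $0 \le T' \le T$,
\[
M_{T'} \;=\; \prod_{t=1}^{T'} \frac{e^{X_t}}{\rE_{t-1}[e^{X_t}]},
\]
with the convention $M_0 = 1$, and with the understanding that on any $\omega$ where some conditional moment generating function $\rE_{t-1}[e^{X_t}]$ is infinite the right-hand side of the lemma is vacuous (so we may work on the event where all of them are finite; if desired, one can truncate $X_t$ and pass to the limit, but this is a routine measure-theoretic detail).

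First I would verify that $(M_{T'})_{T' \le T}$ is a non-negative martingale with respect to $(\cF_{T'})$ with $\rE[M_{T'}] = 1$. Non-negativity is immediate, and the martingale property follows from the tower rule: since $M_{T'-1}$ and $\rE_{t-1}[e^{X_{T'}}]$ are $\cF_{T'-1}$-measurable,
\[
\rE[M_{T'} \mid \cF_{T'-1}] \;=\; M_{T'-1}\cdot \frac{\rE_{T'-1}[e^{X_{T'}}]}{\rE_{T'-1}[e^{X_{T'}}]} \;=\; M_{T'-1}.
\]
Hence $\rE[M_{T'}] = \rE[M_0] = 1$.

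Next I would apply Ville's inequality (the maximal inequality for non-negative supermartingales): for any $\lambda > 0$,
\[
\PP\Bigl(\sup_{T' \le T} M_{T'} \ge \lambda\Bigr) \;\le\; \frac{\rE[M_0]}{\lambda} \;=\; \frac{1}{\lambda}.
\]
Taking $\lambda = 1/\delta$ and taking logarithms on the complementary event $\{\sup_{T'} M_{T'} < 1/\delta\}$ yields, simultaneously for all $T' \le T$,
\[
\sum_{t=1}^{T'} X_t \;-\; \sum_{t=1}^{T'} \log\bigl(\rE_{t-1}[e^{X_t}]\bigr) \;=\; \log M_{T'} \;<\; \log(1/\delta),
\]
which is exactly the stated bound with probability at least $1-\delta$.

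The only mildly delicate point is the integrability/measurability of the ratios defining $M_{T'}$; the main conceptual ingredient is simply the observation that the log-normalized exponential process is a non-negative martingale, after which Ville's inequality immediately gives a uniform-in-$T'$ (i.e., anytime-valid) tail bound. No tighter control on the $X_t$ (boundedness, subgaussianity, etc.) is required, because the conditional cumulant generating functions have been absorbed into the right-hand side.
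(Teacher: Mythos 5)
Your proof is correct: the normalized exponential process $M_{T'}$ is a non-negative martingale with unit mean, and Ville's maximal inequality gives exactly the stated anytime-valid bound; the integrability caveat you flag is handled appropriately since the claim is vacuous when a conditional moment generating function is infinite. The paper simply cites this lemma from \citet{foster2021statistical}, and your argument is essentially the same standard exponential-martingale proof used there, so there is nothing further to add.
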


\begin{lemma}[Solution of KL-regularized Optimization (Proposition 7.16 of \citealt{zhang2023mathematical})] \label{lem:kl_solu} 
    For any fixed $x \in \cX$ and reward function $R$, we have
    \begin{equation*}
    {
    \begin{aligned}
        &\max_{\pi} \E_{a \sim \pi(\cdot|x)} \Big[R(x,a) - \eta^{-1} \KL\big(\pi(\cdot|x)\|\pi_{0}(\cdot|x \big)\Big] \\
    &= \frac{1}{\eta} \cdot \log \E_{a \sim \pi_{0}(\cdot|x)} \exp \big(\eta R(x,a)\big), 
    \end{aligned}}
    \end{equation*}
    where $Z_R(x)$ is the normalization constant and the minimizer of the loss functional is 
    \begin{align*}
        \pi_R^\eta(a|x) = \frac{1}{Z_R(x)} \pi_{0}(a|x)\exp\Big( \eta R(x,a)\Big).
    \end{align*} 
\end{lemma}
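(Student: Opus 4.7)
The plan is to reduce the KL-regularized maximization problem to a KL-divergence minimization problem against the proposed optimizer, so that nonnegativity of KL immediately identifies both the maximum value and the maximizer. This is the standard ``Gibbs variational principle'' (a.k.a.\ Donsker--Varadhan) calculation and requires no additional structural assumptions on $R$, $\pi_0$, or $\cA$ beyond $Z_R(x) := \E_{a \sim \pi_0(\cdot|x)}\exp(\eta R(x,a))$ being finite (which holds under the boundedness $|R|\le B$ used throughout the paper).

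Concretely, first I would define the candidate optimizer $\pi_R^\eta(a|x) := Z_R(x)^{-1}\pi_0(a|x)\exp(\eta R(x,a))$ and invert this relation to write $\eta R(x,a) = \log\bigl(\pi_R^\eta(a|x)/\pi_0(a|x)\bigr) + \log Z_R(x)$ pointwise in $a$ (on the support of $\pi_0(\cdot|x)$, which contains the support of any $\pi$ for which the KL term is finite). Substituting this identity into the objective gives, for any admissible $\pi$,
\begin{align*}
\E_{a\sim\pi(\cdot|x)}\!\Big[R(x,a) - \eta^{-1}\log\tfrac{\pi(a|x)}{\pi_0(a|x)}\Big]
&= \eta^{-1}\E_{a\sim\pi(\cdot|x)}\!\Big[\log\tfrac{\pi_R^\eta(a|x)}{\pi_0(a|x)} + \log Z_R(x) - \log\tfrac{\pi(a|x)}{\pi_0(a|x)}\Big] \\
&= \eta^{-1}\log Z_R(x) - \eta^{-1}\KL\bigl(\pi(\cdot|x)\,\|\,\pi_R^\eta(\cdot|x)\bigr),
\end{align*}
where the second line recognizes the leftover log-ratio as $-\log\bigl(\pi(a|x)/\pi_R^\eta(a|x)\bigr)$ after cancellation of $\log\pi_0(a|x)$ and takes expectation under $\pi$.

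From this identity the conclusion is immediate: since $\KL(\pi(\cdot|x)\|\pi_R^\eta(\cdot|x))\ge 0$ with equality if and only if $\pi(\cdot|x)=\pi_R^\eta(\cdot|x)$, the objective is uniformly bounded above by $\eta^{-1}\log Z_R(x) = \eta^{-1}\log\E_{a\sim\pi_0(\cdot|x)}\exp(\eta R(x,a))$, and this bound is attained precisely by $\pi=\pi_R^\eta$. Thus both the stated value of the maximum and the closed-form maximizer follow simultaneously.

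There is essentially no obstacle here; the only technical care needed is the observation that any $\pi$ whose KL to $\pi_0$ is finite must be absolutely continuous with respect to $\pi_0$, so the algebraic manipulation of log-ratios is legitimate (on the support of $\pi_0(\cdot|x)$). For $\pi$ not absolutely continuous with respect to $\pi_0$ the original objective equals $-\infty$ and is trivially dominated. The argument goes through verbatim whether $\cA$ is discrete (sums) or general (integrals against a reference measure), which matches how the paper uses the lemma.
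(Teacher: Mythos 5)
Your proof is correct: the ``complete the KL'' identity $\E_{a\sim\pi}\bigl[R(x,a) - \eta^{-1}\log\tfrac{\pi(a|x)}{\pi_0(a|x)}\bigr] = \eta^{-1}\log Z_R(x) - \eta^{-1}\KL\bigl(\pi(\cdot|x)\,\|\,\pi_R^\eta(\cdot|x)\bigr)$ together with nonnegativity (and the equality case) of KL divergence is exactly the standard Gibbs variational argument, and your handling of absolute continuity is the right technical caveat. The paper itself gives no proof and simply cites Proposition 7.16 of \citet{zhang2023mathematical}, whose derivation is this same argument, so your proposal matches the intended route.
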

\end{document}